\DeclareMathOperator*{\argmin}{argmin} % no space, limits on side in displays
\DeclareMathOperator*{\argmax}{argmax} % no space, limits on side in displays
\newtheorem{definition}{Definition}
\newtheorem*{definition*}{Definition}
\newtheorem{lemma}{Lemma}
\newtheorem*{lemma*}{Lemma}
\newtheorem{remark}{Remark}
\newtheorem*{remark*}{Remark}
\newtheorem*{corollary*}{Corollary}
\newtheorem{theorem}{Theorem}
\newtheorem*{theorem*}{Theorem}
\newtheorem{prop}{Proposition}
\newtheorem*{prop*}{Proposition}
\newtheorem{assumption}{Assumption}
\DeclareMathOperator{\cA}{\mathcal{A}}
\DeclareMathOperator{\cS}{\mathcal{S}}
\DeclareMathOperator{\cN}{\mathcal{N}}
\DeclareMathOperator{\cF}{\mathcal{F}}
\DeclareMathOperator{\cI}{\mathcal{I}}
\DeclareMathOperator{\bx}{\mathbf{x}}
\DeclareMathOperator{\bg}{\mathbf{g}}
\DeclareMathOperator{\bW}{\mathbf{W}}
\DeclareMathOperator{\bH}{\mathbf{H}}
\DeclareMathOperator{\bw}{\mathbf{w}}
\DeclareMathOperator{\b0}{\mathbf{0}}
\DeclareMathOperator{\bI}{\mathbf{I}}
\DeclareMathOperator{\bZ}{\mathbf{Z}}
\DeclareMathOperator{\bh}{\mathbf{h}}
\DeclareMathOperator{\bb}{\mathbf{b}}
\DeclareMathOperator{\bG}{\mathbf{G}}
\DeclareMathOperator{\bU}{\mathbf{U}}
\DeclareMathOperator{\btheta}{\bm{\theta}}
\newcommand{\Eeleven}{E_{\ref{lemma:error between widetilde_theta_dagger and theta_0},5}t^{\frac{5}{3}}m^{-\frac{2}{3}}(\log{t})^{\frac{2}{3}}(\log{m})^{\frac{1}{2}}\lambda^{-\frac{7}{3}}L^{\frac{7}{2}} + E^J_{\ref{lemma:error between theta_tilde_dagger and theta},2} E_{\ref{lemma:error between theta_tilde_dagger and theta},3}\sqrt{\frac{t\log{t}}{m\lambda^2}}}
\newcommand{\hungyh}[1]{\textcolor{black}{#1}}
\title{Reward-Biased Maximum Likelihood Estimation for Neural Contextual Bandits}
\author{%
  Yu-Heng Hung \\
  Department of Computer Science\\
  National Yang Ming Chiao Tung University, Hsinchu, Taiwan \\
  \texttt{hungyh.cs08@nycu.edu.tw} \\
  \And
  Ping-Chun Hsieh \\
  Department of Computer Science\\
  National Yang Ming Chiao Tung University, Hsinchu, Taiwan \\
  \texttt{pinghsieh@nctu.edu.tw} \\
}
\begin{document}

\maketitle
\begin{abstract}
Reward-biased maximum likelihood estimation (RBMLE) is a classic principle in the adaptive control literature for tackling explore-exploit trade-offs. This paper studies the stochastic contextual bandit problem with general bounded reward functions and proposes NeuralRBMLE, which adapts the RBMLE principle by adding a bias term to the log-likelihood to enforce exploration. NeuralRBMLE leverages the representation power of neural networks and directly encodes exploratory behavior in the parameter space, without constructing confidence intervals of the estimated rewards. We propose two variants of NeuralRBMLE algorithms: The first variant directly obtains the RBMLE estimator by gradient ascent, and the second variant simplifies RBMLE to a simple index policy through an approximation. We show that both algorithms achieve $\widetilde{\mathcal{O}}(\sqrt{T})$ regret. Through extensive experiments, we demonstrate that the NeuralRBMLE algorithms achieve comparable or better empirical regrets than the state-of-the-art methods on real-world datasets with non-linear reward functions.
\end{abstract}

\section{Introduction}
\label{section:intro}

% 1st paragraph: RBMLE
Efficient exploration has been a fundamental challenge in sequential decision-making in unknown environments.
%To learn an optimal policy, 
As a classic principle originally proposed in the stochastic adaptive control literature for solving unknown Markov decision processes (MDPs), \textit{Reward-Biased Maximum Likelihood Estimation (RBMLE)} learns an optimal policy by alternating between estimating the unknown model parameters in an ``exploratory'' manner and applying the optimal control law based on the estimated parameters \citep{kumar1982new,borkar1990kumar,campi1998adaptive,prandini2000adaptive}. 
%serves as one of the earliest approaches that resolves the identification issue of Maximum Likelihood Estimation
%addresses exploration for unknown Markov decision processes (MDPs) through decoupling 
%a decoupling procedure in sequential decision making problems. 
Specifically, to resolve the inherent issue of insufficient exploration of maximum likelihood estimation (MLE), RBMLE enforces exploration by incorporating into the likelihood function a bias term in favor of those model parameters that correspond to higher long-term average rewards.
This generic exploration scheme has been shown to asymptotically attain the optimal long-term average reward \citep{kumar1982new}. 
%Specifically, RBMLE is a general-purpose principle that enforces exploration by (i) and then (ii) applying the optimal control law based on this estimator.

\hungyh{Recently, the RBMLE principle has been adapted to
optimize the regrets in stochastic bandit problems, including the classic non-contextual multi-armed bandit problems \cite{liu2020exploration}, the contextual bandit problems with generalized linear reward functions \cite{hung2020reward} and model-based reinforcement learning for finite MDPs \cite{mete2021reward}.}
Moreover, RBMLE has been shown to achieve order-optimal finite-time regret bounds and competitive empirical regret performance in the above settings.
Despite the recent progress, the existing RBMLE bandit algorithms, as well as their regret guarantees, rely heavily on the structural assumptions, such as the absence of contextual information in \cite{liu2020exploration} and linear realizability in \cite{hung2020reward}, and hence are not readily applicable to various real-world bandit applications with more complex reward structures, such as recommender systems and clinical trials.
%However, the classic bandits do not allow the incorporation of ``contex'', and the generalized linear bandits require strict assumptions on the reward function that is not suitable for some real-world datasets such as MNIST \cite{lecun2010mnist} , mushroom and shuttle \cite{dua2017uci}. 

%%% 2nd paragraph: RBMLE for neural contextual bandit
Motivated by the competitive performance of the RBMLE principle in the bandit problems mentioned above, this paper takes one step further to study RBMLE in contextual bandits with general reward functions.
To unleash the full potential of RBMLE in contextual bandits, we leverage the representation power of neural networks to approximate the unknown reward function, as recently adopted by \cite{zhou2020neural,zhang2020neural}.
{Specifically, neural tangent kernel (NTK) \citep{jacot2018neural} is a technique to describe that a wide neural network is approximately linear in its parameters and has been adopted in several neural bandit algorithms \citep{zhou2020neural,zhang2020neural}.}
\hungyh{
While it appears natural to leverage the NTK technique in any bandit algorithm, RBMLE presents its own salient challenges to be tackled:
\begin{itemize}[leftmargin=*]
    \item \hungyh{\textit{How to handle the model misspecification issue?} -- RBMLE is a model-based approach that leverages the machinery of MLE. The use of MLE requires the knowledge of the likelihood function of the stochastic rewards, whose distributions are commonly unknown in practical bandit problems. Thus, RBMLE needs to rely on a \textit{surrogate} likelihood function that may not match the true likelihood. This naturally leads to a \textit{model misspecification} issue, which could have a significant effect on the regret.}
    \item \textit{How to address the compatibility of RBMLE with NTK?} -- Unlike the existing neural bandit algorithms (e.g., NeuralUCB) that leverage the least square estimation to learn a proper model parameter, RBMLE uses maximum likelihood estimation with a reward bias to obtain the arm-specific estimators, which do not necessarily satisfy the conditions required by the NTK regime. On top of that, the choice of the reward bias reflects an inherent \textit{three-way trade-off} in RBMLE: exploration, exploitation, and ability of NTK approximation.
    \item \textit{What is the effect of inexact maximizers of RBMLE on regret?} -- Different from RBMLE for linear bandits \cite{hung2020reward}, which relies on the exact maximizer of RBMLE, RBMLE for neural bandits uses gradient descent to approximate the true RBMLE estimator during training, and the resulting approximation error should be considered and precisely characterized in the regret analysis.
\end{itemize}
Moreover, the above three issues are tightly coupled with each other, and this renders the regret analysis of RBMLE for neural bandits even more challenging.
Despite the above challenges, this paper addresses all of the above issues rigorously by extending the RBMLE principle to the neural reward function approximation and proposing the first RBMLE bandit algorithm with regret guarantees for the contextual bandit problems without the linear realizability assumption.
}
%that incorporates the representation power of neural networks to estimate the unknown reward function and achieves efficient exploration for the contextual bandit problems beyond the linear realizability assumption. 
%%% end of 2nd paragraph
%%% 3rd paragraph: Main contributions
\hungyh{We highlight the main contributions as follows:
\begin{itemize}[leftmargin=*]
    %\item We perform RBMLE principle on updating parameter of neural networks and leverage exploration in parameter space.
    \item We propose NeuralRBMLE, which extends the RBMLE principle to the neural contextual bandit problem. We first present a prototypic NeuralRBMLE algorithm that enjoys an index form by incorporating a surrogate likelihood function and a proper reward-bias term. We then propose two practical approaches, namely NeuralRBMLE-GA and NeuralRBMLE-PC, to substantiate the prototypic NeuralRBMLE algorithm.
    \item We formally establish the regret bounds for the two practical NeuralRBMLE algorithms in the NTK regime. Through regret analysis, we provide an affirmative answer to the compatibility of RBMLE with NTK. Moreover, we fully resolve the model misspecification issue and validate the flexibility in using any exponential family distribution as the surrogate likelihood function for NeuralRBMLE-GA, thereby opening up a whole new family of neural bandit algorithms. This serves as an additional degree of algorithmic freedom in achieving low empirical regret. For NeuralRBMLE-PC, we characterize the interplay between the reward-bias term, the regret, and the condition of the parameters of the neural network.
    \item We evaluate NeuralRBMLE and other benchmark algorithms through extensive simulations on various benchmark real-world datasets. The simulation results show that NeuralRBMLE achieves comparable or better empirical regret performance than the benchmark methods. RBMLE also exhibits better robustness with a smaller standard deviation of final regret compared to other benchmark methods. Moreover, {unlike NeuralUCB and NeuralTS, NeuralRBMLE-GA does not require computing the inverse of a high-dimensional matrix, which is computationally expensive.}
\end{itemize}
}

\noindent\textbf{Notations.} Throughout this paper, for any positive integer $K$, we use $[K]$ as a shorthand for the set $\{1,\cdots,K\}$.
We use $\lVert \cdot\rVert_2$ to denote the $L_2$-norm of a vector.
We use boldface fonts for vectors and matrices throughout the paper.
Moreover, we use $\b0$ and $\bI$ to denote the zero matrices and the identity matrices, respectively.
\section{Problem Formulation}
\label{section:problem}

In this section, we formally describe the neural contextual bandit problem considered in this paper.
%,  and neural networks and the neural tangent kernels required by the regret analysis.
\subsection{Contextual Bandits with General Rewards}
%\noindent\textbf{Contextual Bandits with General Reward Functions}
We consider the stochastic $K$-armed contextual bandit problem, where the total number of rounds $T$ is known\footnote{The assumption of a known horizon is mild as one could apply the standard \textit{doubling trick} to convert a horizon-dependent algorithm to an anytime one \citep{lattimore2020bandit}.}. 
At each decision time $t\in [T]$, the $K$ context vectors $\left\{ \bx_{t,a}\in\mathbb{R}^{d}\; |\; a \in [K]\right\}$, which capture the feature information about the arms, are revealed to the learner. 
Without loss of generality, we assume that $\lVert \bx_{t,a}\rVert_2 \leq 1$, for all $t\in [T]$ and for all $a\in [K]$.
% Note that we do not impose any statistical assumptions on the contexts, which could possibly be generated by an adversary.
Given the contexts, the learner selects an arm $a_t\in [K]$ and obtains the corresponding random reward $r_{t,a_t}$.
For ease of notation, we define 
(i) $\bx_t := \bx_{t, a_t}$, 
(ii) $r_t := r_{t,a_t}$,
(iii) $\mathcal{F}_t := (\bx_1,a_1,r_1,\dots,\bx_t)$ as the observation history up to the beginning of time $t$, 
(iv) $a_t^{*}:=\argmax_{a\in [K]}E[r_{t,a}\rvert \mathcal{F}_t]$, and 
(v) $r^*_t := r_{t,a^*_t}$.
The goal of the learner is to minimize the \emph{pseudo regret} as
\begin{align}
    \mathcal{R}(T) := \mathbb{E}\bigg[ \sum^T_{t=1} (r^*_t - r_{t})\bigg] \label{def:regret}
\end{align}
In the neural contextual bandit problem, the random reward at each time $t$ takes the form of $r_t = h(\bx_t) + \epsilon_t$, where $h:\mathbb{R}^d\rightarrow [0,1]$ is an unknown reward function and $\epsilon_t$ is a $\nu$-sub-Gaussian noise conditionally independent of all the other rewards in the past given the context $\bx_t$ and satisfying $\mathbb{E}[\epsilon_t|\bx_t] = 0$, and the reward function $h(\cdot)$ is approximated by a neural network through training.
Compared to the generalized linear bandit model, here we assume no special structure (e.g., linearity or convexity) on the reward function, except that the reward signal is bounded in $[0,1]$.
%satisfying that $0 \leq h(x) \leq 1$ for all $x$ such that
%\begin{equation}
%    r_t = h(x_t) + \epsilon_t \label{def:reward}
%\end{equation}
%where $\epsilon$ is a noise with $\mathbb{E}[\epsilon_t|x_t] = 0$.

\subsection{Neural Function Approximation for Rewards}
\label{section:nn}
In this paper, we leverage a neural network to approximate the reward function $h(\cdot)$. 
Let $L \geq 2$ be the depth of this neural network, $\sigma(\cdot) = \max\{\cdot,0\}$ be the Rectified Linear Unit (ReLU) activation function, and $m_l$ be the width of the $l$-th hidden layer, for $l\in [L-1]$. 
We also let $m_0=d$ and $m_{L}=1$.
Let $\bW_l \in \mathbb{R}^{m_{l}\times m_{l-1}}$ denote the weight matrix of the $l$-th layer of the neural network, for $l\in [L]$.
For ease of exposition, we focus on the case where $m_{l}=m$, for all $l\in [L-1]$.
%and $\bW_L \in \mathbb{R}^{1\times m}$. 
%$\bW_1 \in \mathbb{R}^{m\times d}$, $\bW_l \in \mathbb{R}^{m\times m}$, $l = 2,\dots,L-1$, and $\bW_L \in \mathbb{R}^{1\times m}$ denote the weight matrices of the neural network.
For ease of notation, we define $\btheta := [\text{vec}(\bW_1)^\top,\dots, \text{vec}(\bW_L)^\top] \in {\mathbb{R}}^{p}$, where $p= m+md+m^2(L-1)$ denotes the total number of parameters of the neural network.
%Then, we can write down the output of this neural network. 
Let $f(\bx;\btheta)$ denote the output of the neural network with parameters $\btheta$ and input $\bx$, i.e.,
\begin{equation}
    f(\bx;\btheta) := \sqrt{m} \cdot \bW_L\sigma(\bW_{L-1}\sigma(\bW_{L-2}\dots \sigma(\bW_1 \bx))) \label{def:nn}.
\end{equation}
Let $\bg(\bx;\btheta) := \nabla_{\btheta} f(\bx;\btheta)$ be the gradient of $f(\bx;\btheta)$, and let $\btheta_0$ be the initial model parameters selected by the following random initialization steps\footnote{The initialization steps are the same as those in \cite{cao2019generalization,zhou2020neural}.}: 
(i) For $l \in [L-1]$, let $\bW_{l}$ take the form of $\bW_l = \begin{pmatrix}
        \bW & {\b0}\\
        {\b0} & \bW
    \end{pmatrix}$, where each entry of $\bW$ is drawn independently from $\cN(0,2/m)$.
(ii) For the output layer, let $\bW_{L}$ take the form of $\bW_L = (\bw^\top, -\bw^\top)$, where each entry of $\bw$ is drawn independently from $\cN(0,1/m)$. 
%(i) Generate a $\frac{m}{2}\times \frac{m}{2}$ matrix $\bW$, where each entry is drawn independently from $\cN(0,2/m)$, and let $\bW_l = \begin{pmatrix}
      %  \bW & {\b0}\\
      %  {\b0} & \bW
    %\end{pmatrix}$, for $l \in [L-1]$. 
%(ii) Generate a each entry of $w$ independently from $N(0,1/m)$, and let $\textbf{W}_L = (\bw^\top, -\bw^\top)$ for $l \in [L-1]$. Note that $f(\bx;\btheta_0) = 0$ for all input $x$.

\section{An Overview of the RBMLE Principle}
\label{section:intro:RBMLE}
In this section, we review the generic RBMLE principle in the context of adaptive control for maximizing the long-term average reward of an unknown dynamic system. 
Consider a discrete-time MDP with a state space $\cS$, an action space $\cA$, and unknown transition dynamics as well as a reward function that are both dependent on the unknown true parameter $\btheta_{*}$ belonging to some known set $\Theta$.
For ease of notation, for any $\btheta\in\Theta$\footnote{To make the connection between RBMLE and neural bandits explicit, in Section 3 we slightly abuse the notation $\btheta$ to denote the parameters of the dynamical system.}, we denote the transition probabilities under $\btheta$ by $p(s_t,s_{t+1},a_t;\btheta):=\text{Prob}(s_{t+1}|s_t,a_t)$, where $p$ is a probability function parameterized by $\btheta$, $s_t \in \cS$ and $a_t \in \cA$ are the state and the action taken at time $t$. 
%Moreover, for any $\btheta\in\Theta$, let $r(s_t,a_t;\btheta)\in [0,\infty)$ denote the reward received after taking action $a_t$ at state $s_t$ under the parameter $\btheta$, and 
Let $J(\pi;\btheta)$ be the long-term average reward under policy $\pi: \cS \rightarrow \cA$. 
We let ${J}^{*}(\btheta):=\max_{\pi} J(\pi;\btheta)$ denote the optimal long-term average reward for any $\btheta\in\Theta$ and use $\pi^{*} := \argmax_{\pi} J(\pi;\btheta_{*})$ to denote an optimal policy for $\btheta_{*}$.
% = \lim_{T\rightarrow \infty}\frac{1}{T}\sum^{T}_{t=1} r(s_t,\pi(s_t);\btheta)
\begin{itemize}[leftmargin=*]
    \item \textbf{Closed-loop identification issue:} Originally proposed by \cite{mandl1974estimation}, the \textit{certainty equivalent} (CE) method addresses the optimal control of an unknown dynamic system by first finding the MLE of the true parameter and then following an optimal policy with respect to the MLE. Specifically, the MLE of the true parameter $\btheta_{*}$ at each time $t$ can be derived as
\begin{equation}
    \btheta^{\text{MLE}}_t := \argmax_{\btheta\in\Theta} \prod_{i=1}^{t-1} p(s_i,s_{i+1},a_i;\btheta).
\end{equation}
%It was shown in \cite{kumar1982new} that under the policy $\pi^{\textbf{ML}}_t = \argmax_{\pi} J(\pi,\btheta^{\text{MLE}}_t)$, the maximum likelihood estimates $\btheta^{\textbf{ML}}_t$ will result in “closed-loop" identification and have convergence such that
Let $\pi^{\text{MLE}}_t := \argmax_{\pi} J(\pi,\btheta^{\text{MLE}}_t)$ denote an optimal policy for the system with parameter $\btheta^{\text{MLE}}_t$.
Then, it was shown in \cite{kumar1982new} that under the sequence of policies $\{\pi^{\text{MLE}}_t\}$, the sequence of maximum likelihood estimates $\{\btheta^{\text{MLE}}_t\}$ converges to some estimate $\btheta_{\infty}^{\text{MLE}}$ in the limit such that for all pairs of $s,s' \in S$,
%will result in “closed-loop" identification and have convergence such that
\begin{equation}
     p(s,s',\pi^{\text{MLE}}_\infty(s);\btheta^{\text{MLE}}_\infty) =  p(s,s',\pi^{\text{MLE}}_\infty(s);\btheta_{*}),\label{eq:closed-loop identification property of MLE}
\end{equation}
where $\pi^{\text{MLE}}_\infty:=\argmax_{\pi}J(\pi,\btheta^{\text{MLE}}_\infty)$ is an optimal policy for $\btheta^{\text{MLE}}_\infty$. 
Notably, (\ref{eq:closed-loop identification property of MLE}) is typically known as the ``closed-loop identification'' property, which  indicates that under the policy $\pi^{\text{MLE}}_{\infty}$, the transition probabilities can be correctly identified only in a ``closed-loop'' manner.
As a result, under the CE approach, it is \textit{not} guaranteed that all the transition probabilities are correctly estimated, and therefore the policy $\pi^{\text{MLE}}_\infty$ is not necessarily optimal for the true parameter $\btheta_{*}$.

\item \textbf{The inherent bias resulting from MLE:} The above key insight about the CE approach can be made more explicit \cite{kumar1982new} by
\begin{equation}
     J(\pi^{\text{MLE}}_\infty;\btheta^{\text{MLE}}_\infty) = J(\pi^{\text{MLE}}_\infty;\btheta_{*}) \leq J(\pi^{*};\btheta_{*}),\label{eq:sub-optimal reward under CE}
\end{equation}
where the first equality in (\ref{eq:sub-optimal reward under CE}) follows from (\ref{eq:closed-loop identification property of MLE}). As $J(\pi^{\text{MLE}}_\infty;\btheta^{\text{MLE}}_\infty)\equiv {J}^{*}(\btheta^{\text{MLE}}_\infty)$ and $J(\pi^{*};\btheta_{*})\equiv {J}^*(\btheta_{*})$, (\ref{eq:sub-optimal reward under CE}) indicates that the estimates under CE suffer from an inherent \textit{bias} that favors the parameters with \textit{smaller} optimal long-term average rewards than $\btheta_*$. 

\item \textbf{Adding a reward-bias term for correcting the inherent bias of MLE.}
To counteract this bias, \cite{kumar1982new} proposed the RBMLE approach, which directly multiplies the likelihood by an additional reward-bias term $J^*(\btheta)^{\alpha(t)}$ with $\alpha(t) > 0, \alpha(t)\rightarrow \infty, \alpha(t)=o(t)$, with the aim of encouraging exploration over those parameters $\btheta$ with a potentially larger optimal long-term average reward. 
That is, the parameter estimate under RBMLE is
\begin{equation}
    \btheta^{\text{RBMLE}}_t := \argmax_{\btheta\in\Theta} \Big\{ {J}^*(\btheta)^{\alpha(t)}\prod_{i=1}^{t-1} p(s_i,s_{i+1},a_i;\btheta)\Big\}. \label{eq: original RBMLE}
\end{equation}
Accordingly, the policy induced by RBMLE at each $t$ is $\pi^{\text{RBMLE}}_t := \argmax_{\pi} J(\pi;\btheta^{\text{RBMLE}}_t)$.
%\label{eq: policy under original RBMLE}
It has been shown in \cite{kumar1982new} that RBMLE successfully corrects the inherent bias and converges to the optimal policy $\pi^*$ through the following steps: (i) Since $\alpha(t)=o(t)$, the effect of the reward-bias term becomes negligible compared to the likelihood term for large $t$.
Hence, the sublinearity of $\alpha(t)$ leads to diminishing exploration and thereby preserves the convergence property similar to that of MLE. 
As a result, both the limits $\btheta^{\text{RBMLE}}_\infty:=\lim_{t\rightarrow \infty}\btheta^{\text{RBMLE}}_t$ and $\pi^{\text{RBMLE}}_\infty:=\lim_{t\rightarrow \infty}\pi^{\text{RBMLE}}_t$ exist, and the result similar to (\ref{eq:sub-optimal reward under CE}) still holds under RBMLE, i.e.,
\begin{equation}
    J(\pi^{\text{RBMLE}}_\infty;\btheta^{\text{RBMLE}}_\infty) \leq J(\pi^{*};\btheta_{*}). \label{eq:property of RBMLE-1}
\end{equation}
(ii) Given that $\alpha(t) \rightarrow \infty$, the reward-bias term $J^*(\btheta)^{\alpha(t)}$, which favors those parameters with higher rewards, remains large enough to undo the inherent bias of MLE.
%which asymptotically causes $\btheta^{\textbf{RBMLE}}_t$ with larger reward than $p^{*}$.
As a result, RBMLE achieves
\begin{equation}
    J(\pi^{\text{RBMLE}}_\infty,\btheta^{\text{RBMLE}}_\infty) \geq J(\pi^{*},\btheta^{*}), \label{eq:property of RBMLE-2}
\end{equation}
as proved in \citep[Lemma 4]{kumar1982new}.
%because the bias term $\alpha(t) = o(t)$ will not cause RBMLE loss property (\ref{eq:closed-loop identification [roperty of MLE}) of MLE and because  
By (\ref{eq:property of RBMLE-1})-(\ref{eq:property of RBMLE-2}), we know that the delicate choice of $\alpha(t)$ ensures $\pi^{\text{RBMLE}}_\infty$ is an optimal policy for the true parameter $\btheta_{*}$. 
\end{itemize}
%\begin{align}
%     &J(\pi^{*},\btheta_{*}) \nonumber\\
%     \leq& \lim_{t \rightarrow \infty} J(\pi^{\textbf{RBMLE}}_t,p^{\textbf{RBMLE}}_t) \nonumber\\
%     =&\lim_{t \rightarrow \infty} J(\pi^{\textbf{RBMLE}}_t,p^{*}) \nonumber\\
%     \leq& J(\pi^{*},\btheta_{*}) 
%\end{align}
%which implies $\lim_{t \rightarrow \infty}\pi^{\textbf{RBMLE}}_t$ is the .
Note that the above optimality result implies that RBMLE achieves a sublinear regret, but without any further characterization of the regret bound and the effect of the bias term $\alpha(t)$.
In this paper, we adapt the RBMLE principle to neural contextual bandits and design bandit algorithms with regret guarantees.

\section{RBMLE for Neural Contextual Bandits}
\label{section:rbmle}
In this section, we present how to adapt the generic RBMLE principle described in Section \ref{section:intro:RBMLE} to the neural bandit problem and propose the NeuralRBMLE algorithms.
%We start from deriving RBMLE into a simple index form and then introduce the NeuralRBMLE algorithm.
%\subsection{A Prototypic RBMLE Algorithm for Neural Bandits}
%\label{section:rbmle:principle}
%the NeuralRBMLE algorithm can be summarized in the following steps:
%\begin{itemize}
%    \item Calculate $\bar{\theta}_t=\argmax \{\ell_{\lambda}^{\dagger}()\}$
%\end{itemize}
\subsection{A Prototypic NeuralRBMLE Algorithm}

By leveraging the RBMLE principle in (\ref{eq: original RBMLE}), we propose to adapt the parameter estimation procedure for MDPs in (\ref{eq: original RBMLE}) to neural bandits through the following modifications:
\begin{itemize}[leftmargin=*]
    \item \textbf{Likelihood functions via surrogate distributions:} For each time $t\in [T]$, let $\ell^{\dagger}(\cF_t;\btheta)$ denote the log-likelihood of the observation history $\cF_t$ under a neural network parameter $\btheta$. Notably, different from the likelihood of state transitions in the original RBMLE in (\ref{eq: original RBMLE}), here $\ell^{\dagger}(\cF_t;\btheta)$ is meant to capture the statistical behavior of the received rewards given the contexts.
    However, one main challenge is that the underlying true reward distributions may not have a simple parametric form and are unknown to the learner.
    To address this challenge, we use the log-likelihood of \textit{canonical exponential family distributions} as a surrogate for the true log-likelihood. 
    Specifically, the surrogate log-likelihood\footnote{For brevity, we ignore the normalization function of the canonical exponential families since this term depends only on $r_t$ and is independent of $\btheta$.} is chosen as $\log p(r_s\rvert \bx_s;\btheta)=r_s f(\bx_{s};\btheta)-b(f(\bx_s;\btheta))$, where $b(\cdot):\mathbb{R}\rightarrow \mathbb{R}$ is a known strongly convex and smooth function with $L_b \leq b''(z) \leq U_b$, for all $z\in \mathbb{R}$.
    %, and $c(\cdot):\mathbb{R}\rightarrow \mathbb{R}$ is some normalizing function. 
    Note that the above $\log p(r_s\rvert \bx_s;\btheta)$ is used only for arm selection under NeuralRBMLE, and we do not impose any distributional assumption on the rewards other than sub-Gaussianity.
    Hence, $\ell^{\dagger}(\cF_t;\btheta)$ can be written as
    \begin{equation}
        \ell^{\dagger}(\cF_t;\btheta) := \sum_{s=1}^{t-1} \big(r_s f(\bx_{s};\btheta)-b(f(\bx_s;\btheta))\big).\label{eq:exponential family log-likelihood}
    \end{equation}
    For example, one candidate choice for (\ref{eq:exponential family log-likelihood}) is using Gaussian likelihood with $b(z)=z^2/2$.
    \vspace{0mm}
    \item \textbf{Reward-bias term:} We consider one natural choice $(\max_{a\in [K]}f(\bx_{t,a};\btheta))^{\alpha(t)}$, which provides a bias in favor of the parameters $\btheta$ with larger estimated rewards.
    \vspace{0mm}
    \item \textbf{Regularization:} As the RBMLE procedure requires a maximization step, we also incorporate into the reward-biased likelihood a quadratic regularization term $\frac{m \lambda}{2}\lVert \btheta - \btheta_0\rVert_2^2$ with $\lambda>0$, as typically done in training neural networks. For ease of notation, we define
\begin{equation}
    \ell_{\lambda}^{\dagger}(\cF_t;\btheta):=\ell^{\dagger}(\cF_t;\btheta)-\frac{m \lambda}{2}\lVert \btheta - \btheta_0 \rVert^2_2.\label{eq:regularization}
\end{equation}
\end{itemize}

Based on the RBMLE principle in (\ref{eq: original RBMLE}) and the design in (\ref{eq:exponential family log-likelihood})-(\ref{eq:regularization}), at each time $t$, the learner under NeuralRBMLE selects an arm that maximizes $f(\bx_{t,a};\btheta_{t}^{\dagger})$, where
%for neural bandits (denoted by $\btheta_t$) can be written as
\begin{equation}
    \btheta_{t}^{\dagger} := \argmax_{\btheta} \big\{\ell_{\lambda}^{\dagger}(\mathcal{F}_t;\btheta) + \alpha(t)\max_{a\in [K]}f(\bx_{t,a};\btheta) \big\}.\label{eq:original theta_t of NeuralRBMLE}
\end{equation}

Inspired by \cite{hung2020reward}, we can further show that $\text{NeuralRBMLE}$ can be simplified to an \textit{index strategy} by interchanging the two max operations in (\ref{eq:original theta_t of NeuralRBMLE}).
Now we are ready to present the prototypic NeuralRBMLE algorithm as follows: At each time $t$, 
\begin{enumerate}[leftmargin=*]
    \item Define the \textit{arm-specific} RBMLE estimators
    \begin{align}
    \btheta_{t,a}^{\dagger} := \argmax_{\btheta} \big\{\ell_{\lambda}^{\dagger}(\mathcal{F}_t;\btheta) + \alpha(t)f(\bx_{t,a};\btheta) \big\}. \label{def:RBMLE arm-specific}
\end{align}
\item Accordingly, for each arm, we construct an index as
\begin{align}
    \mathcal{I}^{\dagger}_{t,a} := \ell_{\lambda}^{\dagger}(\mathcal{F}_t;\btheta_{t,a}^{\dagger}) + \alpha(t)f(\bx_{t,a};\btheta_{t,a}^{\dagger}).  \label{eq:index}
\end{align}
\end{enumerate}
%Specifically, we define the \textit{arm-specific} RBMLE estimators at each time $t$ as

%where $\ell$ is the log-likelihood function of the historical observations $\mathcal{F}_t$ under some parameter $\theta$, then we plug $\theta_{t,a}$ into original RBMLE as the index strategy with the index of an arm $a$ at time $t$. 

%Then, it can be shown that $\text{NeuralRBMLE}$ is equivalent to an index strategy which selects an arm with the largest $\mathcal{I}^{\dagger}_{t,a}$ at each time $t$.
Then, it can be shown that the policy induced by the NeuralRBMLE in (\ref{eq:original theta_t of NeuralRBMLE}) is equivalent to an index strategy which selects an arm with the largest $\mathcal{I}^{\dagger}_{t,a}$ at each time $t$.
The proof is similar to that in \cite{hung2020reward} and provided in Appendix \ref{appendix:justification} for completeness.

\begin{remark}
\normalfont Note that in (\ref{eq:regularization})-(\ref{eq:index}) we use exponential family distributions as the surrogate likelihood functions for RBMLE.
When the reward distributions are unknown, one could simply resort to some commonly-used distributions, such as the Gaussian likelihood function. 
On the other hand, when additional structures of the reward distributions are known, this design also enables the flexibility of better matching the true likelihood and the surrogate likelihood. 
For example, in the context of logistic bandits, the rewards are known to be binary, and hence one can apply the Bernoulli likelihood to obtain the corresponding NeuralRBMLE estimator.
%In this paper, we focus on using the Gaussian log-likelihood as a surrogate for the true log-likelihood, as shown in (\ref{eq:Gaussian log-likelihood}). Despite this, it remains feasible to leverage other types of likelihood functions when additional structures of the rewards are known. For example, in the context of logistic bandits, the rewards are binary, and hence one can apply the Bernoulli likelihood to the RBMLE algorithm.
\end{remark}
\begin{remark}
\normalfont The surrogate likelihood function in (\ref{eq:exponential family log-likelihood}) follows the similar philosophy as that for the generalized linear bandits in \cite{hung2020reward}. 
Despite the similarity, one fundamental difference is that the objective function of NeuralRBMLE is no longer concave in $\btheta$.
In spite of this, in Section \ref{section:regret} we show that the practical algorithms derived from NeuralRBMLE still enjoy favorable regret bounds with the help of the neural tangent kernel.
\end{remark}
\vspace{-3mm}
\subsection{Practical NeuralRBMLE Algorithms}
\label{section:rbmle:alg}
\vspace{-2mm}
One major challenge in implementing the prototypic NeuralRBMLE algorithm in (\ref{def:RBMLE arm-specific})-(\ref{eq:index}) is that the exact maximizer $\btheta_{t,a}^{\dagger}$ in (\ref{def:RBMLE arm-specific}) can be difficult to obtain since the maximization problem of (\ref{def:RBMLE arm-specific}), which involves the neural function approximator $f(\bx;\btheta)$, is non-convex in $\btheta$. 
In this section, we proceed to present two practical implementations of NeuralRBMLE algorithm. 
%that obviates the need for double maximization.
\vspace{-1mm}
\hfill
\begin{figure}[t]
\begin{minipage}[t]{0.5\linewidth}
    \vspace{-3pt}
	\begin{algorithm}[H]
		\caption{NeuralRBMLE-GA}
	    \begin{algorithmic}[1]
	        \STATE {\bfseries Input:} $\alpha(t)$, $\zeta(t)$, $\lambda$, $f$, $\btheta_0$, $\eta$, $J$.
	        \STATE {\bfseries Initialization:} $\{\widetilde{\btheta}^{\dagger}_{0,i})\}_{i=1}^{K} \leftarrow \btheta_0$.
	        \vspace{2pt}
	        \FOR{$t=1,2,\cdots$}
	            \STATE Observe all the contexts $\{\bx_{t,a}\}_{1\leq a \leq K}$.
	            \FOR{$a=1,\cdots,K$}
	                \STATE Set $\widetilde{\btheta}^{\dagger}_{t,a}$ to be the output of $J$-step gradient ascent with step size $\eta$ for maximizing $\ell_{\lambda}^{\dagger}(\mathcal{F}_t;\btheta) + \alpha(t)f(\bx_{t,a};\btheta)$.
	            \ENDFOR
	            \STATE Choose $a_t=\argmax_{a} \big\{ \ell_{\lambda}^{\dagger}(\mathcal{F}_t;\widetilde{\btheta}^{\dagger}_{t,a}) + \alpha(t)\zeta(t)f(\bx_{t,a};\widetilde{\btheta}^{\dagger}_{t,a})\big\}$ and obtain reward $r_t$.
	        \ENDFOR
	    \end{algorithmic}
	    \label{alg:NeuralRBMLE-GA}
	\end{algorithm}
\end{minipage}
\hfill
\begin{minipage}[t]{0.49\linewidth}
\vspace{-3pt}
\begin{algorithm}[H]
\caption{NeuralRBMLE-PC}  
    \begin{algorithmic}[1]
        \STATE {\bfseries Input:} $\alpha(t)$,  $\lambda$, $f$, $\btheta_0$, $\eta$, $J$.
        \STATE {\bfseries Initialization:} $\textbf{Z}_0\leftarrow \lambda \textbf{I}$\;, $\widehat{\btheta}_0\leftarrow \btheta_0$.
        \FOR{$t=1,2,\cdots$}
            \STATE Observe all the contexts $\{\bx_{t,a}\}_{1\leq a \leq K}$.
            \FOR{$a=1,\cdots,K$}
            \STATE $ \Bar{\btheta}_{t,a} \leftarrow \widehat{\btheta}_{t} + \frac{\alpha(t)}{m}\cdot \textbf{Z}^{-1}_{t-1} \textbf{g}(\bx_{t,a};\widehat{\btheta}_{t}) $.
            \ENDFOR
            \STATE Choose  $a_t=\argmax_{a}\{f(\bx_{t,a};\Bar{\btheta}_{t,a})\}$ and obtain reward $r_t$.
            \STATE Set $\widehat{\btheta}_{t}$ as the output of $J$-step gradient ascent with step size $\eta$ for maximizing $\ell_{\lambda}^{\dagger}(\cF_t;\btheta)$.
            \STATE $\textbf{Z}_{t}\leftarrow \textbf{Z}_{t-1}+\textbf{g}(\bx_{t};\widehat{\btheta}_{t})\textbf{g}(\bx_{t};\widehat{\btheta}_{t})^\top/m$.
        \ENDFOR
    \end{algorithmic}
    \label{alg:NeuralRBMLE-PC}
\end{algorithm}
\end{minipage}
\vspace{-5mm}
\end{figure}
\begin{itemize}[leftmargin=*]
    \item \textbf{NeuralRBMLE by gradient ascent (NeuralRBMLE-GA):} To solve the optimization problem in (\ref{def:RBMLE arm-specific}), one natural approach is to apply gradient ascent to obtain an approximator $\widetilde{\btheta}^{\dagger}_{t,a}$ for $\btheta_{t,a}^{\dagger}$, for each arm.
    The pseudo code of NeuralRBMLE-GA is provided in Algorithm \ref{alg:NeuralRBMLE-GA}\footnote{Compared to the index in (\ref{eq:index}), one slight modification in Line 8 of Algorithm \ref{alg:NeuralRBMLE-GA} is the additional factor $\zeta(t)$, which is presumed to be a positive-valued and strictly increasing function (e.g., $\zeta(t)$ can be chosen as $1+\log t$). This modification was first considered by \citep{hung2020reward}. As shown in our regret analysis in Appendix \ref{appendix:Regret Bound of NeuralRBMLE-GA subsection 1}, the technical reason behind $\zeta(t)$ is only to enable the trick of completing the square, and $\zeta(t)$ does not affect the regret bound.}.
    Given the recent progress on the neural tangent kernel of neural networks \citep{jacot2018neural,cao2019generalization}, the estimator $\widetilde{\btheta}^{\dagger}_{t,a}$ serves as a good approximation for ${\btheta}_{t,a}^{\dagger}$ despite the non-concave objective function in (\ref{def:RBMLE arm-specific}). 
    This will be described in more detail in the regret analysis.
\item \textbf{NeuralRBMLE via reward-bias-guided parameter correction (NeuralRBMLE-PC):} 
Note that by (\ref{def:RBMLE arm-specific}), finding each $\btheta_{t,a}^{\dagger}$ originally involves solving an optimization problem for each arm.
%being able to avoid the double maximization, the index strategy requires computing an estimator $\btheta_{t,a}$ for each arm $a$ at time $t$.
%This procedure is particularly time-consuming in the context of neural bandits due to the need for a large number of back-propagation steps for applying gradient descent.
To arrive at a more computationally efficient algorithm, we propose a surrogate index for the original index policy in (\ref{eq:index}) with Gaussian likelihood.
%to reduce the computation time. 
We observe that the main difference among the estimators $\btheta_{t,a}^{\dagger}$ of different arms lies in the reward-bias term $\alpha(t)f(\bx_{t,a};\btheta_{t,a}^{\dagger})$, as shown in (\ref{eq:index}).
Based on this observation, we propose to first (i) find a \textit{base estimator} $\widehat{\btheta}_t$ without any reward bias and then (ii) approximately obtain the arm-specific RBMLE estimators by involving the neural tangent kernel of neural networks. Define
\begin{align}
    \widehat{\btheta}^{\dagger}_{t} :&= \argmax_{\btheta} \big\{\ell_{\lambda}^{\dagger}(\mathcal{F}_t;\btheta) \big\}, \label{eq:UCB estimator}
\end{align}
Notably, $\widehat{\btheta}^{\dagger}_{t}$ can be viewed as the least squares estimate given $\cF_t$ for the neural network $f(\cdot;\btheta)$.  We apply $J$-step gradient ascent with step size $\eta$ to solve (\ref{eq:UCB estimator}), and denote $\widehat{\btheta}_{t}$ as the output of gradient ascent. Next, we define $\bZ_t := \lambda \bI + \frac{1}{m}\sum_{\tau=1}^{t} \bg(\bx_\tau;\widehat{\btheta}_\tau)\bg(\bx_\tau;\widehat{\btheta}_\tau)^\top$
%It is easy to check that $m \bZ_t$ serves as a good approximation for the hessian of $\ell_{\lambda}^{\dagger}(\cF_t;\btheta)$, and the details are provided in Appendix \ref{appendix:index of NeuralRBMLE-PC} for completeness.
and construct an approximate estimator for $\btheta_{t,a}^{\dagger}$ as
\begin{align}
    \Bar{\btheta}_{t,a} := \widehat{\btheta}_{t} + \frac{\alpha(t)}{m}\bZ^{-1}_{t-1} \bg(\bx_{t,a};\widehat{\btheta}_{t}), \label{def:theta_bar}
\end{align} 
where $\frac{\alpha(t)}{m}\bZ^{-1}_{t-1} \bg(\bx_{t,a};\widehat{\btheta}_{t})$ reflects the effect of the reward-bias term on the neural network parameter.
Then, we propose a surrogate index $\Bar{\mathcal{I}}_{t,a}$ for the index $\cI^{\dagger}_{t,a}$ as $\Bar{\mathcal{I}}_{t,a} := f(\bx_{t,a};\Bar{\btheta}_{t,a})$, for all $a \in [K]$ and for all $t\in [T]$.
The detailed derivation for the surrogate index $\Bar{\mathcal{I}}_{t,a}$ is provided in Appendix \ref{appendix:index of NeuralRBMLE-PC}.
The pseudo code of the NeuralRBMLE-PC of this surrogate index is provided in Algorithm \ref{alg:NeuralRBMLE-PC}.
The main advantage of NeuralRBMLE-PC is that at each time step $t$, the learner only needs to solve one optimization problem for the base estimator $\widehat{\btheta}_t$ and then follow the guidance of $\bZ_{t-1}^{-1}\bg(\bx_{t,a};\widehat{\btheta}_{t})$ that are readily available, instead of solving multiple optimization problems.
\end{itemize}

\section{Regret Analysis of NeuralRBMLE}
\label{section:regret}
In this section, we present the regret analysis of the NeuralRBMLE algorithms.
We denote $\bH$ as the NTK matrix and $\widetilde{d}$ as the effective dimension of $\bH$. The detailed definitions are in Appendix \ref{Appendix: Regret Analysis}. 
%As a data-dependent measure, $\widetilde{d}$ can be upper bounded if there exists a low-dimensional subspace of RKHS spanned by NTK such that and include all of the observed contexts \citep{zhou2020neural}.
%, we know that $\widetilde{d}$ can be upper
\begin{assumption}
$\textbf{H} \succeq \lambda \textbf{I}$, and for all $a \in [K], t \in [T], \lVert \textbf{x}_{t,a} \rVert_2 = 1$ and $[\textbf{x}]_{j} = [\textbf{x}]_{j+d/2}$.
\label{assumption:ntk}
\end{assumption}
\vspace{-2mm}
Based on this assumption, we can ensure that $\bH$ is a positive-definite matrix, and this can be satisfied if no two contexts are parallel. The second part ensures that $f(\bx_{t,a};\btheta_0) = 0$, which is mainly for the convenience of analysis. Next, we provide the regret bounds of NeuralRBMLE.
\begin{theorem}
%\normalfont
\label{theorem:regret of NeuralRBMLE-GA}
Under NeuralRBMLE-GA in Algorithm \ref{alg:NeuralRBMLE-GA}, there exist positive constants $\{C_{\text{GA},i}\}^4_{i=1}$ such that for any $\delta \in (0,1)$, if $\alpha(t) = \Theta(\sqrt{t})$, $\eta \leq C_{\text{GA},1}(m\lambda+TmL)^{-1}$, $J \geq C_{\text{GA},2}\frac{TL}{\lambda}$, and 
\begin{align}
     m \geq C_{\text{GA},3} &\max\bigg\{T^{16}\lambda^{-7}L^{24}(\log{m})^3, T^6K^6L^6\lambda^{-\frac{1}{2}}\left(\log\left(TKL^2/\delta\right)\right)^{\frac{3}{2}}\bigg\}, 
\end{align}
then with probability at least $1-\delta$, the regret satisfies $\mathcal{R}(T) \leq C_{\text{GA},4}\sqrt{T}\cdot\widetilde{d}\log(1+TK/\lambda)$.
\end{theorem}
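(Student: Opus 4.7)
The plan is to combine an NTK linearization argument with a carefully tracked RBMLE index decomposition in the style of the existing linear RBMLE analysis of \cite{hung2020reward}. First, by Assumption 1 and the initialization scheme, I would invoke the standard NTK approximation results (as used in NeuralUCB) to replace $f(\bx;\btheta)$ around $\btheta_0$ by the linearization $\bg(\bx;\btheta_0)^\top(\btheta - \btheta_0)$, with error decaying as a negative power of $m$ uniformly over the ball of radius $O(\sqrt{t/\lambda})$ where all estimators provably live. In that regime I would introduce an auxiliary ``ideal'' parameter $\btheta^{*}$ whose linearized network fits the true reward function $h$ on the $T$ observed contexts, and bound $\|\btheta^{*} - \btheta_0\|_2$ in terms of the effective dimension $\widetilde d$ through $\bH \succeq \lambda \bI$. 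The surrogate exponential-family log-likelihood $\ell_\lambda^{\dagger}$ is strongly concave under $L_b \le b''\le U_b$ and the quadratic regularizer $\tfrac{m\lambda}{2}\|\btheta - \btheta_0\|_2^2$, which will be essential throughout.

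Second, I would analyze the exact arm-specific maximizer $\btheta^{\dagger}_{t,a}$ from (\ref{def:RBMLE arm-specific}) via its first-order optimality condition in the NTK regime, obtaining an expansion of the form $\btheta^{\dagger}_{t,a} \approx \widehat{\btheta}_t + \tfrac{\alpha(t)}{m}\bZ^{-1}_{t-1}\bg(\bx_{t,a};\widehat{\btheta}_t)$, mirroring the parameter-correction viewpoint used to justify NeuralRBMLE-PC. Substituting this back into the index $\cI^{\dagger}_{t,a}$ in (\ref{eq:index}) will decompose it into an exploitation term plus a UCB-style confidence bonus proportional to $\alpha(t)\|\bg(\bx_{t,a};\widehat{\btheta}_t)\|_{\bZ^{-1}_{t-1}}/\sqrt{m}$. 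Next, I would control the gradient-ascent error: the choices $\eta \leq C_{\text{GA},1}(m\lambda + TmL)^{-1}$ and $J \geq C_{\text{GA},2} TL/\lambda$ together with strong concavity force $\|\widetilde{\btheta}^{\dagger}_{t,a} - \btheta^{\dagger}_{t,a}\|_2$ to decay geometrically in $J$, so the induced error on the index is negligible compared with the NTK linearization error. The factor $\zeta(t)$ appears precisely to enable completing the square when comparing $\widetilde{\cI}^{\dagger}_{t,a}$ across arms in Line~8 of Algorithm \ref{alg:NeuralRBMLE-GA}, and cancels from the final bound.

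Third, a self-normalized concentration bound for the innovation process $\sum_{s<t}\epsilon_s\, \bg(\bx_s;\btheta_0)/\sqrt{m}$ under the sub-Gaussian noise, combined with the schedule $\alpha(t)=\Theta(\sqrt{t})$, will certify an optimism property: the index of $a^{*}_t$ is upper bounded by the index of the selected arm $a_t$ up to lower-order $m$-dependent slack. The instantaneous regret then collapses to twice the confidence bonus evaluated at $(\bx_{t,a_t},\widehat{\btheta}_t)$, and summing over $t\in[T]$ via an elliptical-potential lemma tailored to the gradient features $\bg(\bx_\tau;\widehat\btheta_\tau)/\sqrt{m}$ yields the claimed $\sqrt{T}\,\widetilde{d}\log(1+TK/\lambda)$ scaling. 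The width requirement on $m$ is dictated by requiring every $m^{-\gamma}$ term from NTK, gradient-ascent, and misspecification errors to sum to $o(\sqrt{T})$ across $T$ rounds and $K$ arms.

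The hardest part will be cleanly decoupling three entangled error sources: the surrogate-likelihood model misspecification, the NTK linearization error, and the gradient-ascent inexactness $\widetilde{\btheta}^{\dagger}_{t,a}-\btheta^{\dagger}_{t,a}$. Each contributes to both the arm-specific maximizer \emph{and} to the index value at that maximizer, so the analysis must track their propagation through the chain $\widehat{\btheta}_t \to \btheta^{\dagger}_{t,a} \to \widetilde{\btheta}^{\dagger}_{t,a} \to \widetilde{\cI}^{\dagger}_{t,a}$ without blowing up the dependence on $t$; ensuring each bound is a strictly negative polynomial in $m$ is what forces the somewhat delicate condition $m \ge C_{\text{GA},3}\max\{T^{16}\lambda^{-7}L^{24}(\log m)^3,\ T^6 K^6 L^6\lambda^{-1/2}(\log(TKL^2/\delta))^{3/2}\}$.
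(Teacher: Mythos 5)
Your scaffolding (NTK linearization around $\btheta_0$, the ideal parameter $\btheta^{*}$ with $h(\bx)=\langle\bg(\bx;\btheta_0),\btheta^{*}-\btheta_0\rangle$, confinement of all iterates to a ball of radius $O(\sqrt{t\log t/(m\lambda^2)})$, and geometric decay of the gradient-ascent error under the stated $\eta$ and $J$) matches the paper's supporting lemmas. The gap is in your third paragraph, where the regret is actually accounted for. Substituting the first-order condition $\btheta^{\dagger}_{t,a}\approx\widehat{\btheta}_t+\frac{\alpha(t)}{m}\bZ^{-1}_{t-1}\bg(\bx_{t,a};\widehat{\btheta}_t)$ into the linearized index produces a bias term $\frac{\alpha(t)}{m}\lVert\bg(\bx_{t,a};\widehat{\btheta}_t)\rVert^2_{\bZ^{-1}_{t-1}}$, which is \emph{quadratic} in the feature norm, not the first-order bonus $\alpha(t)\lVert\bg\rVert_{\bZ^{-1}}/\sqrt{m}$ you write. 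With $\alpha(t)=\Theta(\sqrt{t})$ this quadratic bias does not dominate the estimation error $\frac{\gamma(t)}{\sqrt{m}}\lVert\bg(\bx^{*}_t;\btheta_0)\rVert_{\bZ_t^{-1}}$ when the optimal arm's uncertainty is small, so the RBMLE index is \emph{not} optimistic: the index of $a^{*}_t$ need not upper bound $h(\bx^{*}_t)$, and the instantaneous regret does not collapse to twice a confidence bonus at the pulled arm. (If one followed your first-order bonus literally, summing $2\sqrt{t}\,\lVert\bg(\bx_t;\widehat{\btheta}_t)/\sqrt{m}\rVert_{\bZ^{-1}_t}$ over $t$ gives $O(T)$, not $O(\sqrt{T})$.)

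The mechanism the paper uses instead is to combine the first-order optimality conditions of the \emph{two} arm-specific maximizers jointly, yielding the exact identity $\bigl(\bg(\bx_{t,i};\btheta_0)+\bg(\bx_{t,j};\btheta_0)\bigr)^{\top}(\btheta_{t,j}-\btheta_{t,i})=-\frac{\alpha(t)}{m}\bigl(\lVert\bg(\bx_{t,i};\btheta_0)\rVert^2_{\bU_t^{-1}}-\lVert\bg(\bx_{t,j};\btheta_0)\rVert^2_{\bU_t^{-1}}\bigr)$, which injects a \emph{negative} quadratic term $-\frac{\alpha(t)}{m}\lVert\bg(\bx^{*}_t;\btheta_0)\rVert^2_{\bU_t^{-1}}$ into the regret decomposition. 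That negative term is then paired with the positive linear term $\frac{\gamma(t)}{\sqrt{m}}\lVert\bg(\bx^{*}_t;\btheta_0)\rVert_{\bZ_t^{-1}}$ — which no elliptical-potential argument can absorb, since it involves the unplayed optimal arm — and the pair is bounded by completing the square, giving $\gamma(t)^2/(4\alpha(t))$; summing $\sum_t\gamma(t)^2/\sqrt{t}=O(\widetilde{d}\log(1+TK/\lambda)\sqrt{T})$ is precisely where $\alpha(t)=\Theta(\sqrt{t})$ is needed. You gesture at completing the square via $\zeta(t)$, but without the negative quadratic coming from the joint first-order conditions there is nothing to complete the square against, and the optimal-arm terms remain uncontrolled. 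To repair the argument you would need to add this step, together with the bound $\ell_{\lambda}(\cF_t;\btheta_{t,i})-\ell_{\lambda}(\cF_t;\btheta_{t,j})\le\frac{U_b\alpha(t)^2}{mL_b^2}\lVert\bg(\bx_{t,j};\btheta_0)\rVert^2_{\bZ_t^{-1}}$ that controls the likelihood part of the index across arms.
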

The detailed proof is provided in Appendix 
\ref{appendix:Regret Bound of NeuralRBMLE-GA}.
%\begin{remark}
%\normalfont

We highlight the technical novelty of the analysis for NeuralRBMLE-GA as follows. The challenges in establishing the regret bound in Theorem \ref{theorem:regret of NeuralRBMLE-GA} are mainly three-fold: 
\vspace{-3mm}
\begin{itemize}[leftmargin=*]
    \item (a) \textit{Model misspecification}: In NeuralRBMLE-GA, the surrogate likelihood is designed to be flexible and takes the general form of an exponential family distribution, instead of a Gaussian distribution as used in the existing neural bandit methods \citep{zhou2020neural,zhang2020neural}. As a result, we cannot directly exploit the special parametric form of the Gaussian likelihood in analyzing the RBMLE index. Therefore, to tackle the model misspecification problem, the proof is provided with no assumption on reward distribution and works for a surrogate likelihood of an exponential family distribution.   
    \vspace{-1mm}
    \item (b) \textit{Three-way trade-off}: In neural bandits, applying the RBMLE principle for low regret involves one inherent dilemma -- the reward bias and $\alpha(t)$ need to be large enough to achieve sufficient exploration, while $\alpha(t)$ needs to be sufficiently small to enable the NTK-based approximation for low regret (cf. Lemma \ref{appendix_lemma_1}). This is one salient difference from RBMLE for linear bandits \citep{hung2020reward}.
    \vspace{-1mm}
    \item (c) \textit{Compatibility of RBMLE with NTK}: As the additional reward-bias term is tightly coupled with the neural network in the objective (\ref{eq:original theta_t of NeuralRBMLE}) and the arm-specific estimators in (\ref{def:RBMLE arm-specific}), it is technically challenging to quantify the deviation of the learned policy parameter from $\btheta_0$. This is one salient difference between NeuralRBMLE-GA and the existing neural bandit algorithms \citep{zhou2020neural,zhang2020neural}. 
\end{itemize}
\vspace{-2mm}
 
Given the above, we address these issues as follows: (i) We address the issue (b) by carefully quantifying the distance between the learned policy parameters and the initial parameter (cf. Lemmas \ref{lemma:error between widetilde_theta and theta}-\ref{lemma:error between widetilde_theta_dagger and theta_0} along with the supporting Lemmas \ref{lemma:upper bound of r}-\ref{lemma:upper bound of b'}).
(ii) We tackle the issue (c) by providing bounds regarding the log-likelihood of the exponential family distributions in analyzing the arm-specific index (cf. Lemmas \ref{lemma:error between theta_tilde_dagger and theta}-\ref{lemma:index policy for NeuralRBMLE-GA}).
(iii) Finally, based on the above results, we address the issue (a) by choosing a proper $\alpha(t)$ that achieves low regret and enables NTK-based analysis simultaneously.
%\end{remark}

%{\bf Proof sketch.}
%The primary technique of this proof is to approximate the index of NeuralRBMLE-GA through NTK. Due to the effect of the bias term and the log-likelihood of the exponential family, we can not directly re-use the lemma in \cite{zhou2020neural}, which learns the model's parameter by minimizing Mean Squared Error(MSE) loss. To prove the regret bound of NeuralRBMLE-GA, we first show the difference between $\widetilde{\btheta}^{\dagger}_{t,a}$, the actual model's parameter of Algorithm \ref{alg:NeuralRBMLE-GA}, and $\btheta_0$, the initial weight of the neural network, satisfies the condition of Lemma \ref{appendix_lemma_1}-\ref{lemma:gradient of f}. Based on these supporting lemmas, we obtain a lemma to show the upper bound of the difference between $\widetilde{\btheta}_{t,a}^{\dagger}$ and $\btheta_{t,a}$, the model's parameter under the NTK regime. Finally, we figure out the approximation error after replacing the forward value with the inner product of gradient and $\btheta$ in the index policy by the Taylor theorem and the above lemma. Then, the regret analysis is simplified and similar to the linear case \cite{hung2020reward}.  

\begin{theorem}
%\normalfont
\label{theorem:regret of NeuralRBMLE-PC}
Under NeuralRBMLE-PC in Algorithm \ref{alg:NeuralRBMLE-PC}, there exist positive constant $C_{\text{PC},1}$, $C_{\text{PC},2}$ and $C_{\text{PC},3}$ such that for any $\delta \in (0,1)$, if $\alpha(t) = \Theta(\sqrt{t})$, $\eta \leq C_{\text{PC},1}(m\lambda+TmL)^{-1}$, and 
\begin{align}
     m \geq C_{\text{PC},2} &\max\bigg\{T^{21}\lambda^{-7}L^{24}(\log{m})^3,T^6K^6L^6\lambda^{-\frac{1}{2}}\left(\log\left(TKL^2/\delta\right)\right)^{\frac{3}{2}}\bigg\}, 
\end{align}
the with probability at least $1-\delta$, the regret satisfies $\mathcal{R}(T) \leq C_{\text{PC},3}\sqrt{T}\cdot\widetilde{d}\log(1+TK/\lambda)$. 
\end{theorem}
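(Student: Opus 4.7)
The plan is to exploit the index-policy structure of NeuralRBMLE-PC by linearizing its surrogate index via neural tangent kernel theory and then carrying out a UCB-style regret decomposition with a quadratic exploration bonus. Since $\bar{\btheta}_{t,a} = \widehat{\btheta}_t + \tfrac{\alpha(t)}{m}\bZ^{-1}_{t-1}\bg(\bx_{t,a};\widehat{\btheta}_t)$ is a small perturbation of the base estimator $\widehat{\btheta}_t$ (which under the Gaussian surrogate likelihood $b(z)=z^2/2$ is the regularized least-squares solution returned by $J$-step gradient ascent), a first-order Taylor expansion of $f$ around $\widehat{\btheta}_t$ yields
\begin{equation*}
\bar{\mathcal{I}}_{t,a} = f(\bx_{t,a}; \bar{\btheta}_{t,a}) \approx f(\bx_{t,a};\widehat{\btheta}_t) + \tfrac{\alpha(t)}{m}\|\bg(\bx_{t,a};\widehat{\btheta}_t)\|^2_{\bZ^{-1}_{t-1}},
\end{equation*}
which is a ridge-regression predictor plus a quadratic bonus in the NTK feature norm. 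The implicit linearization residual is controlled by the Hessian of $f$ near $\widehat{\btheta}_t$ and requires $m$ large enough for $\bar{\btheta}_{t,a}$ to remain in the NTK regime.

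Next, I would establish an NTK-based concentration bound for $f(\bx;\widehat{\btheta}_t)$ paralleling Lemma 5.1 of \cite{zhou2020neural}: with probability at least $1-\delta$,
\begin{equation*}
|f(\bx;\widehat{\btheta}_t) - h(\bx)| \leq \gamma_t \tfrac{\|\bg(\bx;\widehat{\btheta}_t)\|_{\bZ^{-1}_{t-1}}}{\sqrt{m}} + \varepsilon_{\text{NTK}}(t,m),
\end{equation*}
where $\gamma_t = O(\sqrt{\widetilde{d}\log(TK/\delta)})$ arises from the standard self-normalized martingale argument and $\varepsilon_{\text{NTK}}$ captures vanishing NTK-approximation errors. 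This step leverages $\|\widehat{\btheta}_t-\btheta_0\|_2 = O(\sqrt{t/(m\lambda)})$, which follows from the least-squares structure and the lemmas already developed for NeuralRBMLE-GA, together with control of the residual of $J$-step gradient ascent.

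Combining the two, I would decompose the instantaneous regret via the greedy rule $a_t = \argmax_{a} \bar{\mathcal{I}}_{t,a}$ as
\begin{equation*}
h(\bx_{t,a^*_t}) - h(\bx_t) \leq [h(\bx_{t,a^*_t}) - \bar{\mathcal{I}}_{t,a^*_t}] + [\bar{\mathcal{I}}_{t,a_t} - h(\bx_t)].
\end{equation*}
Substituting the linearized index and the confidence bound, the first bracket is at most $\gamma_t\|\bg(\bx_{t,a^*_t};\widehat{\btheta}_t)\|_{\bZ^{-1}_{t-1}}/\sqrt{m} - \tfrac{\alpha(t)}{m}\|\bg(\bx_{t,a^*_t};\widehat{\btheta}_t)\|^2_{\bZ^{-1}_{t-1}}$; this is a downward parabola in $\|\bg\|_{\bZ^{-1}_{t-1}}$ whose maximum is $\gamma_t^2/(4\alpha(t))$. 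The second bracket is at most $\gamma_t\|\bg(\bx_t;\widehat{\btheta}_t)\|_{\bZ^{-1}_{t-1}}/\sqrt{m} + \tfrac{\alpha(t)}{m}\|\bg(\bx_t;\widehat{\btheta}_t)\|^2_{\bZ^{-1}_{t-1}}$. Summing over $t$, applying Cauchy-Schwarz, and invoking the NTK elliptical potential lemma $\sum_t \|\bg(\bx_t;\widehat{\btheta}_t)\|^2_{\bZ^{-1}_{t-1}}/m \leq 2\widetilde{d}\log(1+TK/\lambda)$, the three contributions scale as $\gamma_T\sqrt{T\widetilde{d}\log(1+TK/\lambda)}$, $\alpha(T)\widetilde{d}\log(1+TK/\lambda)$, and $\sum_t \gamma_t^2/(4\alpha(t))$. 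With $\alpha(t) = \Theta(\sqrt{t})$ all three collapse to order $\sqrt{T}\,\widetilde{d}\log(1+TK/\lambda)$, matching the claimed bound.

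The main obstacle will be controlling the NTK linearization error of $f(\bx;\bar{\btheta}_{t,a})$ in the presence of the reward-bias correction scaled by $\alpha(t) = \Theta(\sqrt{t})$: the quadratic-in-perturbation Hessian term and the discrepancy between $\bg(\cdot;\widehat{\btheta}_t)$ and $\bg(\cdot;\btheta_0)$ must both remain lower-order than the quadratic bonus itself, which is what drives the $T^{21}$ scaling of $m$ in the theorem (strictly stronger than the $T^{16}$ scaling in Theorem \ref{theorem:regret of NeuralRBMLE-GA}). A secondary difficulty is propagating the $J$-step gradient-ascent error for $\widehat{\btheta}_t$ through both the concentration bound and the definition of $\bar{\btheta}_{t,a}$: unlike NeuralRBMLE-GA, the gradients here are evaluated at $\widehat{\btheta}_t$ rather than at $\btheta_0$, so the linearization, the confidence radius, and the elliptical potential all depend on the same inexact estimator and must be coupled carefully.
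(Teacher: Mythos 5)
Your proposal is correct and follows essentially the same route as the paper's proof: linearizing the surrogate index into a ridge predictor plus the quadratic bonus $\tfrac{\alpha(t)}{m}\lVert\bg(\bx_{t,a};\widehat{\btheta}_t)\rVert^2_{\bZ^{-1}_{t-1}}$, invoking the NTK confidence ellipsoid for $\widehat{\btheta}_t$ from \cite{zhou2020neural}, completing the square in $\lVert\bg(\bx^*_t;\widehat{\btheta}_t)\rVert_{\bZ^{-1}_t}$ to extract the $\gamma(t)^2/(4\alpha(t))$ term, and closing with the elliptical potential lemma under $\alpha(t)=\Theta(\sqrt{t})$. Your two-bracket optimism-style decomposition is just a reorganization of the paper's explicit index-comparison manipulation, and your identified obstacles (linearization error scaled by $\alpha(t)$, the $\bg(\cdot;\widehat{\btheta}_t)$ versus $\bg(\cdot;\btheta_0)$ discrepancy, and propagation of the $J$-step gradient-ascent error) match the $D_1$, $D_2$ error terms the paper controls.
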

The complete proof is provided in Appendix \ref{appendix:Regret Bound of NeuralRBMLE-PC}.
%{\bf Proof sketch.}
%The main difficulty of this proof is to handle the $\lVert \bg(\bx^{*}_{t};\widehat{\btheta}_t) \rVert$, the confidence interval related to the optimal arm. Different from NeuralUCB \cite{zhou2020neural}, which eliminates $\lVert \bg(\bx^{*}_{t};\widehat{\btheta}_t) \rVert$ directly by its index policy, NeuralRBMLE-PC obtains a negative quadratic term of $\lVert \bg(\bx^{*}_{t};\widehat{\btheta}_t) \rVert$ and then eliminates $\lVert \bg(\bx^{*}_{t};\widehat{\btheta}_t) \rVert$ by completing a square of $\lVert \bg(\bx^{*}_{t};\widehat{\btheta}_t) \rVert$.

%\begin{remark}
%\normalfont 
We highlight the challenge and technical novelty of the analysis for NeuralRBMLE-PC as follows:
Notably, the main challenge in establishing the regret bound in Theorem \ref{theorem:regret of NeuralRBMLE-PC} lies in that the bias term affects not only the \textit{explore-exploit trade-off} but also the \textit{approximation capability} of the NTK-based analysis. Such a {three-way trade-off} due to the reward-bias term serves as one salient feature of NeuralRBMLE-PC, compared to RBMLE for linear bandits \citep{hung2020reward} and other existing neural bandit algorithms \citep{zhou2020neural,zhang2020neural}.
Despite the above challenges, we are still able to (i) characterize the interplay between regret and the reward-bias $\alpha(t)$ (cf. (\ref{eq: NeuralRBMLE-PC regret 2})-(\ref{eq: NeuralRBMLE-PC regret 9}) in Appendix \ref{appendix: Regret Bound of NeuralRBMLE-PC subsection 2}), (ii) specify the distance between $\btheta_0$ and the learned policy parameter $\bar{\btheta}^{\dagger}_{t,a}$ induced by the bias term (cf. Lemma \ref{lemma:range of theta_bar}), and (iii) carefully handle each regret component that involves $\lVert \bg(\bx^{*}_{t};\widehat{\btheta}_t) \rVert_2$ in the regret bound by the technique of completing the square (cf. (\ref{eq: NeuralRBMLE-PC regret 10})-(\ref{eq: NeuralRBMLE-PC regret 21}) in Appendix \ref{appendix: Regret Bound of NeuralRBMLE-PC subsection 2}).

\begin{figure*}[!t]
\vspace{-8mm}
$\begin{array}{c c c}
    \multicolumn{1}{l}{\mbox{\bf }} & \multicolumn{1}{l}{\mbox{\bf }} & \multicolumn{1}{l}{\mbox{\bf }} \\ 
    \scalebox{0.33}{\includegraphics[width=\textwidth]{./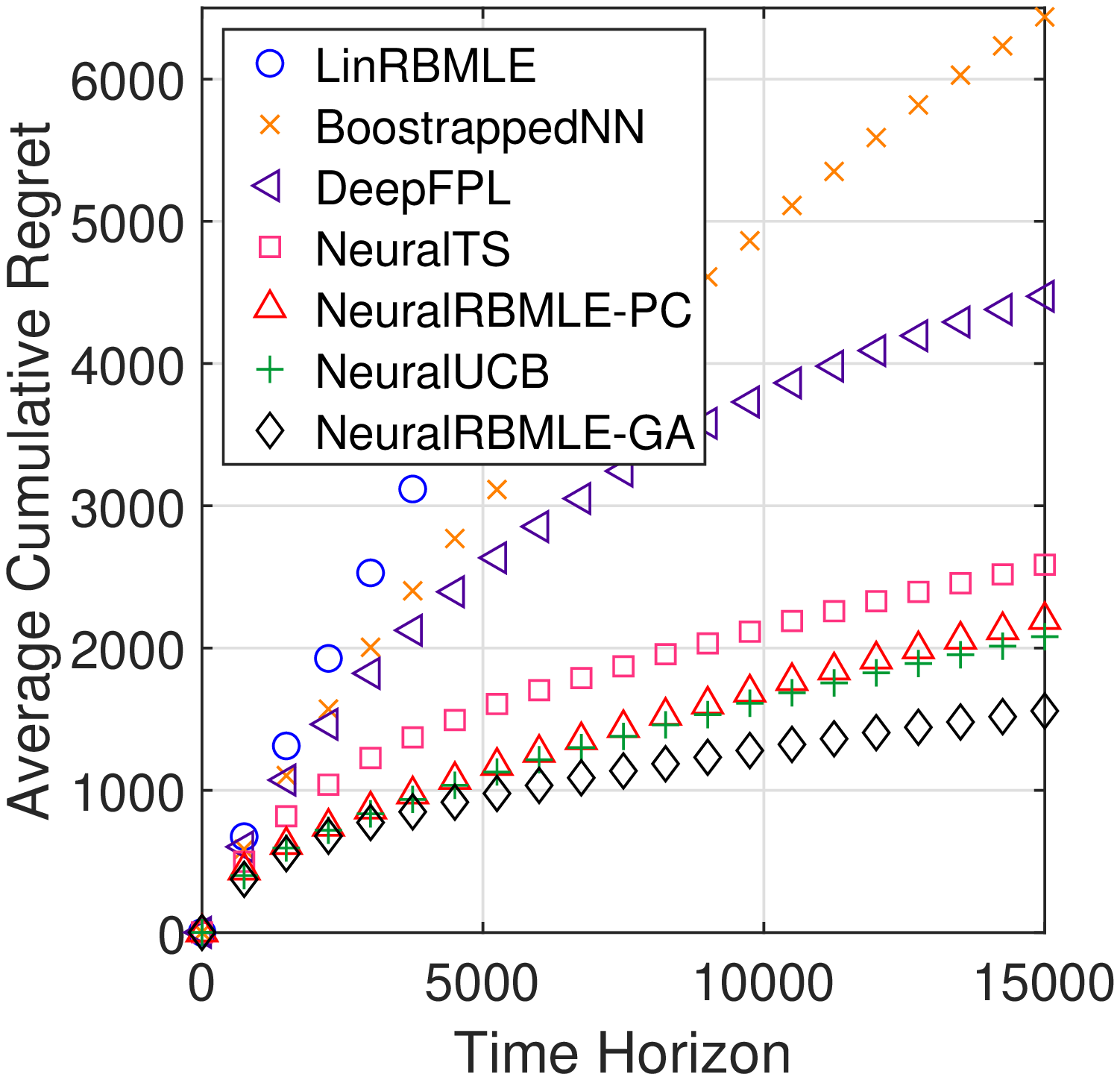}}  \label{fig:real_mnist} & \hspace{-3mm} \scalebox{0.33}{\includegraphics[width=\textwidth]{./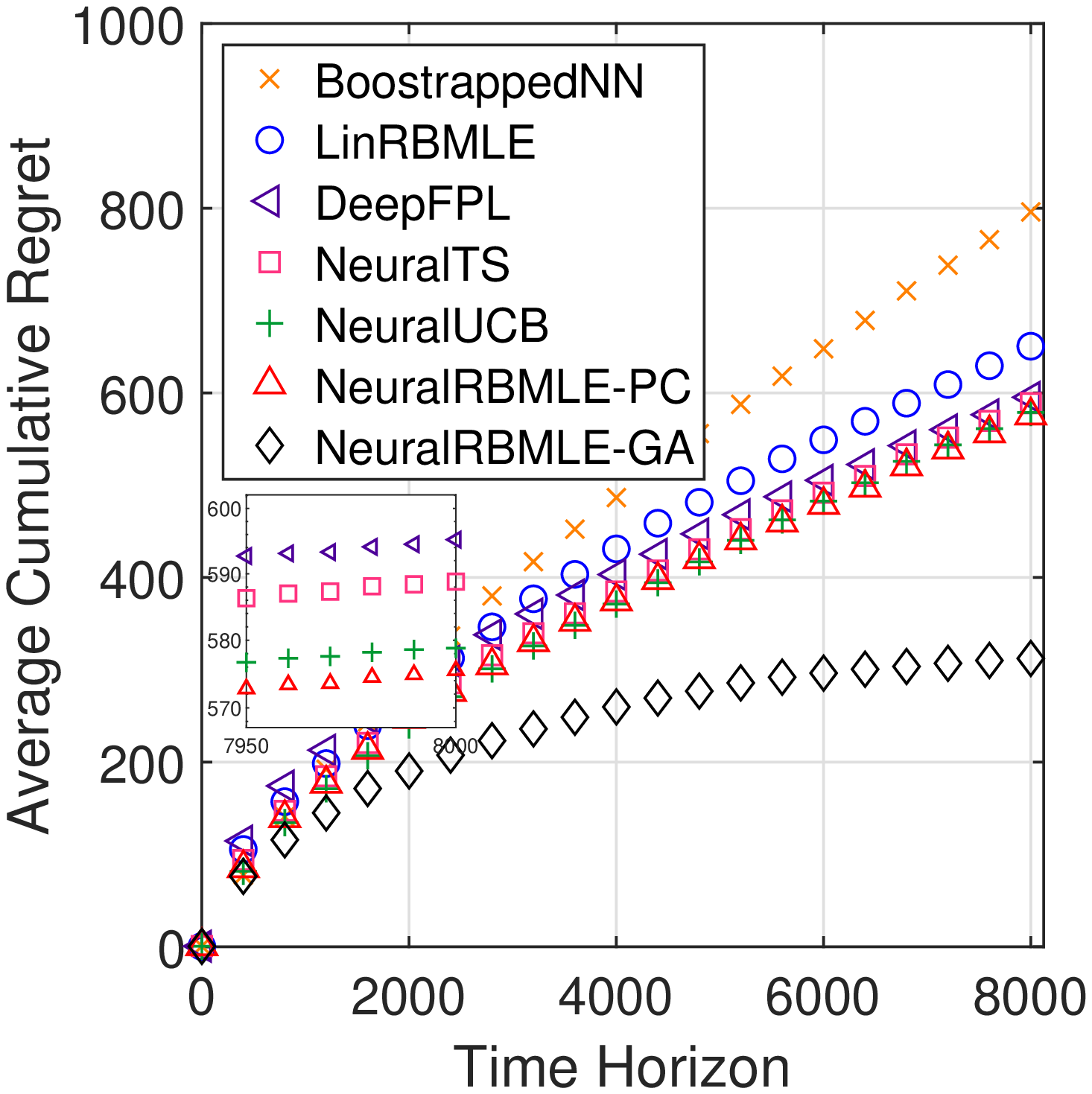}} \label{fig:real_mushroom} &  \scalebox{0.33}{\includegraphics[width=\textwidth]{./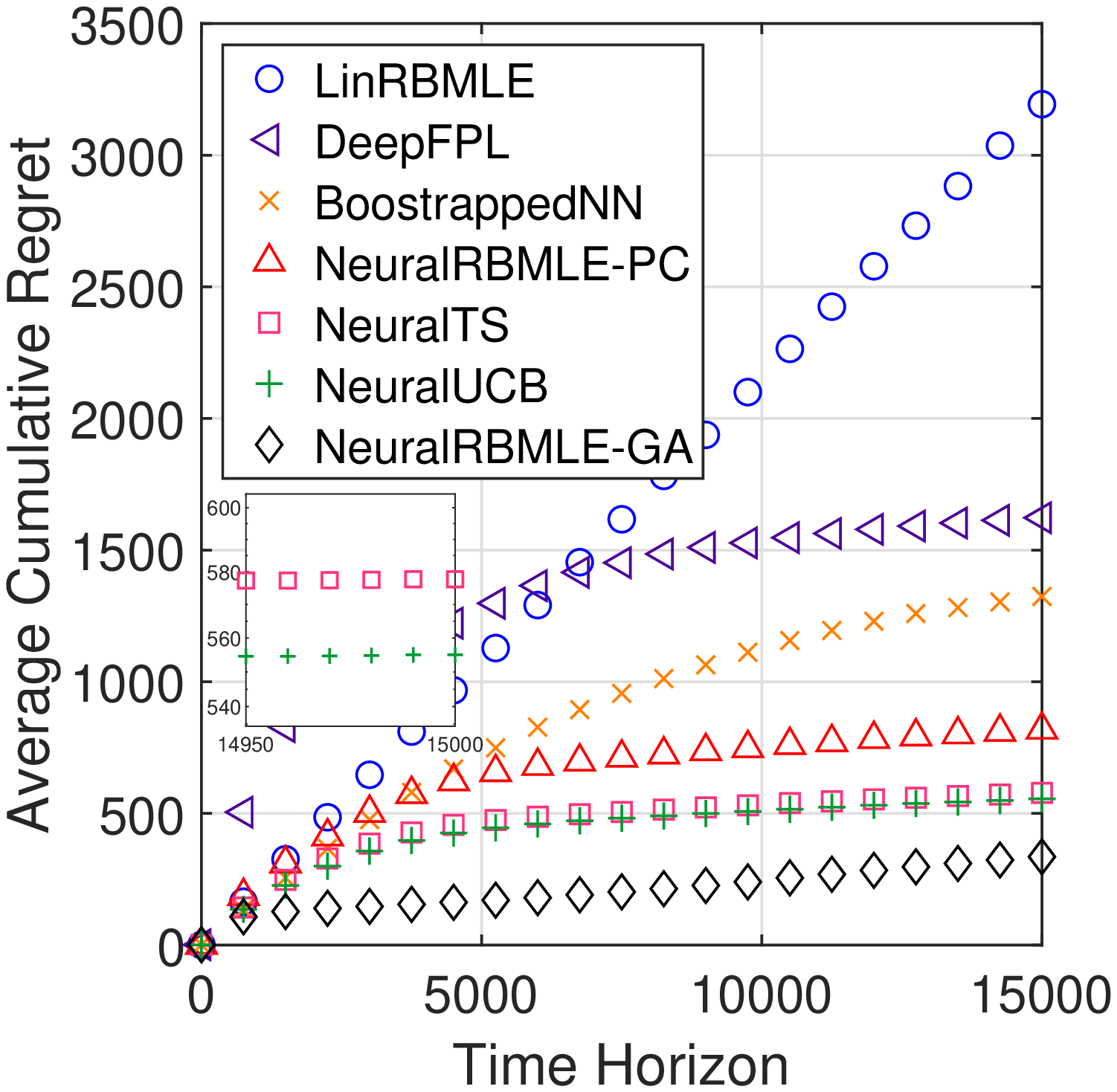}} \label{fig:real_shuttle} \\
    \mbox{(a) MNIST} &  \mbox{(b) Mushroom} &  \mbox{(c) Shuttle} 
\end{array}$
\vspace{-2mm}
\caption{Cumulative regret averaged over $10$ trials with $T = 1.5\times 10^4$.}
\label{fig:real_regret}
\vspace{-5mm}
\end{figure*}
\vspace{-3mm}
\section{Numerical Experiments}
\vspace{-2mm}
\label{section:exp}
%In order to evaluate the empirical regret performance of NeuralRBMLE, we provide the experiment result on the real-world datasets and compare NeuralRBMLE with other benchmarks. There are six kinds of real-world datasets including Adult, Covertype, Magic Telescope, Mushroom, Shuttle\cite{dua2017uci} and MNIST\cite{lecun2010mnist} we focus on.  
We evaluate the performance of NeuralRBMLE against the popular benchmark methods through experiments on various real-world datasets, including Adult, Covertype, Magic Telescope, Mushroom, Shuttle~\citep{asuncion2007uci}, and MNIST~\citep{lecun2010mnist}.
\hungyh{The detailed configuration of the experiments is provided in Appendix \ref{appendix:hyp}.}
All the results reported in this section are the average over $10$ random seeds.
\begin{figure*}[!ht]
\vspace{-8mm}
    \begin{minipage}[b]{0.75\linewidth}
        \centering
        $\begin{array}{c c c}
            \multicolumn{1}{l}{\mbox{\bf }} \hspace{-0.5cm}& \multicolumn{1}{l}{\mbox{\bf }} \hspace{-0.5cm}& \multicolumn{1}{l}{\mbox{\bf }} \\ 
            \scalebox{0.33}{\includegraphics[width=\textwidth]{./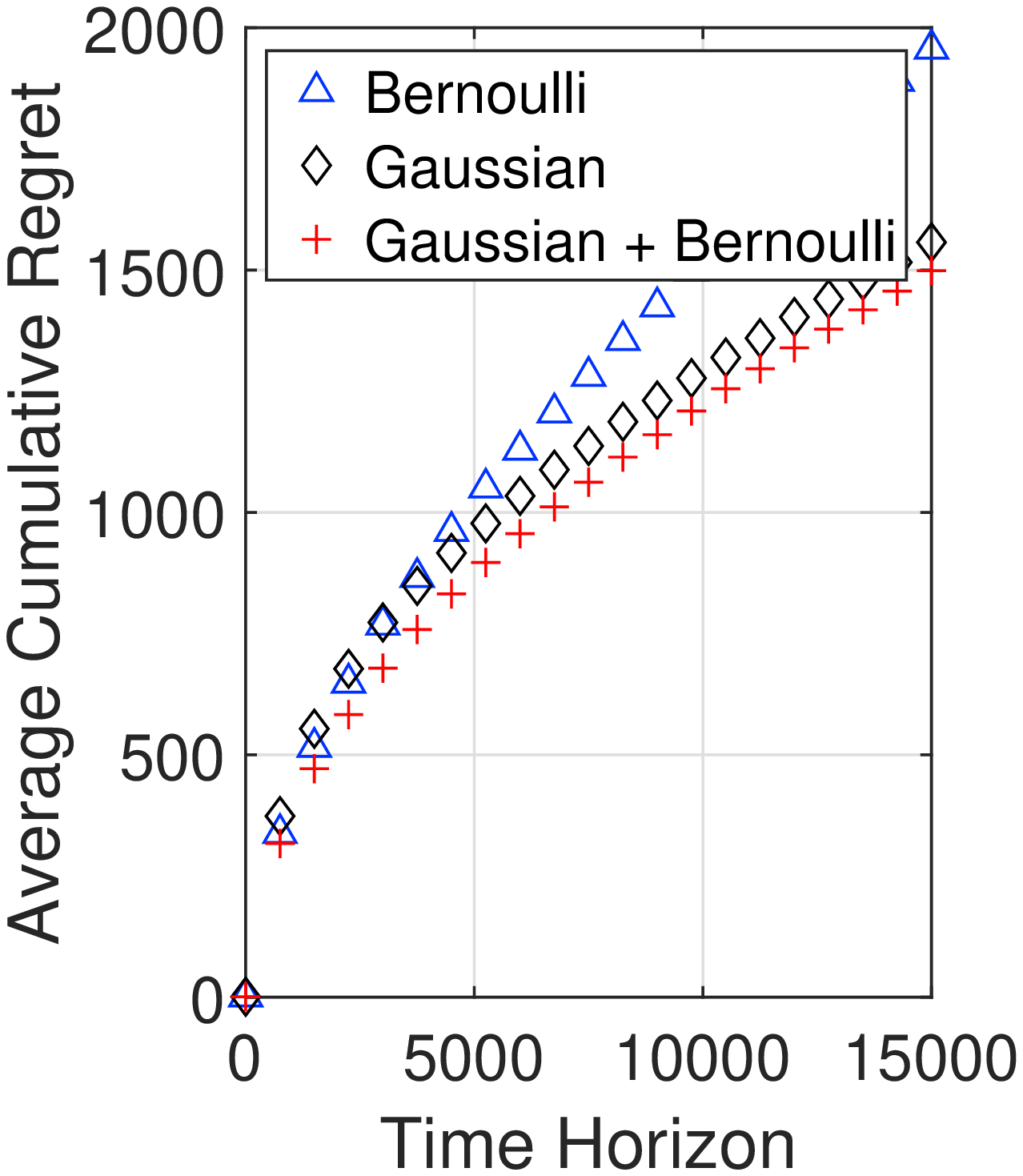}} \hspace{-0.5cm}& \scalebox{0.33}{\includegraphics[width=\textwidth]{./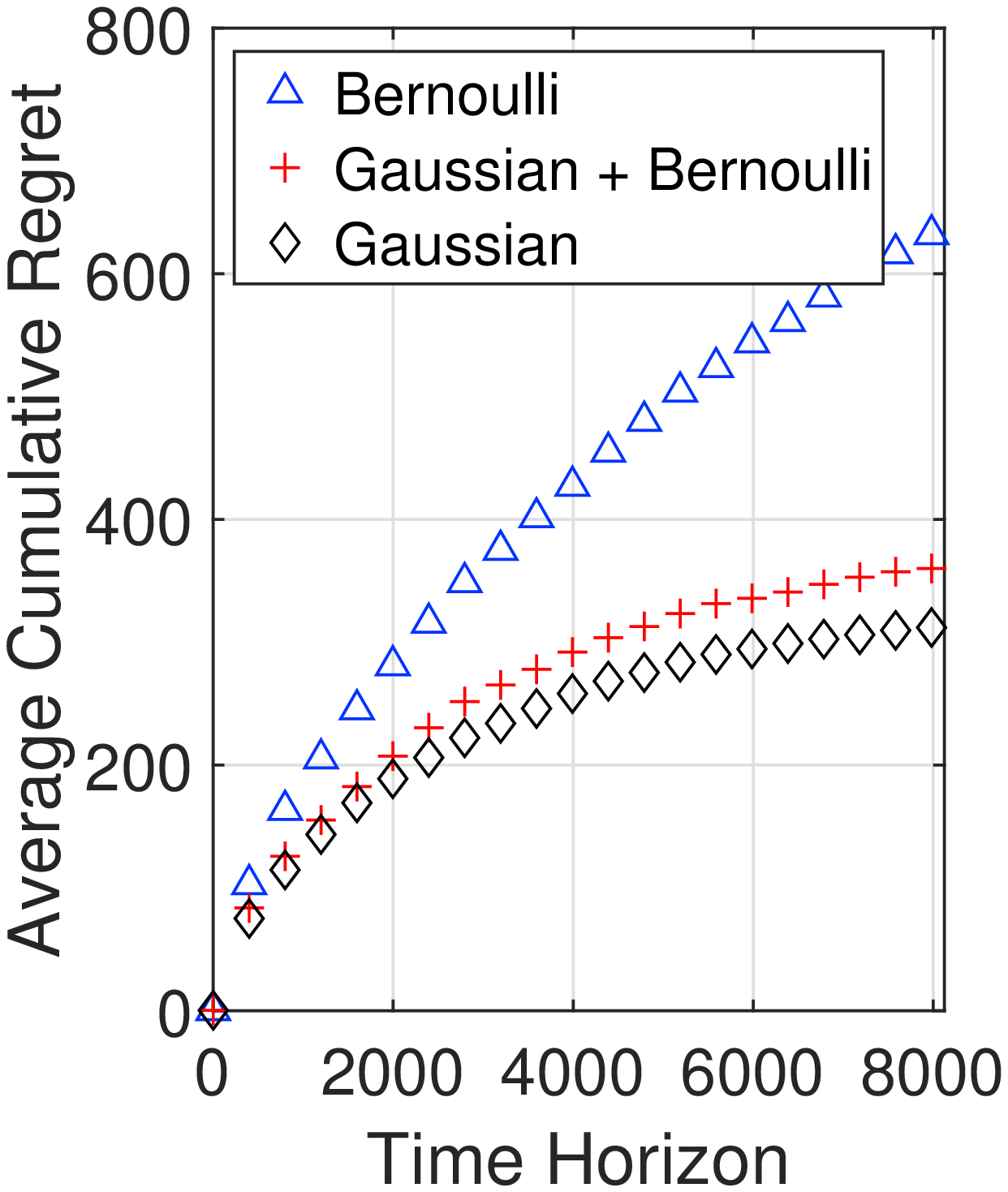}} \hspace{-0.5cm}&  \scalebox{0.33}{\includegraphics[width=\textwidth]{./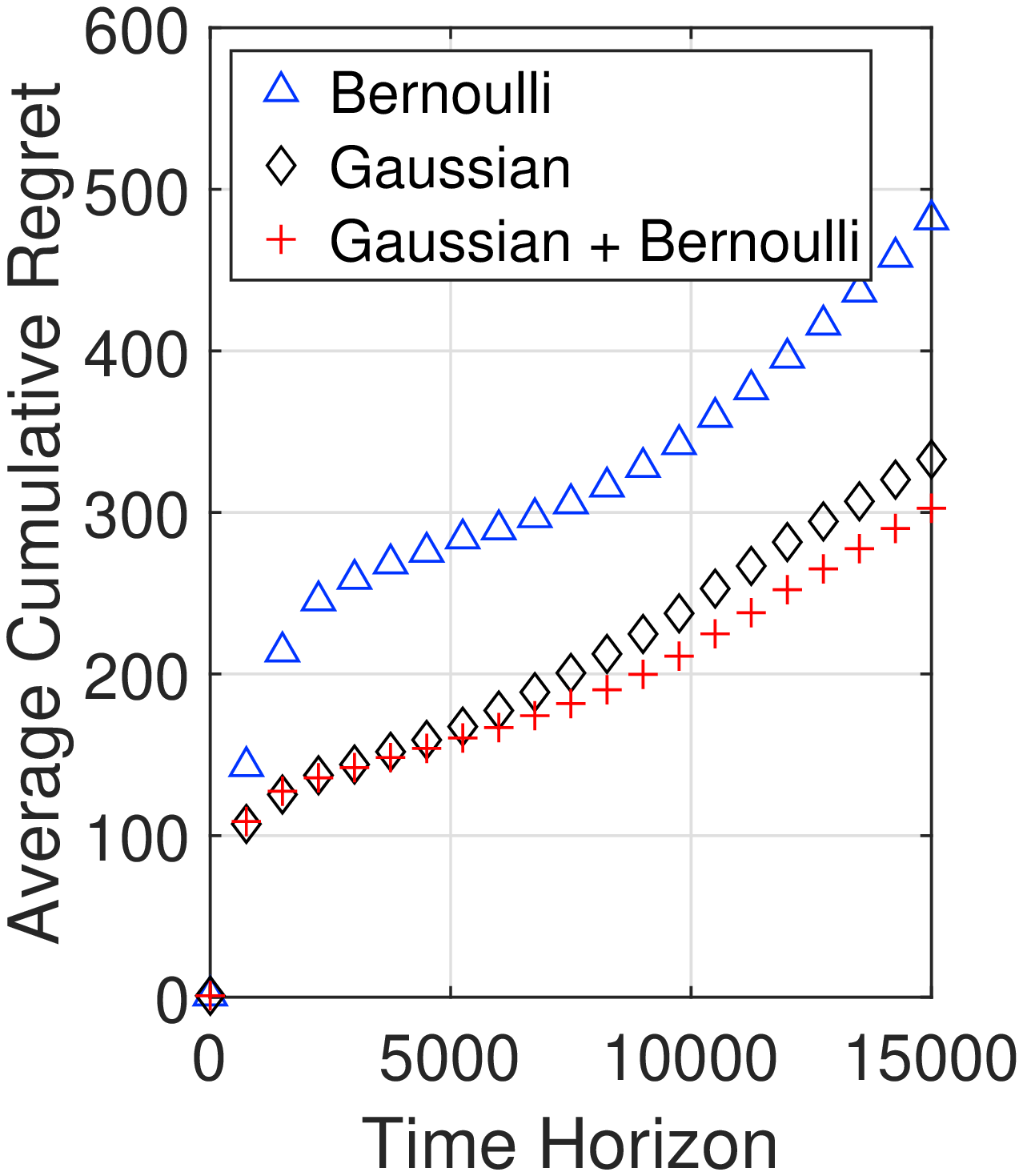}} \\
            \mbox{(a) MNIST} \hspace{-0.5cm}&  \mbox{(b) Mushroom} \hspace{-0.5cm}&  \mbox{(c) Shuttle}  
        \end{array}$
        \vspace{-2mm}
        \caption{Regret of the three variants of NeuralRBMLE-GA averaged\\ over $10$ trials with $T = 1.5\times 10^4$.}
        \vspace{-2mm}
        \label{fig:real_regret_3}
    \end{minipage}
    \hspace{-2mm}
    \begin{minipage}[b]{0.25\linewidth}
        \centering
        % \hspace{-0.3cm}
        $\begin{array}{c} 
            \multicolumn{1}{l}{\mbox{\bf }} \\ \scalebox{0.99}{\includegraphics[width=\textwidth]{./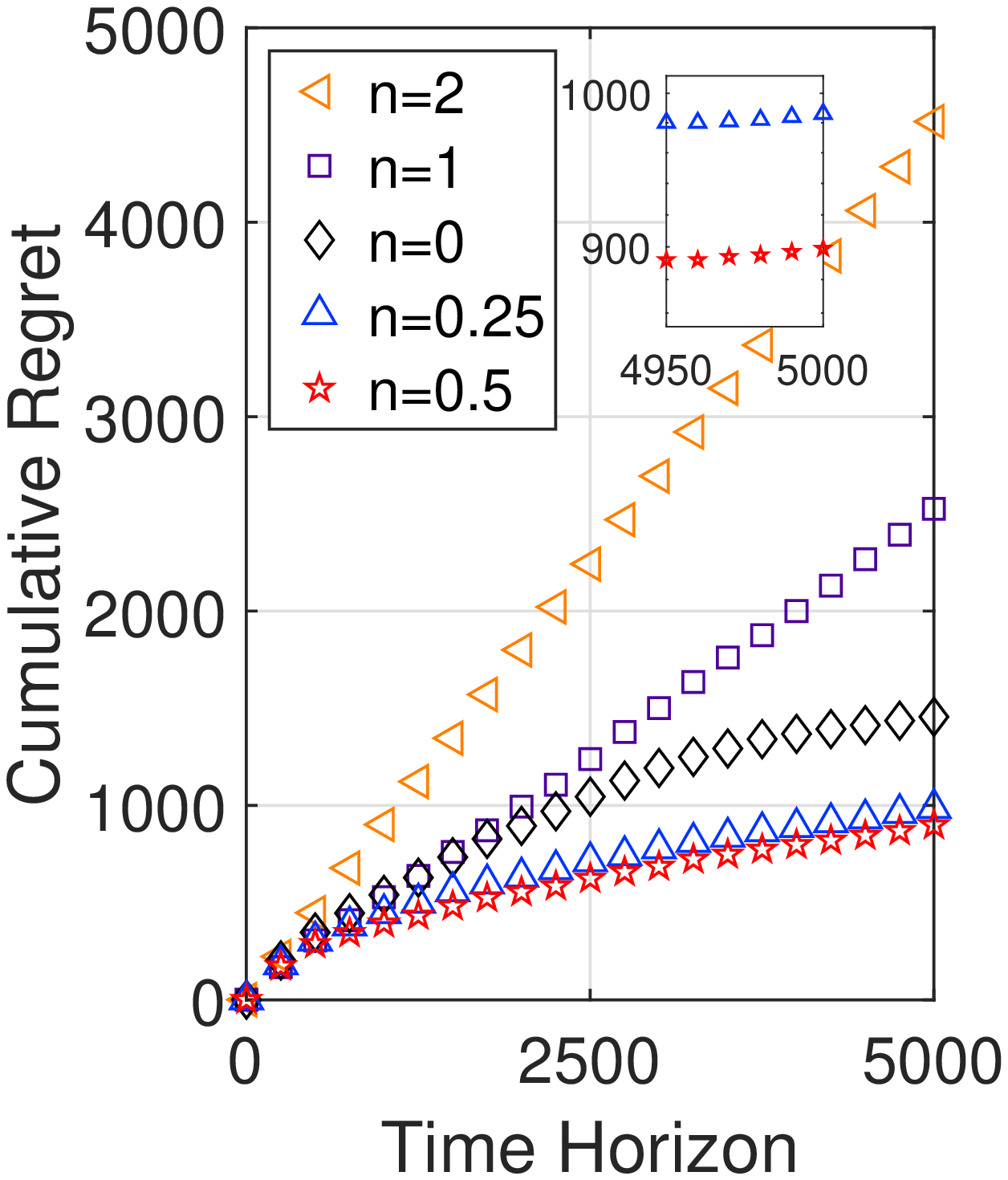}} \\  
            \mbox{(a) MNIST}  
        \end{array}$
        \vspace{-2mm}
        \caption{Regret under different $\alpha(t)$.}
        \vspace{-2mm}
        \label{fig:real_regret_3.5}
    \end{minipage}
\end{figure*}
%As described in (\ref{eq:exponential family log-likelihood}), the log-likelihood used in gradient ascent is suitable for all Exponential family distribution. Therefore, to showcase the flexibility of using various likelihood functions in NeuralRBMLE-GA, we design another experiment by adding an additional Bernoulli distribution on the reward: (i) If the original reward $r_t$ at time step $t$ is $1$, the true reward we receive will be sampled from $B(0.75)$. Otherwise, the reward will be sampled from $B(0.25)$. The similar problem setting is also use in \cite{kveton2020randomized}. In this experiment, we focus on MNIST according to its power of identifying the convergence of the regret. 

\textbf{Effectiveness of NeuralRBMLE.} Figure \ref{fig:real_regret}, Table \ref{table:mean regret}, and Figure \ref{fig:real_regret_2} (cf. Appendix \ref{appendix:exp}) show the empirical regret performance of each algorithm. We observe that NeuralRBMLE-GA outperforms most of the other benchmark methods under the six real-world datasets, and NeuralRBMLE-PC achieves empirical regrets competitive to the other state-of-the-art neural bandit algorithms. 
%Moreover, but significantly better than the other benchmarks, which perform exploration in parameter space. 
Table \ref{table:std regret} in Appendix \ref{appendix:exp} shows the standard deviation of the regrets over random seeds. 
Notably, we observe that the performance of NeuralRBMLE and LinRBMLE appears more consistent across different random seeds than other benchmarks. This suggests that the reward-biased methods empirically enjoy better robustness across different sample paths.
%which means that the reward bias can establish the robustness of RBMLE by encouraging more motivation to explore under different sample paths.

\textbf{NeuralRBMLE-GA with different likelihood functions.} As described in Sections \ref{section:rbmle:alg} and \ref{section:regret}, NeuralRBMLE-GA opens up a whole new family of algorithms with provable regret bounds. We provide the experiment of NeuralRBMLE-GA with the three different likelihood functions (i) $-\frac{1}{2}(f(\bx_s;\btheta) - r_s)^2$, (ii) $r_sf(\bx_s;\btheta) - \log{(1+e^{f(\bx_s;\btheta)})}$ and (iii) $-\frac{1}{2}(f(\bx_s;\btheta) - r_s)^2 + r_s f(\bx_s;\btheta) - \log{(1+e^{f(\bx_s;\btheta)})}$ are Gaussian, Bernoulli, and the mixture of Gaussian and Bernoulli likelihood, respectively, and all of them belong to some canonical exponential family. Figure \ref{fig:real_regret_3} shows the empirical regrets of NeuralRBMLE-GA under different likelihood functions. We can observe that these three variants of NeuralRBMLE-GA have empirical regret competitive to other neural bandit algorithms like NeuralUCB, NeuralTS and DeepFPL. NeuralRBMLE-GA with Gaussian likelihood and the mixture of Gaussian and Bernoulli likelihood are the best in empirical regret. This corroborates the flexibility of NeuralRBMLE-GA in selecting the surrogate likelihood.

\hungyh{\textbf{Computational efficiency NeuralRBMLE-GA}
NeuralRBMLE-GA achieves the same $\tilde{\mathcal{O}}(\sqrt{T})$ regret as NeuralUCB and NeuralTS, but \textit{without requiring the inverse of $\bZ \in \mathbb{R}^{p \times p}$}, where $p= m+md+m^2(L-1)$. The computation of $\bZ^{-1}$ can take a substantial amount of time even if the context dimension $d$ is only moderately large. 
To further verify this, we measure the average per-step computation time and the time spent on computing $\bZ^{-1}$ of the three algorithms on the Covertype dataset (with $d=378$, $m=100$, and one hidden layer) and show the computational advantage of NeuralRBMLE-GA. The result is provided in Table \ref{table:computation time} in Appendix \ref{section:additional experiments}.
}

\hungyh{\textbf{Ablation study for $\alpha(t)$.}
We provide an ablation study for $\alpha(t)$ by evaluating the empirical regret on the MNIST dataset for NeuralRBMLE-GA with $\alpha(t) = t^{n}$, where $n \in \{0,0.25,0.5,1,2\}$. The results are provided in Figure \ref{fig:real_regret_3.5} and corroborate the choice $\alpha(t)=\Theta(\sqrt{t})$ provided by the theory.
}

\vspace{-3mm}
\section{Related Work}
\vspace{-2mm}
To relax the linear realizability assumption, contextual bandits have been studied from the perspectives of using known general kernels \cite{valko2013finite} and the surrogate model provided by Gaussian processes \cite{chowdhury2017kernelized}. 
Moreover, one recent attempt is to leverage the representation power of deep neural networks to learn the unknown bounded reward functions \cite{gu2021batched,kveton2020randomized,riquelme2018deep,zahavy2019deep,zhou2020neural,zhang2020neural,zhu2021pure,xu2020neural,anonymous2022learning}.
%, NPR \citep{anonymous2022learning} and Neural-LinUCB \citep{xu2020neural}.
Among the above, the prior works most related to this paper are NeuralUCB \cite{zhou2020neural} and NeuralTS \cite{zhang2020neural}, which incorporate the construction of confidence sets of UCB and the posterior sampling technique of TS into the training of neural networks, respectively.
Another two recent preprints propose Neural-LinUCB \citep{xu2020neural} and NPR \citep{anonymous2022learning}, which incorporate shallow exploration in the sense of LinUCB and the technique of reward perturbation into neural bandits, respectively.
%Recently, Neural bandit is a new challenge where it consider that no assumption on reward function other than boundedness.  
%There are two algorithm based on Upper Con-fidence Bound (UCB) has solved the neural bandit problem: NeuralUCB \cite{zhou2019neural} and NeuralTS \cite{zhang2020neural}. 
Different from the above neural bandit algorithms, NeuralRBMLE achieves efficient exploration in a fundamentally different manner by following the guidance of the reward-bias term in the parameter space, instead of using posterior sampling, reward perturbation, or confidence bounds derived from concentration inequalities.
%That is, we apply RBMLE as the loss function and then updating a neural network.

\vspace{-3mm}
\section{Conclusion}
\vspace{-2mm}
\label{section:conclusion}
This paper presents NeuralRBMLE, the first bandit algorithm that extends the classic RBMLE principle in adaptive control to contextual bandits with general reward functions. Through regret analysis and extensive simulations on real-world datasets, we show that NeuralRBMLE achieves competitive regret performance compared to the state-of-the-art neural bandit algorithms.
One limitation of this work is that our analysis is built upon NTK, which typically requires a large hidden layer size.
In practice, NeuralRBMLE performs well with a relatively small neural network.
On the other hand, our model does not consider fairness in learning the unknown parameters. 
NeuralRBMLE may consistently discriminate against some specific groups of users based on their social, economic, racial, and sexual characteristics during the learning process. These open problems serve as critical research directions for future investigations.

%Despite this, NeuralRBMLE requires a neural network with extensive hidden layers, which is not practical in a real-world implementation. Furthermore, our model doesn't consider fairness in learning the unknown parameters. NeuralRBMLE may consistently discriminate against some specific groups of users based on their social, economic, racial, and sexual characteristics during the learning process. As a result, we expect that NeuralRBMLE would be a promising solution to real-world bandit problems with complex reward functions. 

\bibliography{reference}
\bibliographystyle{icml2022}

\appendix
\section*{Appendix}
\section{Supporting Lemmas for the Regret Analysis}
\label{Appendix: Regret Analysis}
\begin{definition} \normalfont \cite{jacot2018neural}.
For $i,j \in [TK]$, define
\begin{align}
    &{\bf\Sigma}^1_{i,j} := \langle \bx_i,\bx_j \rangle,\quad
    \textbf{A}^l_{i,j} := \begin{pmatrix}
        {\bf\Sigma}^l_{i,j} & {\bf\Sigma}^l_{i,j}\\
        {\bf\Sigma}^l_{i,j} & {\bf\Sigma}^l_{i,j}
    \end{pmatrix}\\
    &{\bf\Sigma}^{l+1}_{i,j} := 2\cdot \mathbb{E}_{(u,v) \sim \mathcal{N}({\bf 0},\textbf{A}^l_{i,j})}[\sigma(u)\sigma(v)]\\
    &\widetilde{\textbf{H}}_{i,j}^{1} := {\bf\Sigma}^1_{i,j}\\
    &\widetilde{\textbf{H}}_{i,j}^{l+1} := 2\widetilde{\textbf{H}}_{i,j}^{l} \mathbb{E}_{(u,v) \sim \mathcal{N}(0,\textbf{A}^l_{i,j})}[\dot{\sigma}(u)\dot{\sigma}(v)] + {\bf\Sigma}^{l+1}_{i,j},
\end{align}
where $\dot{\sigma}(\cdot)$ denotes the derivative of $\sigma(\cdot)$. Then, we define the neural tangent kernel matrix of an L-layer ReLU network on the training inputs $\{\bx_i\}_{i=1}^{TK}$ as
\begin{equation}
    \textbf{H} := \left[(\widetilde{\textbf{H}}_{i,j}^{L}+{\bf\Sigma}^L_{i,j})/2\right]_{TK\times TK}. \label{def:ntk}
\end{equation}
%\footnote{By Rademacher’s theorem, $\sigma(\cdot)$ is differentiable everywhere except for a set of Lebesgue measure zero.}
\end{definition}
Then, for $i \in [TK]$, we define the effective dimension $\widetilde{d}$ of the NTK matrix on contexts $\{\bx_i\}$ as \begin{align}
    \widetilde{d} := \frac{\log \text{det}(\textbf{I}+ \textbf{H}/\lambda)}{\log (1+TK/\lambda)}\label{def:effective dimension}.
\end{align}
The effective dimension $\widetilde{d}$ is a data-dependent measure first introduced by \cite{valko2013finite} to measure the actual underlying dimension of the set of observed contexts and later adapted to neural bandits \citep{zhou2020neural,zhang2020neural}, and $\widetilde{d}$ can be upper bounded in various settings \citep{chowdhury2017kernelized,zhang2020neural}.

\begin{lemma} \citep[Lemma 5.1]{zhou2020neural}
\label{lemma:theta*}
There exists a positive constant $E_{\ref{lemma:theta*}}$ and $\btheta^{*} \in \mathbb{R}^p$ such that for any $\delta \in (0,1)$ and all observed context vectors $\bx$, with probability at least $1-\delta$, under $m \geq E_{\ref{lemma:theta*}}T^4K^4L^6\log(T^2K^2L/\delta^4)$, we have
\begin{align}
    &h(\bx) = \langle \bg(\bx;\btheta_0) , \btheta^{*} - \btheta_0 \rangle \label{eq:theta* eq-1}\\
    &\sqrt{m}\lVert \btheta^{*} - \btheta_0 \rVert_2 \leq \sqrt{2\bh^\top\bH^{-1}\bh} \leq S, \label{eq:theta* eq-2}
\end{align}
where $\bH$ is defined in (\ref{def:ntk}), and $\bh := [h(\bx_{t,a})]_{a \in [K], t \in [T]}$. According to \cite{zhou2020neural}, $S$ is a constant when the reward function $h(\cdot)$ belongs to $\mathcal{H}$, the RKHS function space induced by NTK, and $\Vert h \Vert_{\mathcal{H}} \leq \infty$.
\end{lemma}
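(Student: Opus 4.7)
The plan is to follow the standard NTK construction (as in Zhou et al.\ 2020, Lemma 5.1). Order all observed context vectors into the single list $\{\bx_i\}_{i=1}^{TK}$ and form the gradient-feature matrix $\bG := [\bg(\bx_1;\btheta_0),\ldots,\bg(\bx_{TK};\btheta_0)] \in \mathbb{R}^{p \times TK}$ at initialization. The reward values are collected into $\bh := [h(\bx_i)]_{i \in [TK]}$. The key idea is that the column span of $\bG/\sqrt{m}$ gives a finite-dimensional surrogate of the NTK RKHS, so on the finite training set the function $h$ admits an exact linear representation in the gradient features, while the required weight vector has small $\ell_2$ norm thanks to $h \in \mathcal{H}$.

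First I would establish the empirical--population NTK concentration: by the random-initialization structure specified in Section~2.2 (the mirrored weight blocks, Gaussian entries with variance $2/m$), together with the standard Gaussian-initialization NTK computations of Jacot et al.\ and the quantitative version used by Cao--Gu and Arora et al., one obtains that with probability at least $1-\delta$, $\|\widehat{\bH} - \bH\|_2 \le \bH \cdot (\text{small})$, where $\widehat{\bH} := \bG^\top \bG/m$ is the empirical NTK. Specifically, for each fixed pair $(i,j)$ the quantity $\langle \bg(\bx_i;\btheta_0),\bg(\bx_j;\btheta_0)\rangle/m$ is an average of $\Theta(mL)$ approximately independent bounded contributions (using the block structure), so a Hoeffding-type bound plus a union over $T^2K^2$ pairs gives an entrywise error bounded by $\widetilde{O}(T^2K^2 L^3/\sqrt{m})$; the condition $m \ge E_{\ref{lemma:theta*}} T^4 K^4 L^6 \log(T^2K^2L/\delta^4)$ is then exactly what is needed to ensure $\widehat{\bH} \succeq \bH/2$.

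Second, I would construct $\btheta^*$ explicitly by solving a finite linear system on the training data. Define
\begin{equation}
\btheta^* := \btheta_0 + \frac{1}{m}\,\bG\,\widehat{\bH}^{-1}\,\bh.
\end{equation}
Then for every observed $\bx_i$ the inner product evaluates to
\begin{equation}
\langle \bg(\bx_i;\btheta_0),\btheta^*-\btheta_0\rangle = \frac{1}{m}\,\bg(\bx_i;\btheta_0)^\top \bG\,\widehat{\bH}^{-1}\bh = (\widehat{\bH}\,\widehat{\bH}^{-1}\bh)_i = h(\bx_i),
\end{equation}
which gives (\ref{eq:theta* eq-1}) exactly on the training contexts. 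The squared norm is
\begin{equation}
m\lVert \btheta^*-\btheta_0\rVert_2^2 = \frac{1}{m}\,\bh^\top \widehat{\bH}^{-1}\bG^\top \bG\,\widehat{\bH}^{-1}\bh = \bh^\top \widehat{\bH}^{-1}\bh \;\le\; 2\,\bh^\top \bH^{-1}\bh,
\end{equation}
using the Step-1 bound $\widehat{\bH} \succeq \bH/2$. The RKHS bound $\bh^\top \bH^{-1}\bh \le S^2/2$ follows from $h \in \mathcal{H}$ with $\lVert h\rVert_{\mathcal{H}} < \infty$ and the standard reproducing property, yielding (\ref{eq:theta* eq-2}).

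The main obstacle is Step~1, the quantitative NTK concentration at the prescribed width. Because the statement pertains to the gradient \emph{at initialization}—not along any training trajectory—one does not need Gram-matrix stability under gradient flow, but one does need an entrywise concentration that is tight enough for a union bound over all $T^2K^2$ pairs while respecting the symmetric initialization (which is what makes $f(\bx;\btheta_0) \equiv 0$ under Assumption~\ref{assumption:ntk}, so that the linear representation has no constant term). Handling the coupling introduced by the mirrored weight blocks cleanly is the delicate part; once that is done, all remaining steps are linear-algebraic bookkeeping.
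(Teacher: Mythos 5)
Your construction is correct and is essentially the same argument as the one behind the cited result: the paper offers no proof of its own here, simply invoking \citep[Lemma 5.1]{zhou2020neural}, whose proof is exactly the route you describe — set $\btheta^{*} = \btheta_0 + \frac{1}{m}\bG\widehat{\bH}^{-1}\bh$, use concentration of the empirical Gram matrix $\widehat{\bH}=\bG^\top\bG/m$ around $\bH$ (together with Assumption~\ref{assumption:ntk}, $\bH\succeq\lambda\bI$) to get $\widehat{\bH}\succeq\bH/2$ at the stated width, and read off the interpolation identity and the norm bound $\bh^\top\widehat{\bH}^{-1}\bh\le 2\bh^\top\bH^{-1}\bh\le S^2$. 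The only caveat is that your Step~1 heuristic (Hoeffding over ``approximately independent'' contributions) understates the actual difficulty of the Gram-matrix concentration, which in the source is imported from Arora et al.'s NTK perturbation bounds rather than proved by a direct Hoeffding argument — but since you explicitly defer that step to those references, the proposal is sound.
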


\begin{lemma}
\label{appendix_lemma_1}
% condition write together. x, \btheta, \tau
\citep[Lemma 4.1]{cao2019generalization} There exist positive constants $E_{\ref{appendix_lemma_1}}$, $E_{\ref{appendix_lemma_1},1}$ and $E_{\ref{appendix_lemma_1},2}$ such that for any $\delta \in (0,1)$ and all observed context vectors $\bx$, with probability at least $1-\delta$, if $\tau$ satisfies that 
\begin{align}
E_{\ref{appendix_lemma_1},1}m^{-\frac{3}{2}}L^{-\frac{3}{2}}\left[\log\left(TKL^2/\delta\right)\right]^{\frac{3}{2}} \leq \tau \leq E_{\ref{appendix_lemma_1},2} L^{-6}(\log{m})^{-\frac{3}{2}},
\end{align}
then for all $\btheta$ that satisfies $\left\Vert \btheta - \btheta_0 \right\Vert_2 \leq \tau$, we have
\begin{equation} 
    \big|f(\bx;\btheta) - \langle \bg(\bx; \btheta)\,,\btheta - \btheta_0 \rangle\big| \leq E_{\ref{appendix_lemma_1}} \tau^{\frac{4}{3}}L^3\sqrt{m\log{m}}.
\end{equation}
Notice that under the initialization steps in Section \ref{section:nn}, we have $f(\bx;\btheta_0) = 0$ for all observed context vectors $\bx$.  
\end{lemma}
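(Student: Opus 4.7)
The plan is to follow the NTK linearization argument of \cite{cao2019generalization} (essentially their Lemma 4.1) and adapt it to the gradient being evaluated at $\btheta$ rather than at $\btheta_0$. The overall strategy exploits the piecewise linearity of a ReLU network in its weights: for any fixed assignment of activation patterns to all hidden neurons, the network is an exactly affine function of $\btheta$, so the error $f(\bx;\btheta)-\langle\bg(\bx;\btheta),\btheta-\btheta_0\rangle$ is supported on those neurons whose ReLU activation flips between $\btheta_0$ and $\btheta$. Since, with high probability over the Gaussian initialization, only a small fraction of neurons can flip under a perturbation of size $\tau$, this error becomes a controllable higher-order quantity in $\tau$.

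First, I would assemble three auxiliary bounds that hold uniformly over all $\btheta$ with $\|\btheta-\btheta_0\|_2\leq\tau$ and all $\bx$ with $\|\bx\|_2\leq 1$, with probability at least $1-\delta$ over the initialization, provided $m$ is polynomial in $T,K,L,\log(1/\delta)$: (i) each hidden representation satisfies $\|\bh_l(\bx;\btheta_0)\|_2=\Theta(1)$ and $\|\bh_l(\bx;\btheta)-\bh_l(\bx;\btheta_0)\|_2=O(\tau\sqrt{L})$; (ii) the per-example gradient obeys $\|\bg(\bx;\btheta)\|_2=O(\sqrt{mL})$ and $\|\bg(\bx;\btheta)-\bg(\bx;\btheta_0)\|_2=O(\tau^{1/3}L^{5/2}\sqrt{m\log m})$; (iii) for each layer $l$, the number of neurons whose pre-activation sign differs between $\btheta_0$ and $\btheta$ is $\widetilde O(m\tau^{2/3})$. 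Item (iii) is the crucial sparsity bound, which I would prove via an anti-concentration estimate on Gaussian pre-activations followed by a Bernstein-type tail; items (i) and (ii) follow from standard norm-recursion arguments propagated through the $L$ layers, which is where the factor $L^{5/2}$ enters.

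Next, with these pieces in place, I would unroll the difference $f(\bx;\btheta)-\langle\bg(\bx;\btheta),\btheta-\btheta_0\rangle$ layer by layer by writing $f(\bx;\btheta)$ as a sum over directed paths from input to output. On paths whose activations all agree with those at $\btheta_0$, the contribution to $f(\bx;\btheta)$ matches exactly the corresponding contribution to $\langle\bg(\bx;\btheta),\btheta-\btheta_0\rangle$ by positive 1-homogeneity of ReLU, and only paths passing through a flipped neuron contribute to the error. Bounding the total via (i)--(iii) and Cauchy--Schwarz yields the advertised $O(\tau^{4/3}L^3\sqrt{m\log m})$. Since the statement uses $\bg(\bx;\btheta)$ rather than $\bg(\bx;\btheta_0)$, the switch contributes the extra term $|\langle\bg(\bx;\btheta)-\bg(\bx;\btheta_0),\btheta-\btheta_0\rangle|\leq\tau\cdot\|\bg(\bx;\btheta)-\bg(\bx;\btheta_0)\|_2=O(\tau^{4/3}L^{5/2}\sqrt{m\log m})$, which is absorbed in the stated bound.

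The main obstacle is the sparsity estimate (iii): a naive bound would give $\widetilde O(m\sqrt{m}\,\tau)$ flips and yield only a $\tau^2\sqrt{m}$ rather than $\tau^{4/3}$ rate. The sharper $2/3$ exponent in the flip count comes from anti-concentration: pre-activations at initialization concentrate on the scale $\Theta(1/\sqrt{m})$ under the Gaussian weights, and a uniform bound over the $\tau$-ball requires a covering argument that delicately trades off perturbation size against flip probability. The lower bound on $\tau$ in the hypothesis ensures we are in the regime where this counting is sharp, whereas the upper bound $\tau\leq E_{\ref{appendix_lemma_1},2}L^{-6}(\log m)^{-3/2}$ is what keeps the perturbed forward pass inside the Gaussian concentration neighborhood needed for (i)--(ii) to survive all $L$ layers.
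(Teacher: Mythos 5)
You should first note that the paper does not prove this lemma at all: it is imported verbatim as \citep[Lemma 4.1]{cao2019generalization} (itself resting on the semi-smoothness and activation-pattern machinery of \citep{allen2019convergence}), with the only paper-specific observation being that the antisymmetric initialization of Section \ref{section:nn} forces $f(\bx;\btheta_0)=0$, so the two-point statement of Cao--Gu specializes to the one-point form here by taking the pair $(\btheta,\btheta_0)$ with the gradient anchored at $\btheta$. Your sketch is, in substance, a reconstruction of how that cited result is actually proved, and you correctly identify the load-bearing ingredients: exact affineness of the ReLU network conditional on a fixed activation pattern, the $\widetilde{\mathcal O}(m\tau^{2/3})$ bound on the number of flipped neurons per layer obtained from Gaussian anti-concentration of the pre-activations at scale $\Theta(1/\sqrt m)$, and the forward/gradient stability recursions that control the contribution of the flipped paths; this is exactly where the $\tau^{4/3}$ (rather than $\tau^2\sqrt m$) rate comes from. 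Two cautions. First, your final ``absorb the gradient switch'' step is not free with the constants you quote: using the gradient-perturbation bound as it appears in this paper's own Lemma \ref{appendix_lemma_2}, $\lVert \bg(\bx;\btheta)-\bg(\bx;\btheta_0)\rVert_2 \le E\sqrt{\log m}\,\tau^{1/3}L^3\lVert\bg(\bx;\btheta_0)\rVert_2 = \mathcal O(\tau^{1/3}L^{7/2}\sqrt{m\log m})$, so the switch costs $\mathcal O(\tau^{4/3}L^{7/2}\sqrt{m\log m})$, which exceeds the advertised $L^3$ power; the clean way to avoid this is to invoke the two-point Cao--Gu inequality directly with the gradient already evaluated at $\btheta$, which is how the paper's statement should be read. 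Second, your item (iii) and the covering argument behind it are the genuinely hard part and are only asserted, not proved; as a blind proof of the lemma this remains a sketch, but as an account of why the citation is legitimate it is accurate and more informative than what the paper provides.
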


\begin{lemma}\label{appendix_lemma_2} \citep[Theorem 5]{allen2019convergence} There exist positive constants $E_{\ref{appendix_lemma_2}}$, $E_{\ref{appendix_lemma_2},1}$ and $E_{\ref{appendix_lemma_2},2}$ such that for any $\delta \in (0,1)$ and all observed context vectors $\bx$, with probability at least $1-\delta$, if $\tau$ satisfies that \begin{equation}               
    E_{\ref{appendix_lemma_2},1}m^{-\frac{3}{2}}L^{-\frac{3}{2}}\max\left\{\left(\log{m}\right)^{-\frac{3}{2}},\log\left(TK/\delta\right)^\frac{3}{2} \right\} \leq \tau \leq E_{\ref{appendix_lemma_2},2}L^{-\frac{9}{2}}(\log{m})^{-3},
\end{equation}
then for all $\btheta$ that satisfies $\left\Vert \btheta - \btheta_0 \right\Vert_2 \leq \tau$, we have
\begin{align}
    \left\Vert \bg(\bx; \btheta) - \bg(\bx; \btheta_0) \right\Vert_2 &\leq E_{\ref{appendix_lemma_2}}\sqrt{\log m}\tau^\frac{1}{3}L^3\left\Vert \bg(\bx; \btheta_0)\right\Vert_2.
\end{align}
\end{lemma}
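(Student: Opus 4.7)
The statement is the gradient-perturbation bound of Allen-Zhu–Li–Song; the plan is to follow their strategy, specialized to the structured ReLU network described in Section \ref{section:nn}. The first step is to expand $\bg(\bx;\btheta)=\nabla_{\btheta} f(\bx;\btheta)$ through backpropagation. Writing $\bh_l(\btheta)$ for the hidden activation at layer $l$, $\bD_l(\btheta)$ for the diagonal $0/1$ matrix encoding which ReLUs fire at layer $l$, and $\bb_l(\btheta)$ for the backward signal $\sqrt{m}\,\bD_l\bW_{l+1}^\top\bD_{l+1}\bW_{l+2}^\top\cdots\bW_L^\top$, the block of $\bg$ corresponding to $\bW_l$ takes the product form $\bb_l(\btheta)\,\bh_{l-1}(\btheta)^\top$. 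Thus, bounding $\Vert \bg(\bx;\btheta)-\bg(\bx;\btheta_0)\Vert_2$ reduces to controlling layer-by-layer the change in the forward signal $\bh_{l-1}$ and in the backward signal $\bb_l$ under a weight perturbation of size $\tau$.

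Next I would collect the standard concentration bounds at initialization: by the random initialization in Section \ref{section:nn} and the structured block form of $\bW_l$, standard Gaussian matrix concentration (e.g.\ covering arguments on the unit sphere) yields $\Vert \bh_l(\btheta_0)\Vert_2=\Theta(1)$, $\Vert \bW_l\Vert_{op}=O(1)$, and hence $\Vert \bb_l(\btheta_0)\Vert_2=O(\sqrt{m})$, each valid with probability $1-e^{-\Omega(m)}$ for appropriate constants. In particular $\Vert \bg(\bx;\btheta_0)\Vert_2=\Theta(\sqrt{m})$ on this event, which is the factor that will appear on the right-hand side of the claim.

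The two perturbation ingredients I would then establish are (i) a forward-stability bound $\Vert \bh_l(\btheta)-\bh_l(\btheta_0)\Vert_2=O(\tau L^{c_1}\sqrt{\log m})$ by telescoping $\bh_l=\sigma(\bW_l\bh_{l-1})$ and using $1$-Lipschitzness of $\sigma$, together with the bounds on $\Vert \bW_l\Vert_{op}$ and $\Vert \bh_{l-1}\Vert_2$; and (ii) an activation-pattern stability bound showing that the number of coordinates where $\bD_l(\btheta)\neq \bD_l(\btheta_0)$ is at most $O(m\tau^{2/3}L^{c_2})$ with high probability. Ingredient (ii) is the main obstacle: ReLU is not differentiable, so a na\"{\i}ve Lipschitz argument fails for $\bg$, and the whole $\tau^{1/3}$ exponent in the conclusion comes from bounding the flipped-activation count. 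The proof of (ii) uses anti-concentration: at random initialization each pre-activation has typical magnitude $\Theta(1/\sqrt{m})$ with Gaussian-like density, so the fraction of coordinates whose pre-activation lies within a window of size $\Delta$ is $O(\Delta\sqrt{m})$; combined with the forward-stability bound, perturbations up to $\tau$ can flip the sign of at most $O(m\tau^{2/3}L^{c_2})$ pre-activations after optimizing the window size.

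With (i) and (ii) in hand, I would back-substitute into the expression for $\bb_l(\btheta)-\bb_l(\btheta_0)$, splitting the telescoping sum into terms coming from weight perturbations (bounded by $\tau$ times an operator-norm product), from changed activation masks (bounded by the sparsity count times a per-coordinate magnitude), and from the coupling of the two; each factor contributes a polynomial in $L$, a $\tau^{1/3}$ from the activation flips, and a $\sqrt{\log m}$ from Gaussian concentration. Multiplying with the perturbed $\bh_{l-1}$ block and summing over layers yields
$\Vert \bg(\bx;\btheta)-\bg(\bx;\btheta_0)\Vert_2 \le E_{\ref{appendix_lemma_2}}\sqrt{\log m}\,\tau^{1/3}L^{3}\sqrt{m}$, and dividing by $\Vert \bg(\bx;\btheta_0)\Vert_2=\Theta(\sqrt{m})$ converts the factor $\sqrt{m}$ into the claimed $\Vert\bg(\bx;\btheta_0)\Vert_2$. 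A final union bound over the $TK$ input vectors, together with the lower and upper bounds on $\tau$ in the hypothesis (the lower bound makes the Gaussian concentration events dominate, the upper bound ensures the activation-flip count stays $o(m)$ so that operator-norm products remain $O(1)$), closes the argument at confidence $1-\delta$.
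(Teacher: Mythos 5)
The paper does not prove this lemma at all: it is imported verbatim as \citep[Theorem 5]{allen2019convergence}, so there is no in-paper argument to compare against. Your sketch is a faithful reconstruction of the proof strategy of that cited source — the backpropagation factorization $\bb_l\bh_{l-1}^\top$ of each gradient block, the initialization-scale bounds $\lVert\bh_l\rVert_2=\Theta(1)$ and $\lVert\bb_l\rVert_2=O(\sqrt{m})$, forward stability of the activations, and, crucially, the anti-concentration bound on the number of flipped ReLU patterns, which is where the $\tau^{1/3}$ exponent genuinely originates (as $\sqrt{\tau^{2/3}}$ from the sparsity of $\bD_l(\btheta)-\bD_l(\btheta_0)$). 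The one point worth making explicit is that converting the absolute bound $O(\sqrt{\log m}\,\tau^{1/3}L^3\sqrt{m})$ into the stated relative form requires a high-probability \emph{lower} bound $\lVert\bg(\bx;\btheta_0)\rVert_2=\Omega(\sqrt{m})$, not just the upper bound used elsewhere in the paper (Lemma \ref{lemma:gradient of f}); your $\Theta(\sqrt{m})$ claim covers this, but it is a separate concentration event that must be included in the union bound over the $TK$ contexts. With that noted, your outline is correct and is essentially the argument of the original reference rather than anything appearing in this paper.
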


\begin{lemma} \label{lemma:gradient of f}
\citep[Lemma B.3]{cao2019generalization} There exist positive constants $E_{\ref{lemma:gradient of f}}$, $E_{\ref{lemma:gradient of f},1}$ and $E_{\ref{lemma:gradient of f},2}$ for any $\delta \in (0,1)$ and all observed context vectors $\bx$, with probability at least $1-\delta$, if $\tau$ satisfies that
\begin{equation} 
    E_{\ref{lemma:gradient of f},1}m^{-\frac{3}{2}}L^{-\frac{3}{2}}[\log{TKL^2/\delta}]^{\frac{3}{2}} \leq \tau \leq E_{\ref{lemma:gradient of f},2}L^{-6}[\log{m}]^{-\frac{3}{2}},
\end{equation}
then for all $\btheta$ that satisfies $\left\Vert \btheta - \btheta_0 \right\Vert_2 \leq \tau$, we have
$\left\Vert \bg(\bx;\btheta) \right\Vert_{2} \leq E_{\ref{lemma:gradient of f}} \sqrt{mL}$.
\end{lemma}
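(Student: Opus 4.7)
The plan is to establish the bound $\|\bg(\bx;\btheta)\|_2 \leq E_{\ref{lemma:gradient of f}}\sqrt{mL}$ in two stages: first control the gradient norm at the random initialization $\btheta_0$, and then transport this bound to nearby $\btheta$ using the Lipschitz-type perturbation estimate already stated in Lemma \ref{appendix_lemma_2}. This two-step split is natural because Lemma \ref{appendix_lemma_2} bounds $\|\bg(\bx;\btheta) - \bg(\bx;\btheta_0)\|_2$ in terms of $\|\bg(\bx;\btheta_0)\|_2$, so the heavy lifting reduces to the initialization bound.

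For the first stage, I would expand $\bg(\bx;\btheta_0) = \nabla_{\btheta} f(\bx;\btheta_0)$ via back-propagation. The block of $\bg$ corresponding to $\bW_l$ has the form $\sqrt{m}\cdot \mathbf{b}_l\mathbf{x}_{l-1}^\top$, where $\mathbf{x}_{l-1}$ is the forward activation at layer $l-1$ and $\mathbf{b}_l$ is the backward vector obtained by applying $\bW_L,\bW_{L-1},\ldots,\bW_{l+1}$ together with the ReLU diagonal pattern matrices $\mathbf{D}_{L-1},\ldots,\mathbf{D}_l$. Three standard NTK-regime estimates then suffice, each holding with probability at least $1-\delta$ under the initialization of Section \ref{section:nn}: (i) $\|\bW_l\|_2 = O(1)$ for each intermediate layer, by Gaussian spectral-norm concentration applied to the block-doubled Gaussian initialization; (ii) $\|\mathbf{x}_l\|_2 = O(1)$ for all $l$, by induction over $l$ since the ReLU is $1$-Lipschitz and each $\bW_l$ has bounded operator norm; and (iii) $\|\mathbf{D}_l\|_2\leq 1$ trivially. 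Combining these via the recursion for $\mathbf{b}_l$ and summing the Frobenius-norm-squared contributions across the $L$ layers produces $\|\bg(\bx;\btheta_0)\|_2\leq C_0\sqrt{mL}$ with high probability; a union bound over all $TK$ observed contexts consumes exactly the $\log(TKL^2/\delta)$ budget reflected in the hypothesized lower bound on $\tau$.

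For the second stage, I would apply the triangle inequality together with Lemma \ref{appendix_lemma_2}: for any $\btheta$ with $\|\btheta-\btheta_0\|_2 \leq \tau$,
\begin{align}
    \|\bg(\bx;\btheta)\|_2 &\leq \|\bg(\bx;\btheta_0)\|_2 + \|\bg(\bx;\btheta) - \bg(\bx;\btheta_0)\|_2\\
    &\leq \|\bg(\bx;\btheta_0)\|_2\Bigl(1 + E_{\ref{appendix_lemma_2}}\sqrt{\log m}\,\tau^{1/3}L^3\Bigr).
\end{align}
The lower bound on $\tau$ in the hypothesis ensures that the precondition of Lemma \ref{appendix_lemma_2} is met, while the upper bound $\tau \leq E_{\ref{lemma:gradient of f},2}L^{-6}(\log m)^{-3/2}$ controls the multiplicative factor in parentheses up to an $L$-polynomial factor that gets absorbed into the final constant $E_{\ref{lemma:gradient of f}}$. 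Chained with the initialization estimate from stage one, this produces the stated $O(\sqrt{mL})$ bound.

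The main obstacle will be the first stage: carefully tracking the joint behavior of the forward activations $\{\mathbf{x}_l\}$, the activation patterns $\{\mathbf{D}_l\}$, and the weight matrices $\{\bW_l\}$ across all $L$ layers in order to extract the correct $\sqrt{L}$ scaling. This is standard NTK-bookkeeping (compare \cite{cao2019generalization,allen2019convergence}), but the precise propagation of the spectral-norm concentration estimates together with the union bound over the $TK$ observed contexts is what pins down both the polylogarithmic factors in the precondition on $\tau$ and the final constant $E_{\ref{lemma:gradient of f}}$.
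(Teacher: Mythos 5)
First, note that the paper does not actually prove this lemma: it is imported verbatim as \citep[Lemma B.3]{cao2019generalization}, so there is no in-paper argument to compare against. Your reconstruction is therefore judged on its own merits. Stage one of your plan (back-propagation decomposition of $\bg(\bx;\btheta_0)$ into blocks $\sqrt{m}\,\mathbf{b}_l\mathbf{x}_{l-1}^\top$, spectral-norm concentration for the doubled Gaussian initialization, $O(1)$ forward activations, and a union bound over the $TK$ contexts) is essentially the argument in the cited reference and correctly produces the $\sqrt{mL}$ scaling at initialization.

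Stage two, however, has a quantitative gap. Lemma \ref{appendix_lemma_2} gives
$\left\Vert \bg(\bx;\btheta)-\bg(\bx;\btheta_0)\right\Vert_2 \leq E_{\ref{appendix_lemma_2}}\sqrt{\log m}\,\tau^{1/3}L^3\left\Vert\bg(\bx;\btheta_0)\right\Vert_2$, and under the stated constraint $\tau \leq E_{\ref{lemma:gradient of f},2}L^{-6}(\log m)^{-3/2}$ the factor $\sqrt{\log m}\,\tau^{1/3}L^3$ is only bounded by a constant times $L$, not by a constant. Your multiplicative factor is therefore $1+O(L)$, and the chained bound degrades to $O(\sqrt{m}\,L^{3/2})$ rather than the claimed $E_{\ref{lemma:gradient of f}}\sqrt{mL}$. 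You cannot absorb this into $E_{\ref{lemma:gradient of f}}$, because the lemma's constants are meant to be absolute and the $L$-dependence of the bound is explicit; an extra $L$ would propagate into every place the lemma is invoked (e.g., (\ref{eq:upper bound of likelihood eq-6}), (\ref{eq:error between widetilde_theta_dagger and theta_0 eq-7}), (\ref{eq:range of theta_bar eq-4})). Two repairs are available: either tighten the hypothesis to $\tau \lesssim L^{-9}(\log m)^{-3/2}$ so that the perturbation factor is $O(1)$, or, as in the cited reference, skip the triangle inequality entirely and run the stage-one back-propagation bound directly at the perturbed $\btheta$, using the fact that within the ball $\left\Vert\btheta-\btheta_0\right\Vert_2\leq\tau$ the spectral norms of the perturbed weight products and the norms of the perturbed activations remain $O(1)$. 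The second route is what actually delivers $E_{\ref{lemma:gradient of f}}\sqrt{mL}$ under the $L^{-6}$ condition as stated.
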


\begin{comment}

\begin{lemma} \label{lemma:hessian of f}
(Theorem 3.2, \cite{liu2020linearity})\\\\
For all $\left\Vert \btheta - \btheta_0 \right\Vert_2 \leq \tau$,
 and context vector $x$, with high probability over the initialization in Section \ref{section:nn}, the fully connected neural network $f(x,\btheta)$ has Hessian spectral norm satisfies 
\begin{equation}
    \left\Vert \nabla_{\btheta}^{2} f(\bx;\btheta) \right\Vert_2 \leq  E_{\ref{lemma:hessian of f}}m^x
\end{equation}
%Notice that the network structure and initialization step in this paper are different to \cite{liu2020linearity}.
In this paper, the 
\end{lemma}

\begin{lemma}\label{appendix_lemma_5} (Lemma C.3, \cite{zhou2019neural})
    There exist positive constants $C_1$, $C_2$, and $C_3$ such that for any $\delta \in (0,1]$, if $\tau,\eta$ satisfy that
\begin{align}
   & C_1m^{\frac{-3}{2}}L^{\frac{-3}{2}}[\log(TKL^2/\delta)]^{\frac{3}{2}} \leq \tau \leq C_2\cdot \max\left\{L^{-6}[\log(m)^{\frac{-3}{2}}], m^{\frac{3}{8}}(\lambda\eta)^{\frac{3}{4}}L^{\frac{-9}{4}}t^{\frac{-3}{8}}[\log(m)]^{\frac{-3}{8}}\right\} \\
   & \eta \leq C_3(m\lambda + tmL)^{-1}
\end{align}
then with probability at least $1-\delta$, if $\left\Vert \widetilde{\btheta} - \btheta_0 \right\Vert_2 \leq \tau$, we have 
\begin{equation}
    \sum_{i=1}^{t} (f(\bx_i;\widetilde{\btheta}_i)-r_i)^2 \leq 4t
\end{equation}

\end{lemma} 
\end{comment}

\begin{lemma}
\label{Hoeffding's nequality}
(Hoeffding's inequality) For a sequence of independent, zero-mean, sub-Gaussian random variables $X_1,\dots,X_t$, we have
\begin{align}
    \mathbb{P}\left\{ \frac{1}{t}\bigg| \sum_{s = 1}^{t} X_s \bigg| \geq \delta_1 \right\} \leq 2\exp(-E_{\ref{Hoeffding's nequality}}\delta_1^2),
\end{align}
where $E_{\ref{Hoeffding's nequality}}$ is a constant that is independent of $\delta_1$ but dependent on the sub-Gaussianity parameters.
\begin{proof}
This is the direct result of Theorem 2.6.2 in \cite{vershynin2018high}.
\end{proof}
\end{lemma}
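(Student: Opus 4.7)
The plan is to carry out the standard Chernoff-bounding argument, specializing the general sub-Gaussian concentration machinery to the averaged sum $\frac{1}{t}\sum_{s=1}^{t} X_s$. The result is classical, so the proof is a matter of assembling known ingredients rather than introducing new ideas.

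First I would reduce the two-sided event to two one-sided events by a union bound:
\begin{equation}
    \mathbb{P}\!\left\{\tfrac{1}{t}\bigl|\sum_{s=1}^t X_s\bigr|\geq \delta_1\right\} \leq \mathbb{P}\!\left\{\sum_{s=1}^t X_s \geq t\delta_1\right\} + \mathbb{P}\!\left\{-\sum_{s=1}^t X_s \geq t\delta_1\right\},
\end{equation}
and treat each tail identically (the $-X_s$ are also zero-mean sub-Gaussian with the same parameters). For each tail, I would apply Markov's inequality to $\exp(\lambda \sum_s X_s)$ with $\lambda>0$, giving the Chernoff bound $\mathbb{P}(\sum_s X_s\geq t\delta_1)\leq e^{-\lambda t \delta_1}\,\mathbb{E}[\exp(\lambda\sum_s X_s)]$.

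Next I would exploit the two structural hypotheses: (i) independence, which lets the joint MGF factor as $\mathbb{E}[\exp(\lambda\sum_s X_s)] = \prod_{s=1}^t \mathbb{E}[\exp(\lambda X_s)]$; and (ii) the defining sub-Gaussianity property, which gives $\mathbb{E}[\exp(\lambda X_s)] \leq \exp(\lambda^2 \nu_s^2 / 2)$ for some parameter $\nu_s$ depending only on the distribution of $X_s$. Let $\nu^2 := \max_s \nu_s^2$ (a constant of the sub-Gaussianity parameters, independent of $\delta_1$ and $t$). Combining these yields $\mathbb{P}(\sum_s X_s\geq t\delta_1)\leq \exp\!\left(-\lambda t\delta_1 + \tfrac{\lambda^2 t \nu^2}{2}\right)$, and the optimal choice $\lambda^\star = \delta_1/\nu^2$ gives a tail bound of $\exp(-t\delta_1^2/(2\nu^2))$.

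Finally, using $t\geq 1$ I would absorb the $t$ factor into the constant and set $E_{\ref{Hoeffding's nequality}} := 1/(2\nu^2)$, so that each one-sided tail is bounded by $\exp(-E_{\ref{Hoeffding's nequality}}\,\delta_1^2)$, and the union bound yields the stated factor of $2$. There is no genuine obstacle here: the only choice to be made is how to package the sub-Gaussian parameter into the constant $E_{\ref{Hoeffding's nequality}}$ so that the dependence on $\delta_1$ is isolated, as the lemma demands. Alternatively, the entire argument can be invoked as a direct consequence of the general sub-Gaussian concentration theorem (e.g.\ Theorem 2.6.2 in Vershynin, 2018), which is exactly the route taken by the author.
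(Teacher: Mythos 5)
Your proposal is correct and ultimately takes the same route as the paper, which simply cites Theorem 2.6.2 of Vershynin (2018); your explicit Chernoff/MGF derivation is just that theorem's standard proof written out, and you correctly handle the one subtlety of the stated form (dropping the factor of $t$ in the exponent via $t\geq 1$, which the lemma's $t$-free bound requires). No gaps.
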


\begin{lemma}
\label{Bernstein's nequality}
(Bernstein's inequality) For a sequence of independent, zero-mean, sub-exponential random variables $X_1,\dots,X_t$, we have
\begin{align}
    \mathbb{P}\left\{ \frac{1}{t}\sum_{s = 1}^{t} X_s \geq \delta_2 \right\} \leq 2\exp(-\min\{E_{\ref{Bernstein's nequality},1}\delta_2,E_{\ref{Bernstein's nequality},2}\delta^2_2\}).
\end{align}
where $E_{\ref{Bernstein's nequality},1}$, $E_{\ref{Bernstein's nequality},2}$ are constants that are independent of $\delta_2$ but dependent on the sub-exponentiality parameters.
\begin{proof}
This is the direct result of Theorem 2.8.1 in \cite{vershynin2018high}.
\end{proof}
\end{lemma}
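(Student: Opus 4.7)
The plan is to follow the standard Chernoff-cum-MGF recipe used for concentration of sub-exponential sums, which is precisely the route taken in the reference (Theorem~2.8.1 of Vershynin). First I would fix the sub-exponential parameters: since each $X_s$ is zero-mean sub-exponential, there exist constants $\nu,\alpha>0$ (depending only on the sub-exponentiality parameters, not on $\delta_2$) such that $\mathbb{E}[e^{\lambda X_s}]\le \exp(\nu^2\lambda^2/2)$ for all $|\lambda|\le 1/\alpha$. These $\nu,\alpha$ will ultimately be absorbed into the constants $E_{\ref{Bernstein's nequality},1}$ and $E_{\ref{Bernstein's nequality},2}$.

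Next I would apply Markov's inequality to $\exp(\lambda\sum_s X_s)$ for an arbitrary $\lambda\in(0,1/\alpha]$ and exploit independence to factor the MGF:
\begin{equation}
\mathbb{P}\!\left\{\tfrac{1}{t}\sum_{s=1}^t X_s\ge \delta_2\right\}\le e^{-\lambda t\delta_2}\prod_{s=1}^t\mathbb{E}[e^{\lambda X_s}]\le \exp\!\left(-\lambda t\delta_2+\tfrac{t\nu^2\lambda^2}{2}\right).
\end{equation}
The core of the argument is then to optimize the exponent over $\lambda\in(0,1/\alpha]$. The unconstrained minimizer is $\lambda^\star=\delta_2/\nu^2$; whether it is admissible splits into two regimes.

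In the small-deviation regime $\delta_2\le \nu^2/\alpha$, $\lambda^\star$ is feasible and substituting yields the Gaussian-type bound $\exp(-t\delta_2^2/(2\nu^2))$. In the large-deviation regime $\delta_2>\nu^2/\alpha$, the optimum on the feasible interval is attained at $\lambda=1/\alpha$, giving $\exp(-t\delta_2/(2\alpha))$ after noting that $\nu^2/(2\alpha^2)<\delta_2/(2\alpha)$ in this regime so the quadratic term is absorbed. Identifying $E_{\ref{Bernstein's nequality},2}=1/(2\nu^2)$ and $E_{\ref{Bernstein's nequality},1}=1/(2\alpha)$ and taking the worse of the two bounds produces exactly the stated inequality; the factor $2$ on the right-hand side is cosmetic slack that accommodates a symmetric two-sided statement if desired.

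There is no genuine obstacle here since this is a classical textbook result; the only care point is to keep the constants clean and to verify that the two regimes glue together so that $\min\{E_1\delta_2,E_2\delta_2^2\}$ is a valid unified bound rather than needing a piecewise statement. Consequently the whole proof reduces to one display of Chernoff and one case split on whether $\delta_2\alpha/\nu^2\le 1$, after which the lemma follows by direct citation of Theorem~2.8.1 in \cite{vershynin2018high}.
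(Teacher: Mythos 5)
Your proposal is correct and is essentially the same argument as the paper's, which simply cites Theorem~2.8.1 of \cite{vershynin2018high}; your Chernoff--MGF derivation with the two-regime optimization over $\lambda$ is exactly the standard proof of that cited theorem. Note that your derivation actually yields the stronger exponent $-t\min\{E_{\ref{Bernstein's nequality},1}\delta_2,E_{\ref{Bernstein's nequality},2}\delta_2^2\}$, which implies the lemma as stated since $t\geq 1$.
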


\begin{lemma}
\label{lemma:upper bound of r}
For all $t \in [T_0,T]$, with probability at least $1 - \frac{1}{t^2}$, we have
\begin{align}
    \bigg|\sum_{s = 1}^{t} r_s\bigg| &\leq E_{\ref{lemma:upper bound of r},1}t\sqrt{\log{t}}, \label{eq:upper bound of r eq-1}\\ 
    \sum_{s=1}^{t} r_s^2 &\leq E_{\ref{lemma:upper bound of r},2}t\log{t}, \label{eq:upper bound of r eq-2}
\end{align}
where $T_0 = \inf\limits_{s} \left\{ s \in \mathbb{N}:E_{\ref{Bernstein's nequality},1}\bigg(\frac{2\log (2s)}{E_{\ref{Bernstein's nequality},1}}\bigg) \leq E_{\ref{Bernstein's nequality},2}\bigg(\frac{2\log (2s)}{E_{\ref{Bernstein's nequality},1}}\bigg)^2, s \geq 2 \right\}$ and $E_{\ref{lemma:upper bound of r},1},E_{\ref{lemma:upper bound of r},2}$ are absolute constants.
\vspace{-5mm}
\begin{proof}
By Lemma \ref{Hoeffding's nequality} and  letting $\delta_1 = \sqrt{\frac{2\log{(2t)}}{E_{\ref{Hoeffding's nequality}}}}$, for the sub-Gaussian noise $\{\epsilon_s\}_{s=1}^{t}$ which are assumed to be conditionally independent given the contexts $\{\bx_1,\cdots, \bx_t\}$, we have
\begin{align}
    \mathbb{P}\left\{\frac{1}{t}\bigg|\sum_{s = 1}^{t} \epsilon_s \bigg|\leq \sqrt{\frac{2\log{(2t)}}{E_{\ref{Hoeffding's nequality}}}} \right\} &\geq 1 - 2\text{exp}(-E_{\ref{Hoeffding's nequality}}\frac{2\log{(2t)}}{E_{\ref{Hoeffding's nequality}}}) \label{eq:upper bound of r eq-3}\\
    & = 1 - \frac{1}{2t^2} \label{eq:upper bound of r eq-4}.
\end{align}
Then, choosing $E_{\ref{lemma:upper bound of r},1} = \sqrt{\frac{4}{E_{\ref{Hoeffding's nequality}}}}$ and $T_0 \geq 2 $, we have proved (\ref{eq:upper bound of r eq-1}).
Similarly, letting $\delta_2 = \frac{2\log (2t)}{E_{\ref{Bernstein's nequality},1}}$, for all $t$ that satisfies \begin{equation}
    E_{\ref{Bernstein's nequality},1}\bigg(\frac{2\log (2t)}{E_{\ref{Bernstein's nequality},1}}\bigg) \leq E_{\ref{Bernstein's nequality},2}\bigg(\frac{2\log (2t)}{E_{\ref{Bernstein's nequality},1}}\bigg)^2, \label{eq:upper bound of r eq-5}
\end{equation}
we have
\begin{align}
    \mathbb{P}\left\{ \frac{1}{t}\sum_{s = 1}^{t} \epsilon^2_s \leq \delta_2 \right\} \geq & 1 -  2\text{exp}(-\min\{E_{\ref{Bernstein's nequality},1}\delta_2, E_{\ref{Bernstein's nequality},2}\delta^2_2\}) \label{eq:upper bound of r eq-6}\\
    = & 1 - 2\text{exp}(-E_{\ref{Bernstein's nequality},1}\frac{2\log(2t)}{E_{\ref{Bernstein's nequality},1}}) \label{eq:upper bound of r eq-7} \\
    = & 1 - \frac{1}{2t^2} \label{eq:upper bound of r eq-8},
\end{align}
where (\ref{eq:upper bound of r eq-6}) holds by Lemma \ref{Bernstein's nequality} with $\{\epsilon_s^2\}^t_{s=1}$ are sub-exponential random variables. (\ref{eq:upper bound of r eq-7}) holds by (\ref{eq:upper bound of r eq-5}). Then, choosing $E_{\ref{lemma:upper bound of r},2 } = \frac{4}{E_{\ref{Bernstein's nequality},1}}$ and $T_0 \geq 2$, we have proved (\ref{eq:upper bound of r eq-2}). By combining (\ref{eq:upper bound of r eq-4}) and (\ref{eq:upper bound of r eq-8}), we complete the proof by the boundedness of $h(\cdot)$.
\end{proof}

\end{lemma}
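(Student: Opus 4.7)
The plan is to decompose $r_s = h(\bx_s) + \epsilon_s$, use the deterministic bound $|h(\bx_s)| \le 1$ to handle the signal part cheaply, and then apply the two concentration tools already listed as Lemmas \ref{Hoeffding's nequality} and \ref{Bernstein's nequality} to control the noise contribution. Since the noise terms $\{\epsilon_s\}$ are zero-mean and (conditionally) sub-Gaussian with respect to the context history, the partial sum $\sum_{s=1}^t \epsilon_s$ falls squarely in the scope of Hoeffding, while $\{\epsilon_s^2\}$ is sub-exponential and calls for Bernstein. The high-probability rates $\sqrt{\log t}$ and $\log t$ in the statement are exactly the scales that Hoeffding and Bernstein produce when one forces the failure probability to be $\mathcal{O}(t^{-2})$.

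For inequality \eqref{eq:upper bound of r eq-1}, I would first write $\sum_s r_s = \sum_s h(\bx_s) + \sum_s \epsilon_s$, bound the first sum by $t$ using boundedness of $h$, and apply Lemma \ref{Hoeffding's nequality} with deviation $\delta_1 = c\sqrt{(\log t)/t}$ chosen so that the failure probability $2\exp(-E_{\ref{Hoeffding's nequality}}\delta_1^2 t)$ is at most $1/(2t^2)$. This yields $|\sum_s \epsilon_s| \lesssim t\sqrt{\log t}$, and combining with the trivial bound on $\sum_s h(\bx_s)$ gives the claim (the constant $E_{\ref{lemma:upper bound of r},1}$ absorbs both contributions for $t \ge 2$).

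For inequality \eqref{eq:upper bound of r eq-2}, I would expand $\sum_s r_s^2 = \sum_s h(\bx_s)^2 + 2\sum_s h(\bx_s)\epsilon_s + \sum_s \epsilon_s^2$. The first sum is at most $t$. The cross term is again a sub-Gaussian sum (with bounded multipliers $h(\bx_s)$) and can be controlled by Hoeffding at the $t\sqrt{\log t}$ scale, which is dominated by $t\log t$. The main piece is $\sum_s \epsilon_s^2$: since $\epsilon_s^2$ is sub-exponential, I would invoke Lemma \ref{Bernstein's nequality} with threshold $\delta_2 = \Theta(\log t)$ large enough that the linear term in the Bernstein exponent dominates, so the failure bound becomes $2\exp(-E_{\ref{Bernstein's nequality},1}\delta_2) \le 1/(2t^2)$. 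The definition of $T_0$ in the statement is precisely the index above which the linear regime of Bernstein is valid, so this step forces the restriction $t \ge T_0$.

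Finally, a union bound over the two failure events gives the desired joint probability $1 - t^{-2}$, and absorbing the cross-term into the constant $E_{\ref{lemma:upper bound of r},2}$ yields both bounds simultaneously. The only mildly delicate step is (c) the choice of $\delta_2$ ensuring we stay in Bernstein's sub-exponential linear regime for all $t \ge T_0$; everything else is a clean application of Hoeffding/Bernstein plus the boundedness of $h$, so I do not expect any serious obstacle beyond bookkeeping of constants.
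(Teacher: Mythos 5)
Your proposal is correct and follows essentially the same route as the paper: decompose $r_s = h(\bx_s) + \epsilon_s$, absorb the bounded signal part deterministically, apply Hoeffding (Lemma \ref{Hoeffding's nequality}) to $\sum_s \epsilon_s$ and Bernstein (Lemma \ref{Bernstein's nequality}) to $\sum_s \epsilon_s^2$ with deviations of order $\sqrt{\log t}$ and $\log t$ tuned so each event fails with probability $O(t^{-2})$, with $T_0$ marking exactly the regime where the linear branch of the Bernstein exponent dominates. The only cosmetic differences are that your expansion of $r_s^2$ carries a cross term requiring a third concentration event (the paper instead invokes boundedness of $h$ directly, e.g.\ via $(a+b)^2 \le 2a^2+2b^2$, keeping only two events), and your Hoeffding deviation $\delta_1 = c\sqrt{(\log t)/t}$ presumes the standard exponent with a factor of $t$ rather than the weaker form stated in Lemma \ref{Hoeffding's nequality}; both discrepancies only perturb constants and do not affect the conclusion.
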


\section{Supporting Lemmas for the Proof of Theorem \ref{theorem:regret of NeuralRBMLE-GA}}
Define 
\begin{equation}
    \mathcal{L}^{\dagger}_{t,a}(\btheta) := \sum_{s=1}^{t-1} \Big( b(f(\bx_s;\btheta)) - r_sf(\bx_s;\btheta) \Big) + \frac{m\lambda}{2} \left\Vert \btheta - \btheta_{0} \right\Vert_2^2 - \alpha(t)f(\bx_{t,a};\btheta) \label{eq:loss function of GA}
\end{equation}
as the loss function we used in Algorithm $\ref{alg:NeuralRBMLE-GA}$. Then,
according to the update rule of gradient descent with step size $\eta$ and total number of steps $J$ used in Algorithm \ref{alg:NeuralRBMLE-GA}, we formally define the parameter $\widetilde{\btheta}^{\dagger}_{t,a}$ here. Define 
\begin{align}
    \widetilde{\btheta}^{\dagger}_{t,a} &= \widetilde{\btheta}^{\dagger(J)}_{t,a} \\
    \widetilde{\btheta}^{\dagger(j)}_{t,a} &= \widetilde{\btheta}^{\dagger(j-1)}_{t,a} - \eta\nabla_{\btheta}\mathcal{L}^{\dagger}_{t,a}(\widetilde{\btheta}^{\dagger(j-1)}_{t,a}) \label{eq: gd rule of theta tilde dagger}\\
    \widetilde{\btheta}^{\dagger(0)}_{t,a} &= \btheta_{0}.
\end{align}
Then, the following lemma shows the upper bound of the log-likelihood under $\widetilde{\btheta}^{\dagger(j)}_{t,a}$.

\begin{lemma}
\label{lemma:upper bound of likelihood}
%\normalfont
There exist a positive constant $E_{\ref{lemma:upper bound of likelihood},1}$ and $E_{\ref{lemma:upper bound of likelihood},2}$, which are independent to $m$ and $t$, such that for all $j \in [J]$, $a \in [K]$, $t \in [T]$ and under
\begin{align}
     m \geq  E_{\ref{appendix_lemma_1}}^{6} E_{\ref{lemma:error between widetilde_theta and theta}}^6 T^7\lambda^{-8}L^{18}(\log{T})^{-4}(\log m)^3, \label{eq:upper bound of likelihood eq-1}
\end{align}
if $\alpha(t) = \sqrt{t}$ and $\widetilde{\btheta}^{\dagger (j)}_{t,a}$ that satisfies 
\begin{align}
    \left\Vert \widetilde{\btheta}^{\dagger (j)}_{t,a} - \btheta_0 \right\Vert_{2} \leq E_{\ref{lemma:upper bound of likelihood},1}\sqrt{\frac{t\log{t}}{m\lambda^2}}, \label{eq:upper bound of likelihood eq-2}
\end{align}
then we have 
\begin{equation}
    \sum_{s=1}^{t} \Big( b(f(\bx_s;\widetilde{\btheta}^{\dagger (j)}_{t,a})) - r_sf(\bx_s;\widetilde{\btheta}^{\dagger (j)}_{t,a}) \Big)\leq E_{\ref{lemma:upper bound of likelihood},2}t\sqrt{\log{t}} \label{eq:upper bound of likelihood eq-3}
\end{equation}
\begin{proof}
According to the gradient update rule, for all $j \in [J]$, we have
\begin{align}
    \mathcal{L}^{\dagger}_{t,a}(\widetilde{\btheta}^{\dagger (j)}_{t,a}) - \frac{m\lambda}{2} \left\Vert \widetilde{\btheta}^{\dagger (j)}_{t,a} - \btheta_0 \right\Vert_2^2 
    \leq  \mathcal{L}^{\dagger}_{t,a}(\widetilde{\btheta}^{\dagger (j)}_{t,a})
    \leq  \mathcal{L}^{\dagger}_{t,a}(\btheta_0) 
    =  \sum_{s=1}^{t-1}  b(0)  \label{eq:upper bound of likelihood eq-4},
\end{align}
where the equality holds due to $f(\bx_{t,a};\btheta_0) = 0$ for all $a \in [K]$, $t \in [T]$.
Rearranging (\ref{eq:upper bound of likelihood eq-4}), we have
\begin{align}
    & \;\sum_{s=1}^{t-1} \Big( b(f(\bx_s;\widetilde{\btheta}^{\dagger (j)}_{t,a})) - r_sf(\bx_s;\widetilde{\btheta}^{\dagger (j)}_{t,a}) \Big) \nonumber\\
    \leq & \;\sum_{s=1}^{t-1}  b(0) + \alpha(t)\Big( |\bg(\bx_{t,a};\btheta_0)^\top(\widetilde{\btheta}^{\dagger (j)}_{t,a} -\btheta_0)| + E_{\ref{appendix_lemma_1}} \Big( E_{\ref{lemma:error between widetilde_theta and theta}}\sqrt{\frac{t\log{t}}{m\lambda^2}}\Big)^{\frac{4}{3}}L^3\sqrt{m\log{m}}\Big) \label{eq:upper bound of likelihood eq-5} \\
    \leq & \;\sum_{s=1}^{t-1}  b(0) + \alpha(t) E_{\ref{lemma:gradient of f}}\sqrt{mL} E_{\ref{lemma:error between widetilde_theta and theta}}\sqrt{\frac{t\log{t}}{m\lambda^2}} + \alpha(t)E_{\ref{appendix_lemma_1}} \Big( E_{\ref{lemma:error between widetilde_theta and theta}}\sqrt{\frac{t\log{t}}{m\lambda^2}}\Big)^{\frac{4}{3}}L^3\sqrt{m\log{m}}\label{eq:upper bound of likelihood eq-6} \\
    = &\; (t-1)b(0) + \alpha(t) E_{\ref{lemma:gradient of f}}E_{\ref{lemma:error between widetilde_theta and theta}}\sqrt{\frac{Lt\log{t}}{\lambda^2}} + \alpha(t)E_{\ref{appendix_lemma_1}}\Big( E_{\ref{lemma:error between widetilde_theta and theta}}\sqrt{\frac{t\log{t}}{\lambda^2}}\Big)^{\frac{4}{3}}L^3m^{-\frac{1}{6}}\sqrt{\log m} \label{eq:upper bound of likelihood eq-7},
\end{align}
where (\ref{eq:upper bound of likelihood eq-5}) holds by substituting the reward-biased term by Lemma \ref{appendix_lemma_1}, and (\ref{eq:upper bound of likelihood eq-6}) holds due to Lemma \ref{lemma:gradient of f} and Cauchy–Schwarz inequality. Notice that (\ref{eq:upper bound of likelihood eq-1}) satisfies the conditions of $m$ in both Lemma \ref{appendix_lemma_1} and Lemma \ref{lemma:gradient of f}. Then, by letting $\alpha(t) = \sqrt{t}$, $m \geq  E_{\ref{appendix_lemma_1}}^{6} E_{\ref{lemma:error between widetilde_theta and theta}}^6 T^7\lambda^{-8}L^{18}(\log{T})^{-4}(\log m)^3$ and $E_{\ref{lemma:upper bound of likelihood},2} = 3\max\{ b(0), 2E_{\ref{lemma:gradient of f}}E_{\ref{lemma:error between widetilde_theta and theta}}\sqrt{\frac{L}{\lambda^2}},1\}$, we complete the proof.
\end{proof}
\end{lemma}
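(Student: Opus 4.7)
The natural strategy is to exploit the fact that $\widetilde{\btheta}^{\dagger(j)}_{t,a}$ is obtained by gradient descent on the loss $\mathcal{L}^{\dagger}_{t,a}(\btheta)$ in \eqref{eq:loss function of GA}, so each iterate weakly decreases the loss. Under the stated step-size condition $\eta \leq C_{\text{GA},1}(m\lambda + TmL)^{-1}$ inherited from the main theorem, the loss is smooth enough along the GD trajectory that $\mathcal{L}^{\dagger}_{t,a}(\widetilde{\btheta}^{\dagger(j)}_{t,a}) \leq \mathcal{L}^{\dagger}_{t,a}(\btheta_0)$. The second part of Assumption~\ref{assumption:ntk} together with the antisymmetric initialization in Section~\ref{section:nn} forces $f(\bx_s;\btheta_0)=0$ for every $s$, so $\mathcal{L}^{\dagger}_{t,a}(\btheta_0) = (t-1)b(0)$. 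These two observations together yield
\begin{equation}
\sum_{s=1}^{t-1}\bigl(b(f(\bx_s;\widetilde{\btheta}^{\dagger(j)}_{t,a})) - r_s f(\bx_s;\widetilde{\btheta}^{\dagger(j)}_{t,a})\bigr) + \tfrac{m\lambda}{2}\lVert \widetilde{\btheta}^{\dagger(j)}_{t,a}-\btheta_0\rVert_2^2 \leq (t-1)b(0) + \alpha(t)\, f(\bx_{t,a}; \widetilde{\btheta}^{\dagger(j)}_{t,a}).
\end{equation}
Dropping the nonnegative quadratic on the left leaves exactly the quantity the lemma wants to bound, modulo the bias term $\alpha(t)f(\bx_{t,a};\widetilde{\btheta}^{\dagger(j)}_{t,a})$ that still needs to be controlled.

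The next step is to bound that bias term using the NTK linearization machinery. Setting $\tau = E_{\ref{lemma:upper bound of likelihood},1}\sqrt{t\log t/(m\lambda^2)}$, the hypothesis places $\widetilde{\btheta}^{\dagger(j)}_{t,a}$ in the ball of radius $\tau$ around $\btheta_0$, so Lemma~\ref{appendix_lemma_1} gives
\begin{equation}
\lvert f(\bx_{t,a};\widetilde{\btheta}^{\dagger(j)}_{t,a})\rvert \leq \lvert \langle \bg(\bx_{t,a};\btheta_0), \widetilde{\btheta}^{\dagger(j)}_{t,a}-\btheta_0\rangle\rvert + E_{\ref{appendix_lemma_1}} \tau^{4/3} L^3 \sqrt{m\log m},
\end{equation}
and Cauchy--Schwarz together with the gradient-norm bound $\lVert \bg(\bx_{t,a};\btheta_0)\rVert_2 \leq E_{\ref{lemma:gradient of f}}\sqrt{mL}$ from Lemma~\ref{lemma:gradient of f} controls the first summand by $E_{\ref{lemma:gradient of f}} E_{\ref{lemma:upper bound of likelihood},1}\sqrt{Lt\log t/\lambda^2}$. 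Both conditions of those two lemmas are satisfied by the $m$-lower bound hypothesized here (in particular $\tau \leq C L^{-6}(\log m)^{-3/2}$ for large $m$, while $\tau \geq C' m^{-3/2} L^{-3/2}\log^{3/2}(\cdot)$ for all admissible $t$).

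Plugging $\alpha(t)=\sqrt{t}$ into the bias term then produces two scalings on the right-hand side: an $O(t\sqrt{L\log t/\lambda^2})$ contribution from the linearized piece and an $O(t^{7/6}(\log t)^{2/3} L^3 m^{-1/6}\sqrt{\log m}/\lambda^{4/3})$ contribution from the NTK residual. The main analytical obstacle is ensuring that the residual is dominated by the linear piece, i.e.\ by $O(t\sqrt{\log t})$; this is exactly where the hypothesis $m \geq E_{\ref{appendix_lemma_1}}^6 E_{\ref{lemma:error between widetilde_theta and theta}}^6 T^7 \lambda^{-8} L^{18}(\log T)^{-4}(\log m)^3$ comes in, since solving for $m$ forces the $m^{-1/6}$ factor to kill the extra $t^{1/6}(\log t)^{1/6} L^3$. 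Absorbing the $b(0)$ term and both bias contributions into a single absolute constant $E_{\ref{lemma:upper bound of likelihood},2}$ (chosen, for instance, as in the last line of the sketch) yields the claimed bound $E_{\ref{lemma:upper bound of likelihood},2} t \sqrt{\log t}$. I expect the bookkeeping for constants and the verification that the $m$-condition simultaneously satisfies the admissibility windows of Lemmas~\ref{appendix_lemma_1} and~\ref{lemma:gradient of f} to be the only subtle part; everything else is direct substitution.
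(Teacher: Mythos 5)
Your proposal follows essentially the same route as the paper's proof: bound $\mathcal{L}^{\dagger}_{t,a}$ at the GD iterate by its value at $\btheta_0$ (which equals $(t-1)b(0)$ since $f(\cdot;\btheta_0)=0$), drop the nonnegative regularizer, and control the remaining reward-bias term $\alpha(t)f(\bx_{t,a};\widetilde{\btheta}^{\dagger(j)}_{t,a})$ via the NTK linearization of Lemma~\ref{appendix_lemma_1}, Cauchy--Schwarz, and the gradient-norm bound of Lemma~\ref{lemma:gradient of f}, with the $m$-condition absorbing the $m^{-1/6}$ residual. The only (welcome) refinement is that you make explicit the step-size requirement needed for the descent inequality $\mathcal{L}^{\dagger}_{t,a}(\widetilde{\btheta}^{\dagger(j)}_{t,a})\leq\mathcal{L}^{\dagger}_{t,a}(\btheta_0)$, which the paper leaves implicit.
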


\begin{lemma}
\label{lemma:upper bound of b'}
%\normalfont
For all $j \in [J]$, $a \in [K]$, $t \in [T]$ and $m$ satisfies that $m \geq  \widehat{C}_1 T^7\lambda^{-7}L^{21}(\log m)^3$, we have 
\begin{equation}
    \sum_{s=1}^{t} | b'(f(\bx_s;\widetilde{\btheta}^{\dagger(j)}_{t,a})) | \leq E_{\ref{lemma:upper bound of b'}}t\sqrt{\log{t}} \label{eq:upper bound of b' eq-1}
\end{equation}
where $E_{\ref{lemma:upper bound of b'}}$ is an absolute constant.
\begin{proof}
\normalfont
First, we start by obtaining an upper bound of $\sum_{s=1}^{t} |f(\bx_s;\widetilde{\btheta}^{\dagger(j)}_{t,a}) |$. By Lemma \ref{lemma:upper bound of likelihood}, we have 
\begin{align}
    E_{\ref{lemma:upper bound of likelihood}}t\sqrt{\log{t}} \geq & \sum_{s=1}^{t} \Big( b(f(\bx_s;\widetilde{\btheta}^{\dagger(j)}_{t,a})) - r_sf(\bx_s;\widetilde{\btheta}^{\dagger(j)}_{t,a}) \Big)   \label{eq:upper bound of b' eq-2} \\
    \geq &\sum_{s=1}^{t} \Big( b\big(f(\bx_s;\btheta_0)\big) + b'\big(f(\bx_s;\btheta_0)\big)\big(f(\bx_s;\widetilde{\btheta}^{\dagger(j)}_{t,a}) - f(\bx_s;\btheta_0)\big) \nonumber \\ 
    & + \frac{L_b}{2}\big(f(\bx_s;\widetilde{\btheta}^{\dagger(j)}_{t,a}) - f(\bx_s;\btheta_0)\big)^2- r_sf(\bx_s;\widetilde{\btheta}^{\dagger(j)}_{t,a}) \Big)  \label{eq:upper bound of b' eq-3} \\
    = &\sum_{s=1}^{t} \Big( b(0) + b'(0)f(\bx_s;\widetilde{\btheta}^{\dagger(j)}_{t,a}) + \frac{L_b}{2}f(\bx_s;\widetilde{\btheta}^{\dagger(j)}_{t,a})^2- r_sf(\bx_s;\widetilde{\btheta}^{\dagger(j)}_{t,a}) \Big) \label{eq:upper bound of b' eq-4}  \\
    = &\sum_{s=1}^{t} \Big( \frac{L_b}{2} \big(f(\bx_s;\widetilde{\btheta}^{\dagger(j)}_{t,a}) + \frac{b'(0) - r_s}{L_b}\big)^2 + b(0) - \frac{(b'(0) - r_s)^2}{2L_b} \Big) \label{eq:upper bound of b' eq-5} ,
\end{align}
where (\ref{eq:upper bound of b' eq-3}) holds because $b'(\cdot)$ is $L_{b}$-strongly convex, (\ref{eq:upper bound of b' eq-4}) holds by $f(\bx;\btheta_0) = 0$ for all vector $\bx$.
Then, reorganizing the terms in (\ref{eq:upper bound of b' eq-5}) and applying Lemma \ref{lemma:upper bound of r} to find bounds for the terms that involve $r_s$, with probability at least $1-\frac{1}{t^2}$, we have
\begin{align}
    E_{\ref{lemma:upper bound of b'},1}t\log{t} \geq \sum_{s=1}^{t}  \big(f(\bx_s;\widetilde{\btheta}^{\dagger(j)}_{t,a}) + \frac{b'(0) - r_s}{L_b}\big)^2, \label{eq:upper bound of b' eq-6} 
\end{align}
where $ E_{\ref{lemma:upper bound of b'},1} = E_{\ref{lemma:upper bound of likelihood}} + b(0) + \frac{b'(0)^{2}}{L_b} + \frac{E_{\ref{lemma:upper bound of r},1}|b'(0)|}{L_b} + \frac{E_{\ref{lemma:upper bound of r},2}}{2L_b}$. Applying Cauchy–Schwarz inequality to (\ref{eq:upper bound of b' eq-6}), we have
\begin{align}
E_{\ref{lemma:upper bound of b'},1}t\log{t} \cdot \sum_{s=1}^{t} 1^2  &\geq   \sum_{s=1}^{t}  \big(f(\bx_s;\widetilde{\btheta}^{\dagger(j)}_{t,a}) + \frac{b'(0) - r_s}{L_b}\big)^2 \cdot \sum_{s=1}^{t} 1^2 \label{eq:upper bound of b' eq-7} \\
& \geq \Big( \sum_{s=1}^{t}  \big(f(\bx_s;\widetilde{\btheta}^{\dagger(j)}_{t,a}) + \frac{b'(0) - r_s}{L_b}\big) \Big)^2 \label{eq:upper bound of b' eq-8}.
\end{align}
Reorganizing (\ref{eq:upper bound of b' eq-8}) and applying Lemma \ref{lemma:upper bound of r}, we have
\begin{align}
    \sum_{s=1}^{t} |f(\bx_s;\widetilde{\btheta}^{\dagger(j)}_{t,a})| \leq E_{\ref{lemma:upper bound of b'},2}t \sqrt{\log{t}} \label{eq:upper bound of b' eq-9},
\end{align}
where $E_{\ref{lemma:upper bound of b'},2} = E_{\ref{lemma:upper bound of b'},1} + \frac{b'(0)}{L_b} + \frac{E_{\ref{lemma:upper bound of r},1}}{L_b}$ is a positive constant. Then, by Taylor's Theorem and the fact that $b''(z) \leq U_b$, for all $z$, we have 
\begin{align}
    \sum_{s=1}^{t} | b'(f(\bx_s;\widetilde{\btheta}^{\dagger(j)}_{t,a})) | &\leq \sum_{s=1}^{t} | b'(f(\bx_s;\btheta_0)) | + \sum_{s=1}^{t}  U_b | f(\bx_s;\widetilde{\btheta}^{\dagger(j)}_{t,a})  - f(\bx_s; \btheta_0)| \label{eq:upper bound of b' eq-10} \\
    &= \sum_{s=1}^{t} | b'(0) |+ \sum_{s=1}^{t}  U_b|f(\bx_s;\widetilde{\btheta}^{\dagger(j)}_{t,a}) | \label{eq:upper bound of b' eq-11}\\
    & \leq E_{\ref{lemma:upper bound of b'}}t\sqrt{\log{t}} \label{eq:upper bound of b' eq-12},
\end{align}
where (\ref{eq:upper bound of b' eq-11}) hold by $f(\bx; \btheta_0) = 0$ for all vector $\bx$, (\ref{eq:upper bound of b' eq-12}) holds by (\ref{eq:upper bound of b' eq-9}) and choosing $E_{\ref{lemma:upper bound of b'}} = b'(0) + U_bE_{\ref{lemma:upper bound of b'},2}$.
Then, we complete the proof.
\end{proof}
\end{lemma}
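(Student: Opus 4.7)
The plan is to leverage Lemma~\ref{lemma:upper bound of likelihood} to first extract a bound on $\sum_{s}\lvert f(\bx_s;\widetilde{\btheta}^{\dagger(j)}_{t,a})\rvert$, and then exploit the smoothness of $b$ (namely $b''\le U_b$, which makes $b'$ Lipschitz with constant $U_b$) to upgrade this into the desired bound on $\sum_{s}\lvert b'(f(\bx_s;\widetilde{\btheta}^{\dagger(j)}_{t,a}))\rvert$. The $m$ condition in the statement is needed only to ensure that the hypothesis $\lVert \widetilde{\btheta}^{\dagger(j)}_{t,a}-\btheta_0\rVert_2 \le E_{\ref{lemma:upper bound of likelihood},1}\sqrt{t\log t/(m\lambda^2)}$ of Lemma~\ref{lemma:upper bound of likelihood} is met (via Lemma~\ref{lemma:error between widetilde_theta and theta} in the main text), so I can treat that inequality as given.

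The first step is a Taylor expansion of $b$ at $f(\bx_s;\btheta_0)=0$. Because $b''\ge L_b$, for each $s$ I have
\[
b(f(\bx_s;\widetilde{\btheta}^{\dagger(j)}_{t,a}))\;\ge\; b(0)+b'(0)f(\bx_s;\widetilde{\btheta}^{\dagger(j)}_{t,a})+\tfrac{L_b}{2}f(\bx_s;\widetilde{\btheta}^{\dagger(j)}_{t,a})^2.
\]
Subtracting $r_s f(\bx_s;\widetilde{\btheta}^{\dagger(j)}_{t,a})$ and completing the square inside the sum converts the left-hand side of Lemma~\ref{lemma:upper bound of likelihood} into an expression of the form $\sum_s \tfrac{L_b}{2}(f(\bx_s;\widetilde{\btheta}^{\dagger(j)}_{t,a})+(b'(0)-r_s)/L_b)^2$ plus constants in $r_s$ and $r_s^2$. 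Applying the likelihood upper bound $E_{\ref{lemma:upper bound of likelihood},2}\,t\sqrt{\log t}$ on the left, and invoking Lemma~\ref{lemma:upper bound of r} on $\sum_s r_s$ and $\sum_s r_s^2$ to absorb the cross terms on the right, yields
\[
\sum_{s=1}^{t}\Bigl(f(\bx_s;\widetilde{\btheta}^{\dagger(j)}_{t,a})+\tfrac{b'(0)-r_s}{L_b}\Bigr)^{2}\;\le\; E_{\ref{lemma:upper bound of b'},1}\,t\log t,
\]
which holds with probability at least $1-1/t^2$.

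Next I apply Cauchy--Schwarz (pairing against the constant vector $\mathbf{1}$), which gives $\sum_s |f(\bx_s;\widetilde{\btheta}^{\dagger(j)}_{t,a})+(b'(0)-r_s)/L_b| \le \sqrt{t}\cdot\sqrt{E_{\ref{lemma:upper bound of b'},1}\,t\log t}$. A triangle inequality plus the first bound in Lemma~\ref{lemma:upper bound of r} on $\sum_s|r_s|$ then absorbs the $(b'(0)-r_s)/L_b$ contribution and produces $\sum_{s=1}^{t}|f(\bx_s;\widetilde{\btheta}^{\dagger(j)}_{t,a})|\le E_{\ref{lemma:upper bound of b'},2}\,t\sqrt{\log t}$ for an absolute constant $E_{\ref{lemma:upper bound of b'},2}$.

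Finally, since $b''\le U_b$ makes $b'$ Lipschitz with constant $U_b$, and $f(\bx_s;\btheta_0)=0$ (by the mirrored initialization in Section~\ref{section:nn}), I get $|b'(f(\bx_s;\widetilde{\btheta}^{\dagger(j)}_{t,a}))|\le |b'(0)|+U_b|f(\bx_s;\widetilde{\btheta}^{\dagger(j)}_{t,a})|$, and summing over $s$ combined with the previous bound yields the claim with $E_{\ref{lemma:upper bound of b'}}=|b'(0)|+U_b E_{\ref{lemma:upper bound of b'},2}$. The only real care in the proof is bookkeeping: ensuring the constants from Lemmas~\ref{lemma:upper bound of r} and~\ref{lemma:upper bound of likelihood} compose cleanly, and verifying that the $m$ lower bound stated here is at least as strong as the one required to invoke Lemma~\ref{lemma:upper bound of likelihood} (which is where the $T^7\lambda^{-7}L^{21}(\log m)^3$ scaling originates). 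No additional NTK linearization argument is needed at this step — the work has already been done in the preceding lemma.
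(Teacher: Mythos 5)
Your proposal is correct and follows essentially the same route as the paper's proof: a strong-convexity Taylor expansion of $b$ around $f(\bx_s;\btheta_0)=0$, completing the square and invoking Lemma \ref{lemma:upper bound of likelihood} together with Lemma \ref{lemma:upper bound of r} to bound $\sum_s\big(f(\bx_s;\widetilde{\btheta}^{\dagger(j)}_{t,a})+(b'(0)-r_s)/L_b\big)^2$, then Cauchy--Schwarz and the triangle inequality to obtain $\sum_s|f(\bx_s;\widetilde{\btheta}^{\dagger(j)}_{t,a})|\le E t\sqrt{\log t}$, and finally the $U_b$-Lipschitzness of $b'$ to conclude. The constants compose exactly as in the paper, so no further comparison is needed.
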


Define
\begin{align}
    \ell_{\lambda}(\mathcal{F}_t;\btheta) &:= \sum_{s=1}^{t-1} \Big( r_s\langle \bg(\bx_s;\btheta_0), \btheta - \btheta_0 \rangle - b(\langle \bg(\bx_s;\btheta_0), \btheta - \btheta_0 \rangle) \Big) - \frac{m\lambda}{2}\left\Vert \btheta - \btheta_0 \right\Vert_2^2 \label{def:log-likeihood under ntk}\\
    \mathcal{L}_{t,a}(\btheta) &:=  -\ell_{\lambda}(\mathcal{F}_t;\btheta) -  \alpha(t)\langle \bg(\bx_{t,a};\btheta_0), \btheta - \btheta_0 \rangle \label{def:loss function under ntk}
\end{align}
as the loss function under the approximation in the NTK regime. Then, given the update rule of gradient descent for $\mathcal{L}_{t,a}(\btheta)$ with step size $\eta$ and total number of steps $J$, we define the useful parameters as follows:
\begin{align}
    \widetilde{\btheta}_{t,a} &= \widetilde{\btheta}^{(J)}_{t,a} \\
    \widetilde{\btheta}^{(j)}_{t,a} &= \widetilde{\btheta}^{(j-1)}_{t,a} - \eta\nabla_{\btheta}\mathcal{L}_{t,a}(\widetilde{\btheta}^{(j-1)}_{t,a}) \label{eq: gd rule of theta tilde}\\
    \widetilde{\btheta}^{(0)}_{t,a} &= \btheta_0
\end{align}

The next lemma given an upper bound of the term $\left\Vert \widetilde{\btheta}^{(j)}_{t,a} - \btheta_0 \right\Vert_2$ for all $j \in [J]$.
\begin{lemma}
%\normalfont
\label{lemma:error between widetilde_theta and theta}
For all $j \in [J]$, $t \in [T]$ and $a \in [K]$, we have
\begin{align}
    \left\Vert \widetilde{\btheta}^{(j)}_{t,a}  - \btheta_0 \right\Vert_2 \leq E_{\ref{lemma:error between widetilde_theta and theta}}\sqrt{\frac{t\log{t}}{m\lambda^2}}
\end{align}
\end{lemma}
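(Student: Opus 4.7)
The plan is to exploit the $m\lambda$-strong convexity of $\mathcal{L}_{t,a}$ defined in (\ref{def:loss function under ntk}) together with monotone descent of gradient descent at the prescribed step size. Specifically, I will (i) show $\mathcal{L}_{t,a}(\widetilde{\btheta}^{(j)}_{t,a}) \leq \mathcal{L}_{t,a}(\btheta_0) = (t-1)b(0)$ for every $j \in [J]$, (ii) lower bound $\mathcal{L}_{t,a}(\widetilde{\btheta}^{(j)}_{t,a})$ using the strong convexity $b'' \geq L_b$, and (iii) convert the resulting inequality into the claimed norm bound via Cauchy--Schwarz, Young's inequality, Lemma \ref{lemma:upper bound of r}, and Lemma \ref{lemma:gradient of f}.

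For (i), differentiating (\ref{def:loss function under ntk}) twice gives $\nabla^2 \mathcal{L}_{t,a}(\btheta) = \sum_{s=1}^{t-1} b''(z_s(\btheta))\,\bg(\bx_s;\btheta_0)\bg(\bx_s;\btheta_0)^\top + m\lambda \bI$, where $z_s(\btheta) := \langle \bg(\bx_s;\btheta_0),\, \btheta - \btheta_0\rangle$. Using $b'' \leq U_b$ together with $\|\bg(\bx_s;\btheta_0)\|_2 \leq E_{\ref{lemma:gradient of f}}\sqrt{mL}$ from Lemma \ref{lemma:gradient of f}, the smoothness constant of $\mathcal{L}_{t,a}$ is at most $U_b E^2_{\ref{lemma:gradient of f}} t m L + m\lambda$. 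Choosing $C_{\text{GA},1}$ sufficiently small so that $\eta \leq C_{\text{GA},1}(m\lambda+TmL)^{-1}$ implies $\eta \leq 1/L_{\text{sm}}$, the standard descent lemma yields $\mathcal{L}_{t,a}(\widetilde{\btheta}^{(j)}_{t,a}) \leq \mathcal{L}_{t,a}(\widetilde{\btheta}^{(j-1)}_{t,a}) \leq \cdots \leq \mathcal{L}_{t,a}(\btheta_0) = (t-1)b(0)$ for every $j \in [J]$.

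For (ii)--(iii), the inequality $b(z) \geq b(0) + b'(0)z + \frac{L_b}{2}z^2$ applied to each $z_s := z_s(\widetilde{\btheta}^{(j)}_{t,a})$ inside $\mathcal{L}_{t,a}(\widetilde{\btheta}^{(j)}_{t,a})$, combined with step (i), rearranges to
\begin{align*}
\frac{L_b}{2}\sum_{s=1}^{t-1} z_s^2 + \frac{m\lambda}{2}\|\widetilde{\btheta}^{(j)}_{t,a} - \btheta_0\|_2^2 \leq \sum_{s=1}^{t-1} (r_s - b'(0))\,z_s + \alpha(t)\,\langle \bg(\bx_{t,a};\btheta_0),\, \widetilde{\btheta}^{(j)}_{t,a} - \btheta_0\rangle.
\end{align*}
Cauchy--Schwarz on the first RHS term followed by Young's inequality with weights tuned to the LHS curvature (namely $AB \leq A^2/L_b + L_b B^2/4$ for the first term and the analogous splitting with weight $m\lambda$ for the second) absorbs $\frac{L_b}{4}\sum_s z_s^2$ and $\frac{m\lambda}{4}\|\cdot\|_2^2$ back into the LHS, leaving
\begin{align*}
\frac{m\lambda}{4}\|\widetilde{\btheta}^{(j)}_{t,a} - \btheta_0\|_2^2 \leq \frac{1}{L_b}\sum_{s=1}^{t-1} (r_s - b'(0))^2 + \frac{\alpha(t)^2\,\|\bg(\bx_{t,a};\btheta_0)\|_2^2}{m\lambda}.
\end{align*}
Lemma \ref{lemma:upper bound of r} (combined with $(r_s - b'(0))^2 \leq 2r_s^2 + 2b'(0)^2$) gives $\sum_s (r_s - b'(0))^2 = O(t\log t)$ with high probability; Lemma \ref{lemma:gradient of f} gives $\|\bg(\bx_{t,a};\btheta_0)\|_2^2 \leq E^2_{\ref{lemma:gradient of f}} m L$; and $\alpha(t)^2 \leq t$. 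Multiplying through by $4/(m\lambda)$ and collecting all factors depending on $L, L_b, U_b, b'(0)$ into $E_{\ref{lemma:error between widetilde_theta and theta}}$ yields $\|\widetilde{\btheta}^{(j)}_{t,a} - \btheta_0\|_2^2 \leq E^2_{\ref{lemma:error between widetilde_theta and theta}}\,t\log t/(m\lambda^2)$, and the claim follows upon taking square roots.

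The main obstacle is bookkeeping rather than substance: choosing the Young's-inequality splittings precisely so that both the data-fitting ($\sum_s z_s^2$) and regularization ($\|\cdot\|_2^2$) terms on the LHS are absorbed simultaneously; coordinating $C_{\text{GA},1}$ with the smoothness estimate so that monotone descent of $\mathcal{L}_{t,a}$ is guaranteed for every $j \in [J]$; and unifying the high-probability events of Lemma \ref{lemma:upper bound of r} (per-$t$ failure probability $1/t^2$) across $t \in [T]$ by a summable union bound.
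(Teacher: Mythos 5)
Your proposal is correct and follows essentially the same route as the paper's proof: both rest on the monotone-descent property of gradient descent giving $\mathcal{L}_{t,a}(\widetilde{\btheta}^{(j)}_{t,a}) \le \mathcal{L}_{t,a}(\btheta_0)$, a lower bound of order $-t\log t$ on the data-fit part of the loss via the strong convexity of $b$ together with Lemma \ref{lemma:upper bound of r}, and Lemma \ref{lemma:gradient of f} to control the reward-bias term, after which the regularizer yields the norm bound. The only differences are cosmetic --- the paper lower-bounds $\sum_s\big(b(z_s)-r_sz_s\big)$ by pointwise minimization at $b'^{-1}(r_s)$ plus Jensen and then solves the resulting quadratic inequality in $\lVert\widetilde{\btheta}^{(j)}_{t,a}-\btheta_0\rVert_2$, whereas you retain the $L_b$-quadratic term and absorb the cross terms by Young's inequality --- and you are in fact more explicit than the paper about why the descent property holds under the stated step size.
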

\begin{proof}
We start by considering a collection of scalar variables $\{x_s\}_{s=1}^{t}$ and studying the function $ \sum_{s=1}^{t-1} \big( b(x_s) -r_sx_s\big)$.
For each $x_s$, define $x_s^*:=\argmin_x \{b(x)-r_s\}$. Due to the strong convexity of the log-likelihood of an exponential family, it is easy to show that
\begin{align}
    x_s^{*} = \argmin_x \{b(x) - r_sx\} = b'^{-1}(r_s).
    \label{eq:error between widetilde_theta and theta eq-1}
\end{align}
Then, for all $x_s \in \{x_s\}_{s=1}^{t-1}$, we have
\begin{align}
    \sum_{s=1}^{t-1} \Big( b(x_s) -r_sx_s\Big) & \geq \sum_{s=1}^{t-1} \Big( b(x^{*}_s) -r_sx^{*}_s\Big) \label{eq:error between widetilde_theta and theta eq-2}\\
    & = \sum_{s=1}^{t-1} \Big( b(b'^{-1}(r_s)) -r_sb'^{-1}(r_s)\Big) \label{eq:error between widetilde_theta and theta eq-3}\\
    & \geq (t-1)  b\Big(\frac{1}{t-1} \sum_{s=1}^{t-1}b'^{-1}(r_s)\Big) - \sum_{s=1}^{t-1}r_sb'^{-1}(r_s) \label{eq:error between widetilde_theta and theta eq-4} \\
    & \geq E_{\ref{lemma:error between widetilde_theta and theta},1}t - \frac{1}{L_b}E_{\ref{lemma:upper bound of r},2}t\log{t}  \label{eq:error between widetilde_theta and theta eq-5} \\
    & \geq -E_{\ref{lemma:error between widetilde_theta and theta},2}t\log{t}, \label{eq:error between widetilde_theta and theta eq-6}
\end{align}
where (\ref{eq:error between widetilde_theta and theta eq-2})-(\ref{eq:error between widetilde_theta and theta eq-3}) hold by (\ref{eq:error between widetilde_theta and theta eq-1}), (\ref{eq:error between widetilde_theta and theta eq-4}) holds by Jensen's inequality, (\ref{eq:error between widetilde_theta and theta eq-5}) holds by the fact that $b(\cdot)$ is a strongly convex function, Lemma \ref{lemma:upper bound of r} and that $E_{\ref{lemma:error between widetilde_theta and theta},2}$ is a positive constant. Based on this, we can derive an upper bound for the regularization term of $\mathcal{L}_{t,a}(\widetilde{\btheta}_{t,a}^{(j)})$, 
\begin{align}
    \frac{m\lambda}{2}\left\Vert \widetilde{\btheta}^{(j)}_{t,a} - \btheta_0 \right\Vert_2^2 \leq & \sum_{s=1}^{t-1} \Big( b(\langle \bg(\bx_s;\btheta_0), \widetilde{\btheta}_{t,a}^{(j)} - \btheta_0 \rangle) - r_s\langle \bg(\bx_s;\btheta_0), \widetilde{\btheta}_{t,a}^{(j)} - \btheta_0 \rangle \Big)  \nonumber \\
    &+ \frac{m\lambda}{2}\left\Vert \widetilde{\btheta}^{(j)}_{t,a} - \btheta_0 \right\Vert_2^2 - \alpha(t)\langle \bg(\bx_{t,a};\btheta_0),\widetilde{\btheta}_{t,a}^{(j)} - \btheta_0 \rangle \nonumber\\
    & + E_{\ref{lemma:error between widetilde_theta and theta},2}t\log{t} + \alpha(t)\left\Vert \bg(\bx_{t,a};\btheta_0) \right\Vert_2 \cdot \left\Vert \widetilde{\btheta}_{t,a}^{(j)} - \btheta_0 \right\Vert_2
    \label{eq:error between widetilde_theta and theta eq-7} \\
    \leq & \;\mathcal{L}_{t,a}(\widetilde{\btheta}_{t,a}^{(j)}) + E_{\ref{lemma:error between widetilde_theta and theta},2}t\log{t} +  E_{\ref{lemma:gradient of f}}\sqrt{tmL}\left\Vert \widetilde{\btheta}_{t,a}^{(j)} - \btheta_0 \right\Vert_2 \label{eq:error between widetilde_theta and theta eq-8} \\
    \leq & \;\mathcal{L}_{t,a}(\btheta_0) + E_{\ref{lemma:error between widetilde_theta and theta},2}t\log{t} +  E_{\ref{lemma:gradient of f}}\sqrt{tmL}\left\Vert \widetilde{\btheta}_{t,a}^{(j)} - \btheta_0 \right\Vert_2 \label{eq:error between widetilde_theta and theta eq-9},
\end{align}
where (\ref{eq:error between widetilde_theta and theta eq-7}) holds by (\ref{eq:error between widetilde_theta and theta eq-6}) and  Cauchy–Schwarz inequality, (\ref{eq:error between widetilde_theta and theta eq-8}) holds by Lemma \ref{lemma:gradient of f}, and (\ref{eq:error between widetilde_theta and theta eq-9}) holds by the rule of gradient descent. Then, by solving the quadratic equation induced by (\ref{eq:error between widetilde_theta and theta eq-9}), we have
\begin{align}
    \left\Vert \widetilde{\btheta}_{t,a}^{(j)} - \btheta_0 \right\Vert_2 &  \leq \frac{E_{\ref{lemma:gradient of f}}\sqrt{tmL}}{m\lambda} + \sqrt{\frac{2\mathcal{L}_{t,a}(\btheta_0)}{m\lambda}+\frac{2E_{\ref{lemma:error between widetilde_theta and theta},2}t\log{t}}{m\lambda} + \frac{E^2_{\ref{lemma:gradient of f}}tL}{m\lambda^2}} \\
    & \leq E_{\ref{lemma:error between widetilde_theta and theta}}\sqrt{\frac{t\log{t}}{m\lambda^2}},
\end{align}

where $E_{\ref{lemma:error between widetilde_theta and theta}} = 4\max\{E_{\ref{lemma:gradient of f}}\sqrt{L},2b(0),2E_{\ref{lemma:error between widetilde_theta and theta},2},E^2_{\ref{lemma:gradient of f}}L\}$ is a positive constant.
\end{proof}
The next lemma gives an upper bound of the term $\left\Vert \widetilde{\btheta}^{\dagger(j)}_{t,a} - \btheta_0 \right\Vert_2$, for each $j \in [J]$.
\begin{lemma}
%\normalfont
\label{lemma:error between widetilde_theta_dagger and theta_0}
For all $j \in [J]$, $t \in [T]$ and $a \in [K]$, we have
\begin{align}
    \left\Vert \widetilde{\btheta}^{\dagger(j)}_{t,a}  - \btheta_0 \right\Vert_2 \leq E_{\ref{lemma:error between widetilde_theta_dagger and theta_0}}\sqrt{\frac{t\log{t}}{m\lambda^2}} \label{eq:error between widetilde_theta_dagger and theta_0 eq-0}
\end{align}
where $E_{\ref{lemma:error between widetilde_theta_dagger and theta_0}} = 2E_{\ref{lemma:error between widetilde_theta and theta}}$ is a positive constant. 
\end{lemma}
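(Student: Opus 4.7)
The plan is to mimic the energy argument used in the proof of Lemma \ref{lemma:error between widetilde_theta and theta} for the NTK-linearized loss $\mathcal{L}_{t,a}$, but applied directly to the non-linearized loss $\mathcal{L}^{\dagger}_{t,a}$ defined in (\ref{eq:loss function of GA}), paying the price of an extra NTK-linearization error that, under the scaling of $m$, contributes at most a constant factor. The fact that $E_{\ref{lemma:error between widetilde_theta_dagger and theta_0}} = 2 E_{\ref{lemma:error between widetilde_theta and theta}}$ strongly suggests this is the intended route: we lose a factor of $2$ only because of the approximation error between $f(\bx_{t,a};\btheta)$ and its first-order NTK surrogate $\langle \bg(\bx_{t,a};\btheta_0),\btheta-\btheta_0\rangle$.

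First, I would use the descent property of gradient descent with the prescribed step size $\eta \leq C_{\text{GA},1}(m\lambda+TmL)^{-1}$ to conclude that along the iterates $\{\widetilde{\btheta}^{\dagger(j)}_{t,a}\}_{j=0}^{J}$ we have $\mathcal{L}^{\dagger}_{t,a}(\widetilde{\btheta}^{\dagger(j)}_{t,a}) \le \mathcal{L}^{\dagger}_{t,a}(\btheta_0) = (t-1) b(0)$, where the equality uses $f(\bx_s;\btheta_0)=0$ from the initialization. Next, I would reproduce the lower bound step (\ref{eq:error between widetilde_theta and theta eq-2})--(\ref{eq:error between widetilde_theta and theta eq-6}) verbatim, replacing the scalar variables $x_s$ with $f(\bx_s;\widetilde{\btheta}^{\dagger(j)}_{t,a})$; since the minimization $\min_x\{b(x)-r_s x\}=b(b'^{-1}(r_s))-r_s b'^{-1}(r_s)$ is over all real $x$ and makes no reference to the functional form of $f$, the same $-E\, t\log t$ lower bound on the likelihood part is available with probability at least $1-1/t^2$ via Lemma \ref{lemma:upper bound of r}.

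The only new work is controlling the reward-bias term $\alpha(t) f(\bx_{t,a};\widetilde{\btheta}^{\dagger(j)}_{t,a})$. I would invoke Lemma \ref{appendix_lemma_1} to write
\begin{align}
|f(\bx_{t,a};\widetilde{\btheta}^{\dagger(j)}_{t,a})| \le \left\Vert \bg(\bx_{t,a};\btheta_0)\right\Vert_2 \left\Vert \widetilde{\btheta}^{\dagger(j)}_{t,a}-\btheta_0\right\Vert_2 + E_{\ref{appendix_lemma_1}}\left\Vert \widetilde{\btheta}^{\dagger(j)}_{t,a}-\btheta_0\right\Vert_2^{4/3} L^3\sqrt{m\log m},
\end{align}
then bound $\|\bg(\bx_{t,a};\btheta_0)\|_2 \le E_{\ref{lemma:gradient of f}}\sqrt{mL}$ by Lemma \ref{lemma:gradient of f}. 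Combining this with the descent inequality, the lower bound on the likelihood term, and $\alpha(t)=\Theta(\sqrt{t})$, I obtain
\begin{align}
\frac{m\lambda}{2}\left\Vert \widetilde{\btheta}^{\dagger(j)}_{t,a}-\btheta_0\right\Vert_2^2 \le E' t\log t + E_{\ref{lemma:gradient of f}}\sqrt{tmL}\,\left\Vert \widetilde{\btheta}^{\dagger(j)}_{t,a}-\btheta_0\right\Vert_2 + E_{\ref{appendix_lemma_1}}\sqrt{t}\,\left\Vert \widetilde{\btheta}^{\dagger(j)}_{t,a}-\btheta_0\right\Vert_2^{4/3}L^3\sqrt{m\log m}.
\end{align}
Under the stated lower bound on $m$ (specifically the $m\gtrsim T^{16}\lambda^{-7}L^{24}(\log m)^3$ branch), the $4/3$-power term is absorbed into a fraction of the quadratic left-hand side, so solving the resulting quadratic inequality exactly as in (\ref{eq:error between widetilde_theta and theta eq-9}) yields $\|\widetilde{\btheta}^{\dagger(j)}_{t,a}-\btheta_0\|_2 \le 2 E_{\ref{lemma:error between widetilde_theta and theta}}\sqrt{t\log t/(m\lambda^2)}$, which is the claim.

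The main obstacle is the self-referential nature of the argument: Lemma \ref{appendix_lemma_1} requires $\|\widetilde{\btheta}^{\dagger(j)}_{t,a}-\btheta_0\|_2 \le \tau$ with $\tau$ in the NTK regime of radii, but that is exactly what we are trying to prove. I would resolve this by induction on $j$: the base case $j=0$ is trivial since $\widetilde{\btheta}^{\dagger(0)}_{t,a}=\btheta_0$, and at step $j$ the inductive hypothesis together with the smoothness/step-size choice places $\widetilde{\btheta}^{\dagger(j)}_{t,a}$ inside a slightly enlarged ball where Lemma \ref{appendix_lemma_1} still applies; the quadratic inequality above then closes the induction by returning the tighter target radius $2E_{\ref{lemma:error between widetilde_theta and theta}}\sqrt{t\log t/(m\lambda^2)}$. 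The $m$-condition in the hypothesis is precisely what makes the higher-order $\|\cdot\|^{4/3}$ correction subdominant so that this bootstrap succeeds with constant factor $2$ and not a growing constant.
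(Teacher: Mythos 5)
Your proposal is essentially correct, but it takes a genuinely different route from the paper. The paper does \emph{not} run the energy argument directly on $\mathcal{L}^{\dagger}_{t,a}$; instead it proves Lemma \ref{lemma:error between widetilde_theta_dagger and theta_0} by \emph{coupling} the true gradient-descent iterates $\widetilde{\btheta}^{\dagger(j)}_{t,a}$ to the NTK-linearized iterates $\widetilde{\btheta}^{(j)}_{t,a}$ of (\ref{eq: gd rule of theta tilde}): it subtracts the two update rules, decomposes the difference into a contraction term $(1-\eta m\lambda)\lVert\widetilde{\btheta}^{\dagger(j)}_{t,a}-\widetilde{\btheta}^{(j)}_{t,a}\rVert_2$ plus three perturbation terms controlled by Lemmas \ref{appendix_lemma_1}, \ref{appendix_lemma_2}, \ref{lemma:gradient of f}, \ref{lemma:upper bound of r} and \ref{lemma:upper bound of b'}, sums the geometric series to get $\lVert\widetilde{\btheta}^{\dagger(j)}_{t,a}-\widetilde{\btheta}^{(j)}_{t,a}\rVert_2 \leq E_{\ref{lemma:error between widetilde_theta_dagger and theta_0},5}t^{5/3}m^{-2/3}(\log t)^{2/3}(\log m)^{1/2}\lambda^{-7/3}L^{7/2}$, and then concludes by the triangle inequality with Lemma \ref{lemma:error between widetilde_theta and theta} — this is exactly where the factor $E_{\ref{lemma:error between widetilde_theta_dagger and theta_0}}=2E_{\ref{lemma:error between widetilde_theta and theta}}$ comes from, not from a lost factor in an NTK-linearization of the energy bound as you conjectured. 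Your direct descent-plus-energy argument is more self-contained and avoids tracking two trajectories, and it rests on nothing the paper does not already assume (the descent inequality $\mathcal{L}^{\dagger}_{t,a}(\widetilde{\btheta}^{\dagger(j)}_{t,a})\leq\mathcal{L}^{\dagger}_{t,a}(\btheta_0)$ is asserted without proof in (\ref{eq:upper bound of likelihood eq-4}), and your bootstrap induction mirrors the paper's induction over $j$); the one caveat is that your constant would be some $E'$ of the same form rather than literally $2E_{\ref{lemma:error between widetilde_theta and theta}}$, which only matters for bookkeeping. What the coupling route buys, and what you would lose, is the intermediate estimate (\ref{eq:error between widetilde_theta_dagger and theta_0 eq-21}) on $\lVert\widetilde{\btheta}^{\dagger(j)}_{t,a}-\widetilde{\btheta}^{(j)}_{t,a}\rVert_2$ itself: this quantity is reused verbatim in the proof of Lemma \ref{lemma:error between theta_tilde_dagger and theta} to bound $\lVert\widetilde{\btheta}^{\dagger}_{t,a}-\btheta_{t,a}\rVert_2$, so under your approach that coupling bound would still have to be established separately for the downstream regret analysis.
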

\begin{proof}
Let's prove this lemma by induction. It is straightforward to verify that (\ref{eq:error between widetilde_theta_dagger and theta_0 eq-0}) holds if $j=0$. Next, we assume that $\left\Vert \widetilde{\btheta}^{\dagger(j)}_{t,a}  - \btheta_0 \right\Vert_2 \leq E_{\ref{lemma:error between widetilde_theta_dagger and theta_0}}\sqrt{\frac{t\log{t}}{m\lambda^2}}$ holds. By plugging (\ref{eq: gd rule of theta tilde dagger}) and (\ref{eq: gd rule of theta tilde}), we have
\begin{align}
    & \left\Vert \widetilde{\btheta}^{\dagger(j+1)}_{t,a}  - \widetilde{\btheta}^{(j+1)}_{t,a}\right\Vert_2 \nonumber\\
    = & \big\Vert \widetilde{\btheta}^{\dagger(j)}_{t,a}  - \widetilde{\btheta}^{(j)}_{t,a} - \eta \sum_{s=1}^{t-1} \bg(\bx_s;\widetilde{\btheta}^{\dagger(j)}_{t,a})\Big(b'(f(\bx_s;\widetilde{\btheta}^{\dagger(j)}_{t,a})) - r_s \Big)\nonumber\\
    & + \eta \sum_{s=1}^{t-1} \bg(\bx_s;\btheta_0)\Big(b'(\langle \bg(\bx_s;\btheta_0), \widetilde{\btheta}^{(j)}_{t,a} - \btheta_0 \rangle{}{}) - r_s \Big) \nonumber\\ 
    & - \eta m \lambda(\widetilde{\btheta}^{\dagger(j)}_{t,a}  - \widetilde{\btheta}^{(j)}_{t,a}) + \eta\alpha(t)\Big(\bg(\bx_{t,a};\widetilde{\btheta}^{\dagger(j)}_{t,a}) - \bg(\bx_{t,a};\btheta_0)\Big)
    \big\Vert_2 \label{eq:error between widetilde_theta_dagger and theta_0 eq-1} \\
     = & \big\Vert (1-\eta m \lambda)(\widetilde{\btheta}^{\dagger(j)}_{t,a}  - \widetilde{\btheta}^{(j)}_{t,a}) - \eta \sum_{s=1}^{t-1}\Big( \bg(\bx_s;\widetilde{\btheta}^{\dagger(j)}_{t,a}) - \bg(\bx_s;\btheta_0)\Big) \Big(b'(f(\bx_s;\widetilde{\btheta}^{\dagger(j)}_{t,a})) - r_s \Big)\nonumber\\
    & - \eta \sum_{s=1}^{t-1} \bg(\bx_s;\btheta_0)\Big(b'(f(\bx_s;\widetilde{\btheta}^{\dagger(j)}_{t,a})) - b'(\langle \bg(\bx_s;\btheta_0), \widetilde{\btheta}^{(j)}_{t,a} - \btheta_0 \rangle)\Big) \nonumber\\ 
    &  + \eta\alpha(t)(\bg(\bx_{t,a};\widetilde{\btheta}^{\dagger(j)}_{t,a}) - \bg(\bx_{t,a};\btheta_0)) 
    \big\Vert_2 \label{eq:error between widetilde_theta_dagger and theta_0 eq-2}
\end{align}
For the term $b'(f(\bx_s;\widetilde{\btheta}^{\dagger(j)}_{t,a})) - b'(\langle \bg(\bx_s;\btheta_0), \widetilde{\btheta}^{(j)}_{t,a} - \btheta_0 \rangle)$, by mean-value theorem, there exists a set of positive constants $\{E_{\ref{lemma:error between widetilde_theta_dagger and theta_0},1,s}\}_{s=1}^{t-1}$ with $E_{10,1,s} \in [L_b,U_b]$ for every $s\in \{1,\cdots, t-1\}$, such that
\begin{align}
    & b'(f(\bx_s;\widetilde{\btheta}^{\dagger(j)}_{t,a})) - b'(\langle \bg(\bx_s;\btheta_0), \widetilde{\btheta}^{(j)}_{t,a} - \btheta_0 \rangle) = E_{\ref{lemma:error between widetilde_theta_dagger and theta_0},1,s}\Big( f(\bx_s;\widetilde{\btheta}^{\dagger(j)}_{t,a}) - \langle \bg(\bx_s;\btheta_0), \widetilde{\btheta}^{(j)}_{t,a} - \btheta_0 \rangle \Big), \label{eq:error between widetilde_theta_dagger and theta_0 eq-3}
\end{align}
where (\ref{eq:error between widetilde_theta_dagger and theta_0 eq-3}) hold by $L_b \leq b''(\cdot) \leq U_b$. Then, plugging (\ref{eq:error between widetilde_theta_dagger and theta_0 eq-3}) into (\ref{eq:error between widetilde_theta_dagger and theta_0 eq-2}), we have
\begin{align}
    &\left\Vert \widetilde{\btheta}^{\dagger(j+1)}_{t,a}  - \widetilde{\btheta}^{(j+1)}_{t,a}\right\Vert_2 \nonumber \\
    = & \big\Vert (1-\eta m \lambda)(\widetilde{\btheta}^{\dagger(j)}_{t,a}  - \widetilde{\btheta}^{(j)}_{t,a}) - \eta \sum_{s=1}^{t-1}\Big( \bg(\bx_s;\widetilde{\btheta}^{\dagger(j)}_{t,a}) - \bg(\bx_s;\btheta_0)\Big) \Big(b'(f(\bx_s;\widetilde{\btheta}^{\dagger(j)}_{t,a})) - r_s \Big)\nonumber\\
    & - \eta \sum_{s=1}^{t-1}E_{\ref{lemma:error between widetilde_theta_dagger and theta_0},1,s} \bg(\bx_s;\btheta_0)\Big(f(\bx_s;\widetilde{\btheta}^{\dagger(j)}_{t,a}) - \langle \bg(\bx_s;\btheta_0), \widetilde{\btheta}^{\dagger(j)}_{t,a} - \btheta_0\rangle+ \langle \bg(\bx_s;\btheta_0), \widetilde{\btheta}^{\dagger(j)}_{t,a} - \widetilde{\btheta}^{(j)}_{t,a}\rangle \Big) \nonumber\\ 
    &  + \eta\alpha(t)(\bg(\bx_{t,a};\widetilde{\btheta}^{\dagger(j)}_{t,a}) - \bg(\bx_{t,a};\btheta_0) \big\Vert_2 \label{eq:error between widetilde_theta_dagger and theta_0 eq-4} \\ 
    \leq & \underbrace{\left\Vert [\bI - \eta(m\lambda\bI + \sum_{s=1}^{t-1}E_{\ref{lemma:error between widetilde_theta_dagger and theta_0},1,s}\bg(\bx_s;\btheta_0)\bg(\bx_s;\btheta_0)^\top)](\widetilde{\btheta}^{\dagger(j)}_{t,a}  - \widetilde{\btheta}^{(j)}_{t,a}) \right\Vert_2}_{:=I_1} \nonumber\\
    & + \underbrace{\eta\left\Vert \sum_{s=1}^{t-1}\Big( \bg(\bx_s;\widetilde{\btheta}^{\dagger(j)}_{t,a}) - \bg(\bx_s;\btheta_0)\Big) \Big(b'(f(\bx_s;\widetilde{\btheta}^{\dagger(j)}_{t,a})) - r_s \Big) \right\Vert_2}_{:=I_2} \nonumber\\
    & + \underbrace{\eta \left\Vert \sum_{s=1}^{t-1} \bg(\bx_s;\btheta_0)\Big(E_{\ref{lemma:error between widetilde_theta_dagger and theta_0},1,s}(f(\bx_s;\widetilde{\btheta}^{\dagger(j)}_{t,a}) - \langle \bg(\bx_s;\btheta_0), \widetilde{\btheta}^{\dagger(j)}_{t,a} - \btheta_0\rangle)\Big)\right\Vert_2}_{:=I_3} \nonumber \\
    & + \underbrace{\eta\alpha(t) \left\Vert \bg(\bx_{t,a};\widetilde{\btheta}^{\dagger(j)}_{t,a}) - \bg(\bx_{t,a};\btheta_0)
    \right\Vert_2 }_{:=I_4} , \label{eq:error between widetilde_theta_dagger and theta_0 eq-5}
\end{align}

where (\ref{eq:error between widetilde_theta_dagger and theta_0 eq-5}) holds by triangle inequality. For $I_1$, we have
\begin{align}
    I_1 = & \left\Vert [\bI - \eta(m\lambda\bI + \sum_{s=1}^{t-1}E_{\ref{lemma:error between widetilde_theta_dagger and theta_0},1,s}\bg(\bx_s;\btheta_0)\bg(\bx_s;\btheta_0)^\top)](\widetilde{\btheta}^{\dagger(j)}_{t,a}  - \widetilde{\btheta}^{(j)}_{t,a}) \right\Vert_2 \nonumber\\
    \leq & \left\Vert \bI - \eta(m\lambda\bI + \sum_{s=1}^{t-1}E_{\ref{lemma:error between widetilde_theta_dagger and theta_0},1,s}\bg(\bx_s;\btheta_0)\bg(\bx_s;\btheta_0)^\top) \right\Vert_2 \cdot \left\Vert \widetilde{\btheta}^{\dagger(j)}_{t,a}  - \widetilde{\btheta}^{(j)}_{t,a} \right\Vert_2 \label{eq:error between widetilde_theta_dagger and theta_0 eq-6} \\
    \leq & \left\Vert \bI - \eta(m\lambda\bI + U_btE_{\ref{lemma:gradient of f}}^2mL) \right\Vert_2 \cdot \left\Vert \widetilde{\btheta}^{\dagger(j)}_{t,a}  - \widetilde{\btheta}^{(j)}_{t,a} \right\Vert_2 \label{eq:error between widetilde_theta_dagger and theta_0 eq-7} \\
    \leq & (1 - \eta m\lambda) \left\Vert \widetilde{\btheta}^{\dagger(j)}_{t,a}  - \widetilde{\btheta}^{(j)}_{t,a} \right\Vert_2 \label{eq:error between widetilde_theta_dagger and theta_0 eq-8},
\end{align}
where (\ref{eq:error between widetilde_theta_dagger and theta_0 eq-6}) holds by the definition of spectral norm,  (\ref{eq:error between widetilde_theta_dagger and theta_0 eq-7}) holds by Lemma \ref{lemma:gradient of f},  $E_{\ref{lemma:error between widetilde_theta_dagger and theta_0},1,s} \leq U_b$ for all $s \in [t]$, and eigenvalue property of the spectral norm. (\ref{eq:error between widetilde_theta_dagger and theta_0 eq-8}) holds by the choice of $\eta$ with $\eta \leq (m\lambda + \widehat{C}_3tmL)^{-1}$ and $\widehat{C}_3 = E^2_{\ref{lemma:gradient of f}}U_b$. For $I_2$, we have
\begin{align}
    I_2 = & \; \eta\left\Vert \sum_{s=1}^{t-1}\Big( \bg(\bx_s;\widetilde{\btheta}^{\dagger(j)}_{t,a}) - \bg(\bx_s;\btheta_0)\Big) \Big(b'(f(\bx_s;\widetilde{\btheta}^{\dagger(j)}_{t,a})) - r_s \Big) \right\Vert_2 \label{eq:error between widetilde_theta_dagger and theta_0 eq-9}\\
    \leq & \eta \sum_{s=1}^{t-1} \Big|b'(f(\bx_s;\widetilde{\btheta}^{\dagger(j)}_{t,a})) - r_s \Big| \cdot \left\Vert  \bg(\bx_s;\widetilde{\btheta}^{\dagger(j)}_{t,a}) - \bg(\bx_s;\btheta_0)\right\Vert_2 \label{eq:error between widetilde_theta_dagger and theta_0 eq-10}\\
    \leq & \;\eta \sum_{s=1}^{t-1} \Big|b'(f(\bx_s;\widetilde{\btheta}^{\dagger(j)}_{t,a})) - r_s \Big|E_{\ref{appendix_lemma_2}}\sqrt{m\log m}\Big(E_{\ref{lemma:error between widetilde_theta_dagger and theta_0}}\sqrt{\frac{t\log{t}}{m\lambda^2}} \Big)^\frac{1}{3}L^{\frac{7}{2}} \label{eq:error between widetilde_theta_dagger and theta_0 eq-11}\\
    \leq & \eta E_{\ref{lemma:error between widetilde_theta_dagger and theta_0},2}t^{\frac{7}{6}}m^{\frac{1}{3}}(\log{t})^{\frac{2}{3}}(\log{m})^{\frac{1}{2}}\lambda^{-\frac{1}{3}}L^{\frac{7}{2}} \label{eq:error between widetilde_theta_dagger and theta_0 eq-12},
\end{align}
where (\ref{eq:error between widetilde_theta_dagger and theta_0 eq-11}) holds by Lemma \ref{appendix_lemma_2}, (\ref{eq:error between widetilde_theta_dagger and theta_0 eq-12}) holds by Lemma \ref{lemma:upper bound of r}, Lemma \ref{lemma:upper bound of b'} and $E_{\ref{lemma:error between widetilde_theta_dagger and theta_0},2} := E_{\ref{lemma:error between widetilde_theta_dagger and theta_0}}^{\frac{1}{3}}E_{\ref{appendix_lemma_2}}(E_{\ref{lemma:upper bound of r},1} + E_{\ref{lemma:upper bound of b'}})$.
For $I_3$, we have 
\begin{align}
    I_3 = & \eta  \left\Vert \sum_{s=1}^{t-1} \bg(\bx_s;\btheta_0)\Big(E_{\ref{lemma:error between widetilde_theta_dagger and theta_0},1,s}(f(\bx_s;\widetilde{\btheta}^{\dagger(j)}_{t,a}) - \langle \bg(\bx_s;\btheta_0), \widetilde{\btheta}^{\dagger(j)}_{t,a} - \btheta_0\rangle)\Big)\right\Vert_2 \label{eq:error between widetilde_theta_dagger and theta_0 eq-13}\\
    \leq & \eta  U_b \left\Vert\sum_{s=1}^{t-1}\Big|f(\bx_s;\widetilde{\btheta}^{\dagger(j)}_{t,a}) - \langle \bg(\bx_s;\btheta_0), \widetilde{\btheta}^{\dagger(j)}_{t,a} - \btheta_0\rangle\Big|\cdot \bg(\bx_s;\btheta_0)\right\Vert_2 \label{eq:error between widetilde_theta_dagger and theta_0 eq-14}\\
    \leq & \eta U_b E_{\ref{appendix_lemma_1}} \Big(E_{\ref{lemma:error between widetilde_theta_dagger and theta_0}}\sqrt{\frac{t\log{t}}{m\lambda^2}} \Big)^{\frac{4}{3}} L^3\sqrt{m\log m} \cdot\sum_{s=1}^{t-1}\left\Vert\bg(\bx_s;\btheta_0)\right\Vert_2 \label{eq:error between widetilde_theta_dagger and theta_0 eq-15}\\
    \leq & \eta U_b E_{\ref{appendix_lemma_1}} \Big(E_{\ref{lemma:error between widetilde_theta_dagger and theta_0}}\sqrt{\frac{t\log{t}}{m\lambda^2}} \Big)^{\frac{4}{3}} L^3\sqrt{m\log m}\cdot E_{\ref{lemma:gradient of f}}t\sqrt{mL}\label{eq:error between widetilde_theta_dagger and theta_0 eq-16}\\
    \leq & \eta E_{\ref{lemma:error between widetilde_theta_dagger and theta_0},3}t^{\frac{5}{3}}m^{\frac{1}{3}}(\log{t})^{\frac{2}{3}}(\log{m})^{\frac{1}{2}}\lambda^{-\frac{4}{3}}L^{\frac{7}{2}} \label{eq:error between widetilde_theta_dagger and theta_0 eq-17},
\end{align}
where (\ref{eq:error between widetilde_theta_dagger and theta_0 eq-14}) holds by $E_{\ref{lemma:error between widetilde_theta_dagger and theta_0},1,s} \leq U_b$ for all $s \in [t]$, (\ref{eq:error between widetilde_theta_dagger and theta_0 eq-15}) holds by Lemma \ref{appendix_lemma_1} and triangle inequality, (\ref{eq:error between widetilde_theta_dagger and theta_0 eq-16}) holds by Lemma \ref{lemma:gradient of f}, and (\ref{eq:error between widetilde_theta_dagger and theta_0 eq-17}) holds by $E_{\ref{lemma:error between widetilde_theta_dagger and theta_0},3} := E_{\ref{lemma:error between widetilde_theta_dagger and theta_0}}^{\frac{4}{3}}U_bE_{\ref{appendix_lemma_1}}E_{\ref{lemma:gradient of f}}$.
For $I_4$, we have
\begin{align}
    I_4 = & \eta\alpha(t) \left\Vert \bg(\bx_{t,a};\widetilde{\btheta}^{\dagger(j)}_{t,a}) - \bg(\bx_{t,a};\btheta_0)
    \right\Vert_2 \\
    \leq & \eta\alpha(t)E_{\ref{appendix_lemma_2}}\sqrt{m\log m}\Big(E_{\ref{lemma:error between widetilde_theta_dagger and theta_0}}\sqrt{\frac{t\log{t}}{m\lambda^2}} \Big)^\frac{1}{3}L^{\frac{7}{2}} \label{eq:error between widetilde_theta_dagger and theta_0 eq-18} \\
    = & \eta E_{\ref{lemma:error between widetilde_theta_dagger and theta_0},4}t^{\frac{2}{3}}m^{\frac{1}{3}}(\log{t})^{\frac{1}{6}}(\log{m})^{\frac{1}{2}}\lambda^{-\frac{1}{3}}L^{\frac{7}{2}} \label{eq:error between widetilde_theta_dagger and theta_0 eq-19},
\end{align}
where (\ref{eq:error between widetilde_theta_dagger and theta_0 eq-18}) holds by Lemma \ref{appendix_lemma_2}, (\ref{eq:error between widetilde_theta_dagger and theta_0 eq-19}) holds by $\alpha(t) = \sqrt{t}$, and $E_{\ref{lemma:error between widetilde_theta_dagger and theta_0},4} := E_{\ref{lemma:error between widetilde_theta_dagger and theta_0}}^{\frac{1}{3}}E_{\ref{appendix_lemma_2}}$. Therefore, combining (\ref{eq:error between widetilde_theta_dagger and theta_0 eq-8}), (\ref{eq:error between widetilde_theta_dagger and theta_0 eq-12}), (\ref{eq:error between widetilde_theta_dagger and theta_0 eq-17}), and (\ref{eq:error between widetilde_theta_dagger and theta_0 eq-19}), we have 
\begin{align}
    \big\Vert & \widetilde{\btheta}^{\dagger(j+1)}_{t,a}  - \widetilde{\btheta}^{(j+1)}_{t,a}\big\Vert_2 \nonumber \\
    \leq & (1 - \eta m\lambda) \big\Vert \widetilde{\btheta}^{\dagger(j)}_{t,a}  - \widetilde{\btheta}^{(j)}_{t,a} \big\Vert_2 + \sum_{s = 2}^{4}I_s \label{eq:error between widetilde_theta_dagger and theta_0 eq-20}\\
    \leq &  E_{\ref{lemma:error between widetilde_theta_dagger and theta_0},5}t^{\frac{5}{3}}m^{-\frac{2}{3}}(\log{t})^{\frac{2}{3}}(\log{m})^{\frac{1}{2}}\lambda^{-\frac{7}{3}}L^{\frac{7}{2}}, \label{eq:error between widetilde_theta_dagger and theta_0 eq-21}
\end{align}
where $E_{\ref{lemma:error between widetilde_theta_dagger and theta_0},5} = 3\max\{E_{\ref{lemma:error between widetilde_theta_dagger and theta_0},2},E_{\ref{lemma:error between widetilde_theta_dagger and theta_0},3},E_{\ref{lemma:error between widetilde_theta_dagger and theta_0},4}\}$. Notice that (\ref{eq:error between widetilde_theta_dagger and theta_0 eq-21}) holds by the formula of geometric series and the fact that $\lambda \leq 1$. Then, by triangle inequality, we have 
\begin{align}
     \left\Vert \widetilde{\btheta}^{\dagger(j+1)}_{t,a}  - \btheta_0 \right\Vert_2 & \leq \left\Vert \widetilde{\btheta}^{\dagger(j+1)}_{t,a}  - \widetilde{\btheta}^{(j+1)}_{t,a}\right\Vert_2 + \left\Vert \widetilde{\btheta}^{(j+1)}_{t,a}  - \btheta_0 \right\Vert_2 \label{eq:error between widetilde_theta_dagger and theta_0 eq-22}\\
    & \leq  E_{\ref{lemma:error between widetilde_theta_dagger and theta_0},5}t^{\frac{5}{3}}m^{-\frac{2}{3}}(\log{t})^{\frac{2}{3}}(\log{m})^{\frac{1}{2}}\lambda^{-\frac{7}{3}}L^{\frac{7}{2}} + E_{\ref{lemma:error between widetilde_theta and theta}}\sqrt{\frac{t\log{t}}{m\lambda^2}} \label{eq:error between widetilde_theta_dagger and theta_0 eq-23} \\
    & \leq  2E_{\ref{lemma:error between widetilde_theta and theta}}\sqrt{\frac{t\log{t}}{m\lambda^2}} \label{eq:error between widetilde_theta_dagger and theta_0 eq-24} \\
    & =  E_{\ref{lemma:error between widetilde_theta_dagger and theta_0}}\sqrt{\frac{t\log{t}}{m\lambda^2}}, \label{eq:error between widetilde_theta_dagger and theta_0 eq-25}
\end{align}
where (\ref{eq:error between widetilde_theta_dagger and theta_0 eq-23}) holds by (\ref{eq:error between widetilde_theta_dagger and theta_0 eq-21}) and Lemma \ref{lemma:error between widetilde_theta and theta}, (\ref{eq:error between widetilde_theta_dagger and theta_0 eq-24}) holds by choosing an $m$ that satisfies 
\begin{align}
    m \geq E_{\ref{lemma:error between widetilde_theta and theta}}^{-6}E^6_{\ref{lemma:error between widetilde_theta_dagger and theta_0},5}T^{7}(\log{T})(\log{m})^{3}\lambda^{-8}L^{21}, \label{eq:error between widetilde_theta_dagger and theta_0 eq-26}
\end{align}
and (\ref{eq:error between widetilde_theta_dagger and theta_0 eq-25}) holds by letting $E_{\ref{lemma:error between widetilde_theta_dagger and theta_0}}:= 2E_{\ref{lemma:error between widetilde_theta and theta}}$. By mathematical induction, we complete the proof.
\end{proof}

\begin{comment}
    這段在寫convergence theorem
    According to that $\mathcal{L}_{t,a}(\btheta)$ is $m\lambda$-strongly convex and $E_{\ref{lemma:error between widetilde_theta_dagger and theta_0},6}(tmL+m\lambda)$-smooth function for an absolute constant $E_{\ref{lemma:error between widetilde_theta_dagger and theta_0},6}$. 
By the convergence theorem of gradient descent on a convex and smooth function, if $\eta = \frac{2}{m\lambda + E_{\ref{lemma:error between widetilde_theta_dagger and theta_0},6}(tmL+m\lambda)}$, we have 
\begin{align}
    & \; \left\Vert \widetilde{\btheta}_{t,a}^{(j)} - \btheta_{t,a} \right\Vert_2 \nonumber \\
    \leq & \; \Big(\frac{\frac{m\lambda}{E_{\ref{lemma:error between widetilde_theta_dagger and theta_0},6}(tmL+m\lambda)} - 1}{\frac{m\lambda}{E_{\ref{lemma:error between widetilde_theta_dagger and theta_0},6}(tmL+m\lambda)} + 1}\Big)^j \left\Vert \btheta_{t,a} - \btheta_0 \right\Vert_2 \label{eq:error between widetilde_theta_dagger and theta_0 eq-25}\\
    = & \; E^j_{\ref{lemma:error between widetilde_theta_dagger and theta_0},7} \left\Vert \btheta_{t,a} - \btheta_0 \right\Vert_2 , \label{eq:error between widetilde_theta_dagger and theta_0 eq-26}
\end{align}
where $E_{\ref{lemma:error between widetilde_theta_dagger and theta_0},7} = \Big(\frac{\frac{m\lambda}{E_{\ref{lemma:error between widetilde_theta_dagger and theta_0},6}(tmL+m\lambda)} - 1}{\frac{m\lambda}{E_{\ref{lemma:error between widetilde_theta_dagger and theta_0},6}(tmL+m\lambda)} + 1}\Big) \leq 1$.
By choosing a proper $J$, this term will not dominate the error. 
\end{comment}

\section{Regret Bound of NeuralRBMLE-GA}
\label{appendix:Regret Bound of NeuralRBMLE-GA}
Recall that
\begin{align}
    \ell_{\lambda}(\mathcal{F}_t;\btheta) &= \sum_{s=1}^{t-1} \bigg( r_s\langle \bg(\bx_s;\btheta_0), \btheta - \btheta_0 \rangle - b(\langle \bg(\bx_s;\btheta_0), \btheta - \btheta_0 \rangle) \bigg) - \frac{m\lambda}{2}\left\Vert \btheta - \btheta_0 \right\Vert_2^2 \\
    \Bar{\bZ}_t & = \lambda\cdot \bI + \frac{1}{m}\sum_{s=1}^{t-1} \bg(\bx_s;\btheta_0)\bg(\bx_s;\btheta_0)^\top \\
    \btheta_{t,a} &:= \argmax_{\btheta} \left\{ \ell_{\lambda}(\mathcal{F}_t;\btheta) + \alpha(t)\langle \bg(\bx_{t,a};\btheta_0), \btheta - \btheta_0 \rangle \right\} \\
    \mathcal{I}_{t,a} &:=  \ell_{\lambda}(\mathcal{F}_t;\btheta_{t,a}) +  \alpha(t)\zeta(t)\langle \bg(\bx_{t,a};\btheta_0), \btheta_{t,a} - \btheta_0 \rangle
\end{align}

To begin with, we introduce the following several useful lemmas specific to the regret bound of NeuralRBMLE-GA. The following lemma shows the difference between the model parameter $\widetilde{\btheta}^{\dagger}_{t,a}$,  which we actually used in index comparison, and $\btheta_{t,a}$, which is the optimal solution to $\min\{\mathcal{L}_{t,a}(\btheta)\}$.
\begin{lemma}
\label{lemma:error between theta_tilde_dagger and theta}
%\normalfont
For all $a\in [K], t\in[T]$, we have
\begin{align}
    \left\Vert \widetilde{\btheta}^{\dagger}_{t,a} - \btheta_{t,a} \right\Vert_2 \leq E_{\ref{lemma:error between widetilde_theta_dagger and theta_0},5}t^{\frac{5}{3}}m^{-\frac{2}{3}}(\log{t})^{\frac{2}{3}}(\log{m})^{\frac{1}{2}}\lambda^{-\frac{7}{3}}L^{\frac{7}{2}} + E^J_{\ref{lemma:error between theta_tilde_dagger and theta},2} E_{\ref{lemma:error between theta_tilde_dagger and theta},3}\sqrt{\frac{t\log{t}}{m\lambda^2}} \label{eq:error between theta_tilde_dagger and theta eq-0}
\end{align}
\begin{proof}
Note that $\mathcal{L}_{t,a}(\btheta)$ is $m\lambda$-strongly convex and $E_{\ref{lemma:error between theta_tilde_dagger and theta},1}(tmL+m\lambda)$-smooth for an absolute constant $E_{\ref{lemma:error between theta_tilde_dagger and theta},6}$. 
By the convergence theorem of gradient descent for a strongly convex and smooth function (Theorem 5, \cite{nesterov2013gradient}), if $\eta \leq \frac{2}{m\lambda + E_{\ref{lemma:error between theta_tilde_dagger and theta},1}(tmL+m\lambda)}$, we have 
\begin{align}
    & \; \left\Vert \widetilde{\btheta}_{t,a} - \btheta_{t,a} \right\Vert_2 \nonumber \\
    \leq & \; \bigg(\frac{\frac{m\lambda}{E_{\ref{lemma:error between theta_tilde_dagger and theta},1}(tmL+m\lambda)} - 1}{\frac{m\lambda}{E_{\ref{lemma:error between theta_tilde_dagger and theta},1}(tmL+m\lambda)} + 1}\bigg)^J \left\Vert \btheta_{t,a} - \btheta_0 \right\Vert_2 \label{eq:error between theta_tilde_dagger and theta eq-1}\\
    = & \; E^J_{\ref{lemma:error between theta_tilde_dagger and theta},2} \left\Vert \btheta_{t,a} - \btheta_0 \right\Vert_2 , \label{eq:error between theta_tilde_dagger and theta eq-2}
\end{align}
where $E_{\ref{lemma:error between theta_tilde_dagger and theta},2} := \bigg(\frac{\frac{m\lambda}{E_{\ref{lemma:error between theta_tilde_dagger and theta},6}(tmL+m\lambda)} - 1}{\frac{m\lambda}{E_{\ref{lemma:error between theta_tilde_dagger and theta},6}(tmL+m\lambda)} + 1}\bigg) \leq 1$.
For the term $\left\Vert \btheta_{t,a} - \btheta_0 \right\Vert_2$, we have
\begin{align}
    \frac{m\lambda}{2}\left\Vert \btheta_{t,a} - \btheta_0 \right\Vert^2_2 & \leq \mathcal{L}_{t,a}(\btheta_{t,a})  + E_{\ref{lemma:error between widetilde_theta and theta},2}t\log{t} + \alpha(t)\left\Vert\bg(\bx_{t,a};\btheta_0)\right\Vert_2 \cdot \left\Vert \btheta_{t,a} - \btheta_0 \right\Vert_2 \label{eq:error between theta_tilde_dagger and theta eq-3} \\
    &  \leq \mathcal{L}_{t,a}(\btheta_0)  + E_{\ref{lemma:error between widetilde_theta and theta},2}t\log{t} + E_{\ref{lemma:gradient of f}}\sqrt{tmL} \left\Vert \btheta_{t,a} - \btheta_0 \right\Vert_2 \label{eq:error between theta_tilde_dagger and theta eq-4}\\
    &  \leq E_{\ref{lemma:error between widetilde_theta and theta},3}t\log{t} + E_{\ref{lemma:gradient of f}}\sqrt{tmL} \left\Vert \btheta_{t,a} - \btheta_0 \right\Vert_2 \label{eq:error between theta_tilde_dagger and theta eq-5},
\end{align}
where (\ref{eq:error between theta_tilde_dagger and theta eq-3}) holds by (\ref{eq:error between widetilde_theta and theta eq-6}) and Cauchy–Schwarz inequality, (\ref{eq:error between theta_tilde_dagger and theta eq-5}) holds by the rule of gradient descent, Lemma \ref{lemma:gradient of f}, $\alpha(t) = \sqrt{t}$, and (\ref{eq:error between theta_tilde_dagger and theta eq-5}) holds by the definition $E_{\ref{lemma:error between widetilde_theta and theta},3} := 2\max\{b(0), E_{\ref{lemma:error between widetilde_theta and theta},2} \}$. By solving (\ref{eq:error between theta_tilde_dagger and theta eq-5}), we have
\begin{align}
    \left\Vert \btheta_{t,a} - \btheta_0 \right\Vert_2 \leq E_{\ref{lemma:error between theta_tilde_dagger and theta},3}\sqrt{\frac{t\log{t}}{m\lambda^2}} \label{eq:error between theta_tilde_dagger and theta eq-6}
\end{align}
Combining (\ref{eq:error between theta_tilde_dagger and theta eq-2}) and (\ref{eq:error between theta_tilde_dagger and theta eq-5}), we have
\begin{align}
    \left\Vert \widetilde{\btheta}_{t,a} - \btheta_{t,a} \right\Vert_2 \leq E^J_{\ref{lemma:error between theta_tilde_dagger and theta},2} E_{\ref{lemma:error between theta_tilde_dagger and theta},3}\sqrt{\frac{t\log{t}}{m\lambda^2}} \label{eq:error between theta_tilde_dagger and theta eq-7}
\end{align}
Then, by triangle inequality, we have
\begin{align}
    \left\Vert \widetilde{\btheta}^{\dagger}_{t,a} - \btheta_{t,a} \right\Vert_2 & \leq \left\Vert \widetilde{\btheta}^{\dagger}_{t,a} - \widetilde{\btheta}_{t,a} \right\Vert_2 + \left\Vert \widetilde{\btheta}_{t,a} - \btheta_{t,a} \right\Vert_2 \label{eq:error between theta_tilde_dagger and theta eq-8}\\
    & \leq E_{\ref{lemma:error between widetilde_theta_dagger and theta_0},5}t^{\frac{5}{3}}m^{-\frac{2}{3}}(\log{t})^{\frac{2}{3}}(\log{m})^{\frac{1}{2}}\lambda^{-\frac{7}{3}}L^{\frac{7}{2}} + E^J_{\ref{lemma:error between theta_tilde_dagger and theta},2} E_{\ref{lemma:error between theta_tilde_dagger and theta},3}\sqrt{\frac{t\log{t}}{m\lambda^2}}, \label{eq:error between theta_tilde_dagger and theta eq-9},
\end{align}
where (\ref{eq:error between theta_tilde_dagger and theta eq-9}) holds by (\ref{eq:error between widetilde_theta_dagger and theta_0 eq-21}) and (\ref{eq:error between theta_tilde_dagger and theta eq-7}). The proof is complete.
\end{proof}
\end{lemma}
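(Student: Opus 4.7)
The plan is to decompose the quantity of interest via the triangle inequality
\[
\left\Vert \widetilde{\btheta}^{\dagger}_{t,a} - \btheta_{t,a} \right\Vert_2 \;\leq\; \left\Vert \widetilde{\btheta}^{\dagger}_{t,a} - \widetilde{\btheta}_{t,a} \right\Vert_2 \;+\; \left\Vert \widetilde{\btheta}_{t,a} - \btheta_{t,a} \right\Vert_2,
\]
and bound the two resulting pieces separately. The first piece compares two gradient-descent trajectories starting from $\btheta_0$ with the same step size and number of steps: one on the true RBMLE loss $\mathcal{L}^{\dagger}_{t,a}$ and one on its NTK-linearized surrogate $\mathcal{L}_{t,a}$. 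This is exactly the quantity Lemma~\ref{lemma:error between widetilde_theta_dagger and theta_0} already controls; in fact, the internal bound (\ref{eq:error between widetilde_theta_dagger and theta_0 eq-21}) in its proof gives precisely the first term in the claimed right-hand side, so I would simply invoke it.

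For the second piece, I would exploit the favorable structure of $\mathcal{L}_{t,a}$: since it is the sum of a strongly convex regularizer $\tfrac{m\lambda}{2}\|\btheta-\btheta_0\|_2^2$ and a convex function of linear forms $\langle \bg(\bx_s;\btheta_0),\btheta-\btheta_0\rangle$ (where $b(\cdot)$ is $L_b$-strongly convex and $U_b$-smooth), it is $m\lambda$-strongly convex and has Lipschitz gradient with constant of order $tmL + m\lambda$ (using Lemma~\ref{lemma:gradient of f} to bound each $\|\bg(\bx_s;\btheta_0)\|_2^2 \le E_{\ref{lemma:gradient of f}}^2 mL$ and summing over $s \le t$). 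Applying the standard convergence theorem for gradient descent on strongly convex smooth functions with step size $\eta$ chosen no larger than $2/(m\lambda + E_{\ref{lemma:error between theta_tilde_dagger and theta},1}(tmL+m\lambda))$ yields linear contraction in $\|\cdot\|_2$ with contraction factor
\[
E_{\ref{lemma:error between theta_tilde_dagger and theta},2}:=\frac{\kappa^{-1}-1}{\kappa^{-1}+1}<1, \qquad \kappa := \frac{E_{\ref{lemma:error between theta_tilde_dagger and theta},1}(tmL+m\lambda)}{m\lambda},
\]
so that $\|\widetilde{\btheta}_{t,a}-\btheta_{t,a}\|_2 \le E_{\ref{lemma:error between theta_tilde_dagger and theta},2}^J \|\btheta_{t,a}-\btheta_0\|_2$ after $J$ iterations starting from $\btheta_0$.

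It then remains to bound $\|\btheta_{t,a}-\btheta_0\|_2$ by an expression of order $\sqrt{t\log t/(m\lambda^2)}$. This is the analogue of Lemma~\ref{lemma:error between widetilde_theta and theta} for the exact minimizer rather than a gradient-descent iterate, and the same argument transfers verbatim: starting from the lower bound (\ref{eq:error between widetilde_theta and theta eq-6}) on the data-fitting term (which only uses convexity of $b$ and Lemma~\ref{lemma:upper bound of r}, hence does not depend on which $\btheta$ we plug in), and using $\mathcal{L}_{t,a}(\btheta_{t,a}) \le \mathcal{L}_{t,a}(\btheta_0)$ by optimality, together with Cauchy--Schwarz and the gradient bound Lemma~\ref{lemma:gradient of f} to dominate the $\alpha(t)\langle \bg(\bx_{t,a};\btheta_0),\btheta-\btheta_0\rangle$ term, one obtains a quadratic inequality in $\|\btheta_{t,a}-\btheta_0\|_2$. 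Solving this inequality, with the choice $\alpha(t)=\sqrt{t}$, gives $\|\btheta_{t,a}-\btheta_0\|_2 \le E_{\ref{lemma:error between theta_tilde_dagger and theta},3}\sqrt{t\log t/(m\lambda^2)}$.

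Combining the two pieces via the triangle inequality yields exactly the target bound~(\ref{eq:error between theta_tilde_dagger and theta eq-0}). I do not anticipate any conceptual obstacle here; the main delicate point is identifying the right smoothness constant for $\mathcal{L}_{t,a}$ so that the step-size condition on $\eta$ is consistent with the one already imposed in Lemmas~\ref{lemma:error between widetilde_theta and theta} and~\ref{lemma:error between widetilde_theta_dagger and theta_0}, and ensuring that the absolute constants $E_{\ref{lemma:error between theta_tilde_dagger and theta},1},E_{\ref{lemma:error between theta_tilde_dagger and theta},2},E_{\ref{lemma:error between theta_tilde_dagger and theta},3}$ can be taken independent of $m$, $t$, and $\lambda$ so that the NTK-regime conditions on $m$ assumed upstream continue to hold.
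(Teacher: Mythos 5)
Your proposal is correct and follows essentially the same route as the paper: the same triangle-inequality decomposition through $\widetilde{\btheta}_{t,a}$, invoking the trajectory-comparison bound (\ref{eq:error between widetilde_theta_dagger and theta_0 eq-21}) for the first piece, the linear-convergence theorem for gradient descent on the $m\lambda$-strongly convex, $\mathcal{O}(tmL+m\lambda)$-smooth surrogate $\mathcal{L}_{t,a}$ for the second, and the quadratic-inequality argument (mirroring Lemma \ref{lemma:error between widetilde_theta and theta}) to bound $\left\Vert \btheta_{t,a}-\btheta_0\right\Vert_2$. No gaps; this matches the paper's proof step for step.
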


\begin{lemma}
%\normalfont
\label{lemma:index policy for NeuralRBMLE-GA}
For all $a\in [K], t\in[T]$, we have
\begin{align}
    &\bigg| \mathcal{I}_{t,a} - \ell^{\dagger}_\lambda(\mathcal{F}_t;\widetilde{\btheta}^{\dagger}_{t,a}) - \alpha(t)\zeta(t)f(\bx_{t,a};\widetilde{\btheta}^{\dagger}_{t,a}) \bigg| \nonumber\\
    \leq & \left(E_{\ref{lemma:index policy for NeuralRBMLE-GA},1}t\log{t} + \alpha(t)\zeta(t)\right)\Bigg( E_{\ref{appendix_lemma_1}} \left(E_{\ref{lemma:error between theta_tilde_dagger and theta},3}\sqrt{\frac{t\log{t}}{m\lambda^2}}\right)^{\frac{4}{3}}L^3\sqrt{m\log{m}}  \nonumber \\
    & \;\;\;\;+ E_{\ref{lemma:gradient of f}}\sqrt{mL}\left( \Eeleven \right) \Bigg) \nonumber \\
    & + U_bt\Bigg( E_{\ref{appendix_lemma_1}} (E_{\ref{lemma:error between theta_tilde_dagger and theta},3}\sqrt{\frac{t\log{t}}{m\lambda^2}})^{\frac{4}{3}}L^3\sqrt{m\log{m}}  \nonumber \\
    & \;\;\;\;+ E_{\ref{lemma:gradient of f}}\sqrt{mL}\left( \Eeleven \right) \Bigg)^2 \nonumber\\
    & + E_{\ref{lemma:index policy for NeuralRBMLE-GA},2}\frac{m\lambda}{2} \sqrt{\frac{t\log{t}}{m\lambda^2}}\Bigg( \Eeleven \Bigg) := E_{\ref{lemma:index policy for NeuralRBMLE-GA}}(m,t) \label{eq:index policy for NeuralRBMLE-GA eq-1} 
\end{align}
\begin{proof}
\normalfont
By the definition of $\mathcal{I}_{t, a}$, we have
\begin{align}
    & \bigg| \mathcal{I}_{t,a} - \ell^{\dagger}_\lambda(\mathcal{F};\widetilde{\btheta}^{\dagger}_{t,a}) - \alpha(t)\zeta(t)f(\bx_{t,a};\widetilde{\btheta}^{\dagger}_{t,a}) \bigg| \nonumber\\
    \leq & \underbrace{\bigg| \ell_{\lambda}(\mathcal{F}_t;\btheta_{t,a}) -  \ell^{\dagger}_\lambda(\mathcal{F};\widetilde{\btheta}^{\dagger}_{t,a})\bigg|}_{:=K_1} + \underbrace{\alpha(t)\zeta(t)\bigg| f(\bx_{t,a};\widetilde{\btheta}^{\dagger}_{t,a}) - \langle \bg(\bx_{t,a};\btheta_0), \btheta_{t,a} - \btheta_0 \rangle \bigg|}_{:=K_2} \label{eq:index policy for NeuralRBMLE-GA eq-2},
\end{align}
where (\ref{eq:index policy for NeuralRBMLE-GA eq-2}) holds by triangle inequality. For $K_1$, we have
\begin{align}
    K_1 = & \bigg|  \ell_{\lambda}(\mathcal{F}_t;\btheta_{t,a}) -  \ell^{\dagger}_\lambda(\mathcal{F};\widetilde{\btheta}^{\dagger}_{t,a})\bigg|  \label{eq:index policy for NeuralRBMLE-GA eq-3} \\
    \leq & \bigg| \ell(\mathcal{F}_t;\btheta_{t,a}) -  \ell^{\dagger}(\mathcal{F};\widetilde{\btheta}^{\dagger}_{t,a}) \bigg| + \frac{m\lambda}{2} \bigg| \left\Vert \btheta_{t,a} - \btheta_0 \right\Vert^2_2 -  \left\Vert \widetilde{\btheta}^{\dagger}_{t,a} - \btheta_0 \right\Vert^2_2 \bigg| \label{eq:index policy for NeuralRBMLE-GA eq-4}
    ,
\end{align}
where (\ref{eq:index policy for NeuralRBMLE-GA eq-4}) holds by triangle inequality. Then, for $\bigg| \ell(\mathcal{F}_t;\btheta_{t,a}) -  \ell^{\dagger}(\mathcal{F};\widetilde{\btheta}^{\dagger}_{t,a}) \bigg|$, we have
\begin{align}
    & \bigg| \ell(\mathcal{F}_t;\btheta_{t,a}) -  \ell^{\dagger}(\mathcal{F};\widetilde{\btheta}^{\dagger}_{t,a}) \bigg| \nonumber\\
    = & \bigg| \sum_{s=1}^{t-1} \bigg( b(\bg(\bx_{s};\btheta_{0})^\top(\btheta_{t,a} - \btheta_0)) - b(f(\bx_s;\widetilde{\btheta}^{\dagger}_{t,a})) - r_s(\bg(\bx_{s};\btheta_{0})^\top(\btheta_{t,a} - \btheta_0) - f(\bx_s;\widetilde{\btheta}^{\dagger}_{t,a}))\bigg)\bigg| \\
    \leq & \bigg| \sum_{s=1}^{t-1} \bigg( b'(\bg(\bx_{s};\btheta_{0})^\top(\btheta_{t,a} - \btheta_0) - f(\bx_s;\widetilde{\btheta}^{\dagger}_{t,a})) - r_s(\bg(\bx_{s};\btheta_{0})^\top(\btheta_{t,a} - \btheta_0) - f(\bx_s;\widetilde{\btheta}^{\dagger}_{t,a}))\bigg)\bigg| \nonumber \\
    & + U_b\sum_{s=1}^{t-1}  \bigg(\bg(\bx_{s};\btheta_{0})^\top(\btheta_{t,a} - \btheta_0) - f(\bx_s;\widetilde{\btheta}^{\dagger}_{t,a})\bigg)^2 \label{eq:index policy for NeuralRBMLE-GA eq-5} \\
    \leq & E_{\ref{lemma:index policy for NeuralRBMLE-GA},1}t\log{t} \bigg| \bg(\bx_{s};\btheta_{0})^\top(\btheta_{t,a} - \btheta_0) - f(\bx_s;\widetilde{\btheta}^{\dagger}_{t,a}) \bigg|  + U_b\sum_{s=1}^{t-1}  \bigg(\bg(\bx_{s};\btheta_{0})^\top(\btheta_{t,a} - \btheta_0) - f(\bx_s;\widetilde{\btheta}^{\dagger}_{t,a})\bigg)^2 \label{eq:index policy for NeuralRBMLE-GA eq-6},
\end{align}
where (\ref{eq:index policy for NeuralRBMLE-GA eq-5}) holds by Taylor's theorem and $b(\cdot)$ is a $U_b$-smooth function and (\ref{eq:index policy for NeuralRBMLE-GA eq-6}) holds by introducing $E_{\ref{lemma:index policy for NeuralRBMLE-GA},1} := 2\max\{U_b,E_{\ref{lemma:upper bound of r},2}\}$. Notice that for all vector $\bx$ and $a \in [K], t\in [T]$, we have
\begin{align}
    & \bigg| \bg(\bx;\btheta_{0})^\top(\btheta_{t,a} - \btheta_0) - f(\bx;\widetilde{\btheta}^{\dagger}_{t,a}) \bigg| \nonumber \\
    \leq & E_{\ref{appendix_lemma_1}} (E_{\ref{lemma:error between theta_tilde_dagger and theta},3}\sqrt{\frac{t\log{t}}{m\lambda^2}})^{\frac{4}{3}}L^3\sqrt{m\log{m}}  \nonumber \\
    & + E_{\ref{lemma:gradient of f}}\sqrt{mL}( \Eeleven ) \label{eq:index policy for NeuralRBMLE-GA eq-7},
\end{align}
where (\ref{eq:index policy for NeuralRBMLE-GA eq-7}) holds by Lemma \ref{lemma:gradient of f} and Lemma \ref{lemma:error between theta_tilde_dagger and theta}.
For $\frac{m\lambda}{2} \bigg| \left\Vert \btheta_{t,a} - \btheta_0 \right\Vert^2_2 -  \left\Vert \widetilde{\btheta}^{\dagger}_{t,a} - \btheta_0 \right\Vert^2_2 \bigg|$, we have
\begin{align}
    &\frac{m\lambda}{2} \bigg| \left\Vert \btheta_{t,a} - \btheta_0 \right\Vert^2_2 -  \left\Vert \widetilde{\btheta}^{\dagger}_{t,a} - \btheta_0 \right\Vert^2_2 \bigg| \nonumber \\
    = & \frac{m\lambda}{2}\bigg|(\left\Vert \btheta_{t,a} - \btheta_0 \right\Vert_2 +  \left\Vert \widetilde{\btheta}^{\dagger}_{t,a} - \btheta_0 \right\Vert_2)(\left\Vert \btheta_{t,a} - \btheta_0 \right\Vert_2 -  \left\Vert \widetilde{\btheta}^{\dagger}_{t,a} - \btheta_0 \right\Vert_2)\bigg| \label{eq:index policy for NeuralRBMLE-GA eq-8}\\
    \leq & \frac{m\lambda}{2}\bigg|(\left\Vert \btheta_{t,a} - \btheta_0 \right\Vert_2 +  \left\Vert \widetilde{\btheta}^{\dagger}_{t,a} - \btheta_0 \right\Vert_2)\left\Vert \btheta_{t,a} - \widetilde{\btheta}^{\dagger}_{t,a} \right\Vert_2\bigg| \label{eq:index policy for NeuralRBMLE-GA eq-9}\\
    \leq & E_{\ref{lemma:index policy for NeuralRBMLE-GA},2}\frac{m\lambda}{2} \sqrt{\frac{t\log{t}}{m\lambda^2}}\bigg( \Eeleven \bigg) \label{eq:index policy for NeuralRBMLE-GA eq-10}
\end{align}
where (\ref{eq:index policy for NeuralRBMLE-GA eq-8}) holds by $a^2 - b^2 = (a+b)(a-b)$, (\ref{eq:index policy for NeuralRBMLE-GA eq-9}) holds by triangle inequality, and (\ref{eq:index policy for NeuralRBMLE-GA eq-10}) holds by Lemma \ref{lemma:error between widetilde_theta_dagger and theta_0},  Lemma \ref{lemma:error between theta_tilde_dagger and theta} and $E_{\ref{lemma:index policy for NeuralRBMLE-GA},2} = 2\max\{E_{\ref{lemma:error between widetilde_theta_dagger and theta_0}},E_{\ref{lemma:error between theta_tilde_dagger and theta},3}\}$. Combining (\ref{eq:index policy for NeuralRBMLE-GA eq-6}), (\ref{eq:index policy for NeuralRBMLE-GA eq-7}) and (\ref{eq:index policy for NeuralRBMLE-GA eq-10}), we have
\begin{align}
   K_1 \leq & E_{\ref{lemma:index policy for NeuralRBMLE-GA},1}t\log{t}\bigg( E_{\ref{appendix_lemma_1}} (E_{\ref{lemma:error between theta_tilde_dagger and theta},3}\sqrt{\frac{t\log{t}}{m\lambda^2}})^{\frac{4}{3}}L^3\sqrt{m\log{m}}  \nonumber \\
    & \;\;\;\;+ E_{\ref{lemma:gradient of f}}\sqrt{mL}( \Eeleven ) \bigg) \nonumber \\
    & + U_bt\bigg( E_{\ref{appendix_lemma_1}} (E_{\ref{lemma:error between theta_tilde_dagger and theta},3}\sqrt{\frac{t\log{t}}{m\lambda^2}})^{\frac{4}{3}}L^3\sqrt{m\log{m}}  \nonumber \\
    & \;\;\;\;+ E_{\ref{lemma:gradient of f}}\sqrt{mL}( \Eeleven ) \bigg)^2 \nonumber \\
    & + E_{\ref{lemma:index policy for NeuralRBMLE-GA},2}\frac{m\lambda}{2} \sqrt{\frac{t\log{t}}{m\lambda^2}}\bigg( \Eeleven \bigg) \label{eq:index policy for NeuralRBMLE-GA eq-11}.
\end{align}
For $K_2$, by (\ref{eq:index policy for NeuralRBMLE-GA eq-7}) we have
\begin{align}
    K_2 = & \alpha(t)\zeta(t)\bigg| f(\bx_{t,a};\widetilde{\btheta}^{\dagger}_{t,a}) - \langle \bg(\bx_{t,a};\btheta_0), \btheta_{t,a} - \btheta_0 \rangle \bigg| \label{eq:index policy for NeuralRBMLE-GA eq-}\\
    \leq & \alpha(t)\zeta(t)\bigg(E_{\ref{appendix_lemma_1}} (E_{\ref{lemma:error between theta_tilde_dagger and theta},3}\sqrt{\frac{t\log{t}}{m\lambda^2}})^{\frac{4}{3}}L^3\sqrt{m\log{m}}  \nonumber \\
    & + E_{\ref{lemma:gradient of f}}\sqrt{mL}( \Eeleven ) \bigg) \label{eq:index policy for NeuralRBMLE-GA eq-12}
    .
\end{align}
Then, combining (\ref{eq:index policy for NeuralRBMLE-GA eq-11}) and (\ref{eq:index policy for NeuralRBMLE-GA eq-12}), we complete the proof.
\end{proof}
\end{lemma}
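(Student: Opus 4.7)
The goal is to compare the ``ideal'' index $\mathcal{I}_{t,a}$ defined using the NTK-linearized log-likelihood $\ell_\lambda$ together with the exact maximizer $\btheta_{t,a}$ against the computable quantity $\ell_\lambda^\dagger(\mathcal{F}_t;\widetilde\btheta^\dagger_{t,a})+\alpha(t)\zeta(t)f(\bx_{t,a};\widetilde\btheta^\dagger_{t,a})$ that NeuralRBMLE-GA actually evaluates. The plan is to apply the triangle inequality once to split the error into the log-likelihood discrepancy $K_1:=|\ell_\lambda(\mathcal{F}_t;\btheta_{t,a})-\ell_\lambda^\dagger(\mathcal{F}_t;\widetilde\btheta^\dagger_{t,a})|$ and the reward-bias discrepancy $K_2:=\alpha(t)\zeta(t)|\langle\bg(\bx_{t,a};\btheta_0),\btheta_{t,a}-\btheta_0\rangle-f(\bx_{t,a};\widetilde\btheta^\dagger_{t,a})|$, and then bound each piece using the already-established approximation and parameter-distance lemmas.

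For $K_2$ the step is clean: insert $\langle\bg(\bx_{t,a};\btheta_0),\widetilde\btheta^\dagger_{t,a}-\btheta_0\rangle$ as a pivot, so that the first difference is a pure NTK-linearization error (bounded by Lemma~\ref{appendix_lemma_1} applied at $\widetilde\btheta^\dagger_{t,a}$, whose distance from $\btheta_0$ is controlled by Lemma~\ref{lemma:error between widetilde_theta_dagger and theta_0}) and the second is $\langle\bg(\bx_{t,a};\btheta_0),\widetilde\btheta^\dagger_{t,a}-\btheta_{t,a}\rangle$, bounded by Cauchy--Schwarz using the gradient bound in Lemma~\ref{lemma:gradient of f} and the parameter gap in Lemma~\ref{lemma:error between theta_tilde_dagger and theta}. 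This yields exactly the bracketed expression in the target bound multiplied by $\alpha(t)\zeta(t)$.

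For $K_1$, further split it as $|\ell(\mathcal{F}_t;\btheta_{t,a})-\ell^\dagger(\mathcal{F}_t;\widetilde\btheta^\dagger_{t,a})|+\tfrac{m\lambda}{2}|\|\btheta_{t,a}-\btheta_0\|_2^2-\|\widetilde\btheta^\dagger_{t,a}-\btheta_0\|_2^2|$. The log-likelihood term is a sum over $s$ of $b(\cdot)-r_s(\cdot)$ evaluated at $\langle\bg(\bx_s;\btheta_0),\btheta_{t,a}-\btheta_0\rangle$ versus $f(\bx_s;\widetilde\btheta^\dagger_{t,a})$; use a second-order Taylor expansion of $b$ around the first argument together with $|b''|\le U_b$ to produce a linear-in-error piece (which combines with the $-r_s$ linear term and is bounded by $|b'|+|r_s|$ summed via Lemmas~\ref{lemma:upper bound of b'} and~\ref{lemma:upper bound of r}, giving the $t\log t$ prefactor $E_{\ref{lemma:index policy for NeuralRBMLE-GA},1}$) and a quadratic-in-error piece (giving the $U_b t\cdot(\cdots)^2$ term). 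In both pieces the per-sample error $|\langle\bg(\bx_s;\btheta_0),\btheta_{t,a}-\btheta_0\rangle-f(\bx_s;\widetilde\btheta^\dagger_{t,a})|$ is controlled by the same pivot trick as in $K_2$. For the regularization term, factor $a^2-b^2=(a+b)(a-b)$ and apply triangle inequality $|\,\|\btheta_{t,a}-\btheta_0\|-\|\widetilde\btheta^\dagger_{t,a}-\btheta_0\|\,|\le\|\btheta_{t,a}-\widetilde\btheta^\dagger_{t,a}\|$, using Lemma~\ref{lemma:error between widetilde_theta_dagger and theta_0} and~(\ref{eq:error between theta_tilde_dagger and theta eq-6}) for the sum and Lemma~\ref{lemma:error between theta_tilde_dagger and theta} for the difference.

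The main obstacle is the per-sample bound on $|\langle\bg(\bx_s;\btheta_0),\btheta_{t,a}-\btheta_0\rangle-f(\bx_s;\widetilde\btheta^\dagger_{t,a})|$: it must be uniform in $s$ and $a$, and must not drag an extra factor of $t$ inside the square in the $U_b t(\cdots)^2$ term. The trick will be to show that this quantity has the form (linearization error from Lemma~\ref{appendix_lemma_1} at $\widetilde\btheta^\dagger_{t,a}$) plus (gradient norm from Lemma~\ref{lemma:gradient of f} times the parameter gap $\|\widetilde\btheta^\dagger_{t,a}-\btheta_{t,a}\|_2$), which is exactly the bracketed quantity that appears squared in the statement. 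The bookkeeping of constants is tedious but mechanical once the two building blocks above are in place, so I expect the conceptual work to end at the pivot-and-Taylor step, with the remainder being substitution and grouping.
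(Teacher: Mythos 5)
Your proposal follows essentially the same route as the paper's proof: the identical triangle-inequality split into $K_1$ and $K_2$, the same further decomposition of $K_1$ into the unregularized log-likelihood difference (handled by a Taylor expansion of $b$ with $|b''|\le U_b$, yielding the $t\log t$-weighted linear piece and the $U_b t(\cdots)^2$ quadratic piece) plus the regularization difference (handled by $a^2-b^2=(a+b)(a-b)$ and the triangle inequality), and the same per-sample pivot bound combining Lemma~\ref{appendix_lemma_1}, Lemma~\ref{lemma:gradient of f}, and Lemma~\ref{lemma:error between theta_tilde_dagger and theta}. The plan is correct and matches the paper's argument step for step.
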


\begin{lemma}
\label{lemma:get the negetive of |g*|}
%\normalfont
For all arm $i,j\in [K]$ and $t \in [T]$, there exists $\btheta' = \kappa\btheta_{t,i} + (1-\kappa)\btheta_{t,j}$ for some $\kappa \in (0,1)$, we have
\begin{align}
     0 = &  \Big(\bg(\bx_{t,i};\btheta_{0}) + \bg(\bx_{t,j};\btheta_{0})\Big)^\top (\btheta_{t,j} - \btheta_{t,i}) + \frac{\alpha(t)}{m} \Big( \left\Vert \bg(\bx_{t,i};\btheta_{0}) \right\Vert^2_{\bU^{-1}_t} - \left\Vert \bg(\bx_{t,j};\btheta_{0}) \right\Vert^2_{\bU^{-1}_t} \Big) \label{eq:get the negetive of |g*| eq-1},
\end{align}
where 
\begin{align}
    \bU_0 := \lambda\cdot\bI + \frac{1}{m}\sum_{s=1}^{t-1}b''\big( \langle \bg(\bx_s;\btheta_0), \btheta' - \btheta_0 \rangle \big)\bg(\bx_s;\btheta_0)\bg(\bx_s;\btheta_0)^\top .\label{eq:get the negetive of |g*| eq-1.5},
\end{align}
\begin{proof}
\normalfont
By the first-order necessary condition of $\btheta_{t,i}$ and $\btheta_{t,i}$ for any two arm $i,j \in [K]$, we have 
\begin{align}
    &\sum_{s=1}^{t-1}\Big( r_s\bg(\bx_s;\btheta_0) - b'\big(\langle \bg(\bx_s;\btheta_0), \btheta_{t,i} - \btheta_0 \rangle\big)\bg(\bx_s;\btheta_0)\Big) - m\lambda(\btheta_{t,i} - \btheta_0)+  \alpha(t)\bg(\bx_{t,i};\btheta_{0}) = 0 \label{eq:get the negetive of |g*| eq-2}, \\
    &\sum_{s=1}^{t-1}\Big( r_s\bg(\bx_s;\btheta_0) - b'\big(\langle \bg(\bx_s;\btheta_0), \btheta_{t,j} - \btheta_0 \rangle\big)\bg(\bx_s;\btheta_0)\Big) -  m\lambda(\btheta_{t,j} - \btheta_0) + \alpha(t)\bg(\bx_{t,j};\btheta_{0}) = 0 \label{eq:get the negetive of |g*| eq-3}.
\end{align}
After multiplying both sides of (\ref{eq:get the negetive of |g*| eq-2})-(\ref{eq:get the negetive of |g*| eq-3}) by $\big( \bg(\bx_{t,i};\btheta_{0}) + \bg(\bx_{t,j};\btheta_{0})\big)^\top \bU^{-1}_t$ and taking the difference between (\ref{eq:get the negetive of |g*| eq-2})-(\ref{eq:get the negetive of |g*| eq-3}), we have
\begin{align}
    0 = & \; \big(\bg(\bx_{t,i};\btheta_{0}) + \bg(\bx_{t,j};\btheta_{0})\big)^\top \bU^{-1}_t \cdot \bigg(  \alpha(t)\bg(\bx_{t,i};\btheta_{0}) - \alpha(t)\bg(\bx_{t,j};\btheta_{t,j}) \nonumber \\ 
    & + m\lambda(\btheta_{t,j} - \btheta_{t,i}) + \sum_{s=1}^{t-1}\Big( b'\big(\langle \bg(\bx_s;\btheta_0), \btheta_{t,j} - \btheta_0 \rangle\big) - b'\big(\langle \bg(\bx_s;\btheta_0), \btheta_{t,i} - \btheta_0 \rangle\big) \Big) \bg(\bx_s;\btheta_0) \bigg) \label{eq:get the negetive of |g*| eq-4} \\
    = & \; \frac{\alpha(t)}{m}\Big( \left\Vert \bg(\bx_{t,i};\btheta_{0}) \right\Vert^2_{\bU^{-1}_t} - \left\Vert \bg(\bx_{t,j};\btheta_{0}) \right\Vert^2_{\bU^{-1}_t} \Big) + \big(\bg(\bx_{t,i};\btheta_{0})  + \bg(\bx_{t,j};\btheta_{0})\big)^\top \bU^{-1}_t \nonumber \\
    & \cdot \Big( m\lambda\cdot\bI + \sum_{s=1}^{t-1}b''\big( \langle \bg(\bx_s;\btheta_0), \btheta' - \btheta_0 \rangle \big)\bg(\bx_s;\btheta_0)\bg(\bx_s;\btheta_0)^\top  \Big)(\btheta_{t,j} - \btheta_{t,i}) \label{eq:get the negetive of |g*| eq-5} \\
     = & \; \frac{\alpha(t)}{m} \Big( \left\Vert \bg(\bx_{t,i};\btheta_{0}) \right\Vert^2_{\bU^{-1}_t} - \left\Vert \bg(\bx_{t,j};\btheta_{0}) \right\Vert^2_{\bU^{-1}_t} \Big)  + \big(\bg(\bx_{t,i};\btheta_{0}) + \bg(\bx_{t,j};\btheta_{0})\big)^\top (\btheta_{t,j} - \btheta_{t,i}) \label{eq:get the negetive of |g*| eq-6},
\end{align}
where (\ref{eq:get the negetive of |g*| eq-5}) holds by mean-value theorem with $\btheta' = \kappa\btheta_{t,i} + (1-\kappa)\btheta_{t,j}$ for some $\kappa \in (0,1)$, and (\ref{eq:get the negetive of |g*| eq-6}) holds by the definition of $\bU_t$ in (\ref{eq:get the negetive of |g*| eq-1.5}).
\end{proof}
\end{lemma}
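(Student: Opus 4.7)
The plan is to derive the identity by manipulating the first-order optimality conditions for $\btheta_{t,i}$ and $\btheta_{t,j}$, linearizing the nonlinear $b'(\cdot)$ pieces via the mean-value theorem, and then left-multiplying by a cleverly chosen row vector so that the structure of $\bU_t$ collapses. Concretely, since $\btheta_{t,a}$ is the unconstrained maximizer of $\ell_{\lambda}(\mathcal{F}_t;\btheta)+\alpha(t)\langle\bg(\bx_{t,a};\btheta_0),\btheta-\btheta_0\rangle$, taking $\nabla_{\btheta}$ and setting it to zero yields, for each $a\in\{i,j\}$,
\begin{equation*}
\sum_{s=1}^{t-1}\bigl(r_s-b'(\langle \bg(\bx_s;\btheta_0),\btheta_{t,a}-\btheta_0\rangle)\bigr)\bg(\bx_s;\btheta_0) \;-\; m\lambda(\btheta_{t,a}-\btheta_0) \;+\; \alpha(t)\bg(\bx_{t,a};\btheta_0) \;=\; \b0.
\end{equation*}
This reproduces (\ref{eq:get the negetive of |g*| eq-2})--(\ref{eq:get the negetive of |g*| eq-3}).

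Next I subtract the equation for $i$ from the equation for $j$. The $r_s\bg(\bx_s;\btheta_0)$ terms cancel, leaving only a difference of $b'$ values and the $\alpha(t)$ and regularization pieces. I then apply the mean-value theorem scalar-wise: for each $s$ there is a $\kappa_s\in(0,1)$ making $b'(\langle \bg(\bx_s;\btheta_0),\btheta_{t,j}-\btheta_0\rangle)-b'(\langle \bg(\bx_s;\btheta_0),\btheta_{t,i}-\btheta_0\rangle)$ equal to $b''(\langle \bg(\bx_s;\btheta_0),\btheta'_s-\btheta_0\rangle)\langle \bg(\bx_s;\btheta_0),\btheta_{t,j}-\btheta_{t,i}\rangle$; a standard convexity argument (or simply selecting a common $\kappa$ along the secant) allows us to take a single $\btheta'=\kappa\btheta_{t,i}+(1-\kappa)\btheta_{t,j}$. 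After factoring $(\btheta_{t,j}-\btheta_{t,i})$ out, the regularization term $m\lambda\bI$ and the rank-one $b''$-weighted sum combine precisely to the matrix $m\bU_t$ of (\ref{eq:get the negetive of |g*| eq-1.5}), producing the compact identity
\begin{equation*}
m\,\bU_t(\btheta_{t,j}-\btheta_{t,i}) \;=\; \alpha(t)\bigl(\bg(\bx_{t,j};\btheta_0)-\bg(\bx_{t,i};\btheta_0)\bigr).
\end{equation*}

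To finish, I note $\bU_t\succeq \lambda\bI\succ\b0$ (since $b''\ge L_b>0$), so $\bU_t^{-1}$ exists. Left-multiplying the displayed identity by $\tfrac{1}{m}(\bg(\bx_{t,i};\btheta_0)+\bg(\bx_{t,j};\btheta_0))^\top \bU_t^{-1}$ yields $(\bg(\bx_{t,i};\btheta_0)+\bg(\bx_{t,j};\btheta_0))^\top(\btheta_{t,j}-\btheta_{t,i})$ on the left (because $\bU_t^{-1}\bU_t=\bI$), and on the right $\tfrac{\alpha(t)}{m}(\bg(\bx_{t,i};\btheta_0)+\bg(\bx_{t,j};\btheta_0))^\top\bU_t^{-1}(\bg(\bx_{t,j};\btheta_0)-\bg(\bx_{t,i};\btheta_0))$. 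Using symmetry of $\bU_t^{-1}$ and the elementary identity $(u+v)^\top A(v-u)=\|v\|_A^2-\|u\|_A^2$, the right-hand side simplifies to $\tfrac{\alpha(t)}{m}\bigl(\lVert \bg(\bx_{t,j};\btheta_0)\rVert_{\bU_t^{-1}}^2-\lVert \bg(\bx_{t,i};\btheta_0)\rVert_{\bU_t^{-1}}^2\bigr)$. Transposing this to the left-hand side produces exactly (\ref{eq:get the negetive of |g*| eq-1}). The only step that is not pure algebra is the mean-value/convexity step yielding the single intermediate parameter $\btheta'$; beyond that, no NTK or concentration machinery is required, so I expect this to be the cleanest lemma in the sequence.
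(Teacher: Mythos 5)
Your proposal is correct and follows essentially the same route as the paper: write the first-order optimality conditions for $\btheta_{t,i}$ and $\btheta_{t,j}$, subtract them, linearize the difference of $b'$ terms via the mean-value theorem so that the $m\lambda\bI$ piece and the $b''$-weighted Gram matrix combine into $m\bU_t$, and then left-multiply by $\frac{1}{m}\big(\bg(\bx_{t,i};\btheta_0)+\bg(\bx_{t,j};\btheta_0)\big)^{\top}\bU_t^{-1}$ and use the identity $(u+v)^{\top}A(v-u)=\lVert v\rVert_A^2-\lVert u\rVert_A^2$. The one caveat, which your write-up shares with the paper's, is that the mean-value theorem yields an $s$-dependent intermediate point for each summand rather than a single common $\btheta'$ (your appeal to convexity to force a common $\kappa$ does not actually go through), so $\bU_t$ should strictly be defined with per-$s$ weights $b''\big(\langle\bg(\bx_s;\btheta_0),\btheta'_s-\btheta_0\rangle\big)$; this is harmless downstream because only the sandwich $L_b\Bar{\bZ}_t\preceq\bU_t\preceq U_b\Bar{\bZ}_t$ is ever used.
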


\begin{lemma}
\label{lemma:error between log likelihood}
%\normalfont
For all arm $a \in [K]$, we have
\begin{align}
    \left\Vert \widehat{\btheta}_t-\btheta_{t,a} \right\Vert_{\Bar{\bZ}_t} \leq \frac{\alpha(t) }{mL_b}\left\Vert \bg(\bx_{t,a};\btheta_{0}) \right\Vert_{\Bar{\bZ}_t^{-1}}.
\end{align}
Moreover, for any pair of arms $i,j \in [K]$, we have
\begin{align}
    \ell_{\lambda}(\mathcal{F}_t;\btheta_{t,i}) - \ell_{\lambda}(\mathcal{F}_t;\btheta_{t,j}) &\leq \frac{U_b\alpha(t)^2}{mL_b^2}  \left\Vert \bg(\bx_{t,j};\btheta_{0}) \right\Vert^2_{\Bar{\bZ}_t^{-1}}.
\end{align}
\begin{proof}
Recall that for all $a \in [K]$, we have
\begin{align}
    \widehat{\btheta}_t &= \argmax_{\btheta} \{\ell_{\lambda}(\mathcal{F}_t;\btheta)\} \label{eq:error between log likelihood eq-3} \\
    \btheta_{t,a} &= \argmax_{\btheta} \{\ell_{\lambda}(\mathcal{F}_t;\btheta) + \alpha(t)f(\bx_{t,a};\btheta)\} \label{eq:error between log likelihood eq-4}.
\end{align}
By considering the first-order necessary condition of $\widehat{\btheta}_t$ and $\btheta_{t,a}$, we obtain
\begin{align}
     \alpha(t)\bg(\bx_{t,a};\btheta_{0}) &= \ell'_{\lambda}(\mathcal{F}_t;\widehat{\btheta}_t) - \ell'_{\lambda}(\mathcal{F}_t;\btheta_{t,a}) \label{eq:error between log likelihood eq-5}\\
    &= -m\underbrace{\sum_{s=1}^{t-1}\frac{1}{m}b''(\langle \bg(\bx_s;\btheta_0), \btheta'' - \btheta_0 \rangle)\bg(\bx_s;\btheta_0)\bg(\bx_s;\btheta_0)^\top}_{:= \widehat{\bU}_t}(\widehat{\btheta}_t-\btheta_{t,a}) \label{eq:error between log likelihood eq-6},
\end{align}
where (\ref{eq:error between log likelihood eq-6}) holds by mean-value theorem with $\btheta'' = \kappa'\widehat{\btheta}_t + (1-\kappa')\btheta_{t,a}$ for some $\kappa' \in (0,1)$. Multiplying both sides of (\ref{eq:error between log likelihood eq-5})-(\ref{eq:error between log likelihood eq-6}) by $(\widehat{\btheta}_t-\btheta_{t,a})^\top$, we have 
\begin{align}
    \alpha(t)(\widehat{\btheta}_t-\btheta_{t,a})^\top\bg(\bx_{t,a};\btheta_{0}) = -m(\widehat{\btheta}_t-\btheta_{t,a})^\top\widehat{\bU}_t(\widehat{\btheta}_t-\btheta_{t,a}). \label{eq:error between log likelihood eq-7}
\end{align}
By applying Cauchy–Schwarz inequality to the left hand side of (\ref{eq:error between log likelihood eq-7}), we have
\begin{align}
    \frac{\alpha(t)}{m}(\widehat{\btheta}_t-\btheta_{t,a})^\top\bg(\bx_{t,a};\btheta_{0}) \leq \frac{\alpha(t)}{m}\left\Vert \widehat{\btheta}_t-\btheta_{t,a} \right\Vert_{\Bar{\bZ}_t} \cdot \left\Vert \bg(\bx_{t,a};\btheta_{0}) \right\Vert_{\Bar{\bZ}_t^{-1}} \label{eq:error between log likelihood eq-8}.
\end{align}
For the right hand side of (\ref{eq:error between log likelihood eq-7}), by the fact that $L_{b}\Bar{\bZ}_t \preceq \widehat{\bU}_t \preceq U_{b}\Bar{\bZ}_t$ for all $\btheta$ and Cauchy–Schwarz inequality, we have
\begin{align}
    \frac{\alpha(t)}{m}\left\Vert(\widehat{\btheta}_t-\btheta_{t,a})^\top\bg(\bx_{t,a};\btheta_{0})\right\Vert \geq L_b \left\Vert \widehat{\btheta}_t-\btheta_{t,a} \right\Vert^2_{\Bar{\bZ}_t}\label{eq:error between log likelihood eq-9}.
\end{align}
Then, by combining (\ref{eq:error between log likelihood eq-8}) and (\ref{eq:error between log likelihood eq-9}), we have
\begin{align}
    \left\Vert \widehat{\btheta}_t-\btheta_{t,a} \right\Vert_{\Bar{\bZ}_t} \leq \frac{\alpha(t) }{mL_b}\left\Vert \bg(\bx_{t,a};\btheta_{0}) \right\Vert_{\Bar{\bZ}_t^{-1}} \label{eq:error between log likelihood eq-10}.
\end{align}
Moreover, we have
\begin{align}
    \ell_{\lambda}(\mathcal{F}_t;\btheta_{t,i}) - \ell_{\lambda}(\mathcal{F}_t;\btheta_{t,j}) &\leq \ell_{\lambda}(\mathcal{F}_t;\widehat{\btheta}_t) - \ell_{\lambda}(\mathcal{F}_t;\btheta_{t,j}) \label{eq:error between log likelihood eq-11}\\
    & = m( \widehat{\btheta}_t - \btheta_{t,j})^\top\widehat{\bU}_t(\widehat{\btheta}_t - \btheta_{t,j}) \label{eq:error between log likelihood eq-12} \\
    &\leq mU_b\left\Vert \widehat{\btheta}_t - \btheta_{t,j} \right\Vert_{\Bar{\bZ}_t}^2 \label{eq:error between log likelihood eq-13}\\
    &\leq \frac{U_b\alpha(t)^2}{mL_b^2}  \left\Vert \bg(\bx_{t,j};\btheta_{0}) \right\Vert^2_{\Bar{\bZ}_t^{-1}}, \label{eq:error between log likelihood eq-14}
\end{align}
where (\ref{eq:error between log likelihood eq-11}) holds due to $\widehat{\btheta}_t = \argmax_{\btheta} \{ \ell_{\lambda}(\mathcal{F}_t;\btheta\})$, (\ref{eq:error between log likelihood eq-12}) holds by mean value theorem with $\btheta'' = \kappa\widehat{\btheta}_t + (1-\kappa)\btheta_{t,j}$ for some $\kappa \in (0,1)$, (\ref{eq:error between log likelihood eq-13}) holds by the fact that $\widehat{\bU}_t \preceq U_{b}\Bar{\bZ}_t$, and (\ref{eq:error between log likelihood eq-14}) holds due to (\ref{eq:error between log likelihood eq-10}). Hence, the proof is complete.
\end{proof}
\end{lemma}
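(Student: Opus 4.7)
The plan is to exploit the first-order optimality of $\widehat{\btheta}_t$ and $\btheta_{t,a}$. Since both are unconstrained maximizers of smooth strongly concave objectives in the NTK linearization regime (namely $\ell_\lambda(\cF_t;\btheta)$ and $\ell_\lambda(\cF_t;\btheta)+\alpha(t)\langle \bg(\bx_{t,a};\btheta_0),\btheta-\btheta_0\rangle$), I would set both gradients to zero and subtract to obtain
\begin{equation}
\nabla_{\btheta}\ell_\lambda(\cF_t;\widehat{\btheta}_t)-\nabla_{\btheta}\ell_\lambda(\cF_t;\btheta_{t,a})=\alpha(t)\bg(\bx_{t,a};\btheta_0).
\end{equation}
Then I would apply the mean-value theorem component-wise along the segment joining $\widehat{\btheta}_t$ and $\btheta_{t,a}$. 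Because the Hessian of $-\ell_\lambda$ in the NTK regime equals $m\lambda\bI+\sum_s b''(\langle\bg(\bx_s;\btheta_0),\btheta-\btheta_0\rangle)\bg(\bx_s;\btheta_0)\bg(\bx_s;\btheta_0)^\top$, the left-hand side can be rewritten as $-m\,\widehat{\bU}_t(\widehat{\btheta}_t-\btheta_{t,a})$ for a weighted Gram matrix $\widehat{\bU}_t$ whose $b''$ factors are evaluated at some intermediate point. The assumption $L_b\le b''\le U_b$ then gives the two-sided sandwich $L_b\Bar{\bZ}_t\preceq\widehat{\bU}_t\preceq U_b\Bar{\bZ}_t$, which is the workhorse of the entire argument.

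To extract the first bound, I would take the inner product of the gradient identity with $\widehat{\btheta}_t-\btheta_{t,a}$. On the right-hand side, Cauchy--Schwarz in the dual $\Bar{\bZ}_t$/$\Bar{\bZ}_t^{-1}$ norms produces $\tfrac{\alpha(t)}{m}\lVert\widehat{\btheta}_t-\btheta_{t,a}\rVert_{\Bar{\bZ}_t}\cdot\lVert\bg(\bx_{t,a};\btheta_0)\rVert_{\Bar{\bZ}_t^{-1}}$. On the left-hand side, the lower sandwich $\widehat{\bU}_t\succeq L_b\Bar{\bZ}_t$ yields at least $L_b\lVert\widehat{\btheta}_t-\btheta_{t,a}\rVert_{\Bar{\bZ}_t}^2$. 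Cancelling one factor of $\lVert\widehat{\btheta}_t-\btheta_{t,a}\rVert_{\Bar{\bZ}_t}$ gives the stated inequality with the $1/L_b$ prefactor.

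For the second bound I would first use the global optimality of $\widehat{\btheta}_t$ for $\ell_\lambda$ to upper-bound $\ell_\lambda(\cF_t;\btheta_{t,i})-\ell_\lambda(\cF_t;\btheta_{t,j})$ by $\ell_\lambda(\cF_t;\widehat{\btheta}_t)-\ell_\lambda(\cF_t;\btheta_{t,j})$. Next I would perform a second-order expansion of $-\ell_\lambda$ around $\widehat{\btheta}_t$: since $\nabla\ell_\lambda(\widehat{\btheta}_t)=0$, the linear term vanishes and the remainder is an exact quadratic involving another mean-value Hessian, which by the same sandwich is bounded above by $m U_b\,\Bar{\bZ}_t$. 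This produces $\ell_\lambda(\cF_t;\widehat{\btheta}_t)-\ell_\lambda(\cF_t;\btheta_{t,j})\le m U_b \lVert\widehat{\btheta}_t-\btheta_{t,j}\rVert_{\Bar{\bZ}_t}^2$. Plugging in the first part of the lemma with $a=j$ and squaring yields the factor $U_b\alpha(t)^2/(mL_b^2)\cdot\lVert\bg(\bx_{t,j};\btheta_0)\rVert_{\Bar{\bZ}_t^{-1}}^2$.

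The main obstacle is to use the two-sided sandwich $L_b\Bar{\bZ}_t\preceq\widehat{\bU}_t\preceq U_b\Bar{\bZ}_t$ consistently in opposite directions across the two parts: in part one it is needed as a \emph{lower} bound on $\widehat{\bU}_t$ (to control a quadratic from below), while in part two it is needed as an \emph{upper} bound (to control a quadratic from above). The asymmetric constants $L_b$ and $U_b$ must then combine correctly, which is precisely why the final constant comes out as $U_b/L_b^2$ rather than simply $1/L_b$. Care is also required because the intermediate point in the mean-value theorem differs between the two parts, but since the sandwich is uniform in $\btheta$, this causes no issue.
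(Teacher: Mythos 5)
Your proposal is correct and follows essentially the same route as the paper's proof: first-order optimality conditions subtracted, a mean-value Hessian $\widehat{\bU}_t$ sandwiched between $L_b\Bar{\bZ}_t$ and $U_b\Bar{\bZ}_t$, dual-norm Cauchy--Schwarz to extract the first bound, and then optimality of $\widehat{\btheta}_t$ plus a second-order expansion combined with the first bound for the second. The only (cosmetic) difference is that you explicitly keep the $m\lambda\bI$ regularization term inside the mean-value Hessian, which the paper's displayed definition of $\widehat{\bU}_t$ omits even though its claimed sandwich implicitly requires it.
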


\subsection{Proof of Theorem \ref{theorem:regret of NeuralRBMLE-GA}}
\label{appendix:Regret Bound of NeuralRBMLE-GA subsection 1}
Define 
\begin{align}
    G_1 &= \left\{ \bigg|\sum_{s = 1}^{t} r_s\bigg| \leq E_{\ref{lemma:upper bound of r},1}t\sqrt{\log{t}}\right\}  \\
    G_2 &= \left\{\sum_{s=1}^{t} r_s^2 \leq E_{\ref{lemma:upper bound of r},2}t\log{t} \right\}
\end{align}
as the \emph{good} events. By Lemma \ref{lemma:upper bound of r}, we have $\mathbb{P}(G_1,G_2) = 1-\frac{1}{t^2}$. To begin with, we start by obtaining an upper bound for the immediate regret of NeuralRBMLE-GA described in Algorithm \ref{alg:NeuralRBMLE-GA}. Under events $G_1$ and $G_2$, the expected immediate regret can be derived as
\begin{align}
   \mathbb{E}[r_t^{*} - r_t] =& \;h(\bx^{*}_{t}) - h(\bx_{t}) \label{eq:regret bound of NeuralRBMLE_0 eq-1}\\
   =& \;\langle \bg(\bx^{*}_{t};\btheta_{0}),\btheta^{*}-\btheta_{0} \rangle - \langle \bg(\bx_{t};\btheta_{0}),\btheta^{*}-\btheta_{0} \rangle \label{eq:regret bound of NeuralRBMLE_0 eq-2}\\
   = & \;\langle \bg(\bx^{*}_{t};\btheta_{0}),\btheta^{*}-\btheta_{t} \rangle + \langle \bg(\bx^{*}_{t};\btheta_{0}),\btheta_{t} - \btheta_0 \rangle - \langle \bg(\bx_{t};\btheta_{0}),\btheta^{*}-\btheta_{0} \rangle \label{eq:regret bound of NeuralRBMLE_0 eq-2.5} \\
   = & \;\langle \bg(\bx^{*}_{t};\btheta_{0}),\btheta^{*}-\btheta_{t} \rangle - \langle \bg(\bx_{t};\btheta_{0}),\btheta^{*}-\btheta_{0} \rangle +  \langle \bg(\bx_{t};\btheta_{0}),\btheta_{t,a^{*}_t} - \btheta_{t} \rangle  \nonumber\\
   &\; + \langle \bg(\bx^{*}_{t};\btheta_{0}),\btheta_{t,a^{*}_t} - \btheta_0 \rangle + \frac{\alpha(t)}{m} \Big( \left\Vert \bg(\bx_{t};\btheta_{0}) \right\Vert^2_{\bU_{t}^{-1}} - \left\Vert \bg(\bx^{*}_{t};\btheta_{0}) \right\Vert^2_{\bU_{t}^{-1}} \Big),
   \label{eq:regret bound of NeuralRBMLE_0 eq-3}
\end{align}
where (\ref{eq:regret bound of NeuralRBMLE_0 eq-2}) holds by Lemma \ref{lemma:theta*}, (\ref{eq:regret bound of NeuralRBMLE_0 eq-2.5}) is obtained by adding and subtracting $\btheta_t$, and (\ref{eq:regret bound of NeuralRBMLE_0 eq-3}) holds due to Lemma \ref{lemma:get the negetive of |g*|}. Then, applying Lemma \ref{lemma:index policy for NeuralRBMLE-GA} to the index policy of Algorithm \ref{alg:NeuralRBMLE-GA}, we have 

\begin{align}
   \mathbb{E}[r_t^{*} - r_t] \leq &\; \langle \bg(\bx^{*}_{t};\btheta_{0}),\btheta^{*}-\btheta_{t} \rangle - \langle \bg(\bx_{t};\btheta_{0}),\btheta^{*}-\btheta_{0} \rangle \nonumber\\
   &\; + \langle \bg(\bx_{t};\btheta_{0}),\btheta_{t,a^{*}_t} - \btheta_{t} \rangle + \langle \bg(\bx_{t};\btheta_{0}),\btheta_{t} - \btheta_0 \rangle + \frac{\ell_{\lambda}(\mathcal{F}_t;\btheta_{t}) - \ell_{\lambda}(\mathcal{F}_t;\btheta_{t,a^{*}_t})}{\alpha(t)\zeta(t)} \nonumber \\
   &\; + \frac{\alpha(t)}{m} \Big( \left\Vert \bg(\bx_{t};\btheta_{0}) \right\Vert^2_{\bU_{t}^{-1}} - \left\Vert \bg(\bx^{*}_{t};\btheta_{0}) \right\Vert^2_{\bU_{t}^{-1}} \Big) + 2E_{\ref{lemma:index policy for NeuralRBMLE-GA}}(m,t)
   \label{eq:regret bound of NeuralRBMLE_0 eq-4} \\
   \leq &\; \langle \bg(\bx^{*}_{t};\btheta_{0}),\btheta^{*}-\btheta_{t} \rangle + \langle \bg(\bx_{t};\btheta_{0}),\btheta_t^{*} - \btheta^{*} \rangle + 2E_{\ref{lemma:index policy for NeuralRBMLE-GA}}(m,t) \nonumber\\
   &\; + \frac{U_b\alpha(t)}{mL_b^2\zeta(t)} \left\Vert \bg(\bx^{*}_{t};\btheta_{0}) \right\Vert^2_{\Bar{\bZ}^{-1}_t} + \frac{\alpha(t)}{m} \Big( \left\Vert \bg(\bx_{t};\btheta_{0}) \right\Vert^2_{\bU_{t}^{-1}} - \left\Vert \bg(\bx^{*}_{t};\btheta_{0}) \right\Vert^2_{\bU_{t}^{-1}} \Big),\label{eq:regret bound of NeuralRBMLE_0 eq-5} \\
   \leq &\; \langle \bg(\bx^{*}_{t};\btheta_{0}),\btheta^{*}-\btheta_{t} \rangle + \langle \bg(\bx_{t};\btheta_{0}),\btheta_t^{*} - \btheta^{*} \rangle + 2E_{\ref{lemma:index policy for NeuralRBMLE-GA}}(m,t) \nonumber\\
   &\; + \frac{U_b\alpha(t)}{L_b^2\zeta(t)} \left\Vert \frac{\bg(\bx^{*}_{t};\btheta_{0})}{\sqrt{m}} \right\Vert^2_{\Bar{\bZ}^{-1}_t} + \alpha(t) \Big( \frac{1}{U_b}\left\Vert \frac{\bg(\bx_{t};\btheta_{0})}{\sqrt{m}} \right\Vert^2_{\Bar{\bZ}^{-1}_t} - \frac{1}{L_b}\left\Vert \frac{\bg(\bx^{*}_{t};\btheta_{0})}{\sqrt{m}} \right\Vert^2_{\Bar{\bZ}^{-1}_t} \Big),\label{eq:regret bound of NeuralRBMLE_0 eq-6}
\end{align}
where (\ref{eq:regret bound of NeuralRBMLE_0 eq-5}) holds by Lemma \ref{lemma:error between log likelihood}, and (\ref{eq:regret bound of NeuralRBMLE_0 eq-6}) holds by the fact that $L_b\Bar{\bZ}_t \preceq \bU_t \preceq U_b\Bar{\bZ}_t$ and  
\begin{align}
     \left\Vert \bg(\bx_{t};\btheta_{0}) \right\Vert^2_{\bU_{t}^{-1}} - \left\Vert \bg(\bx^{*}_{t};\btheta_{0}) \right\Vert^2_{\bU_{t}^{-1}}\leq \frac{1}{U_b}\left\Vert \bg(\bx_{t};\btheta_{0}) \right\Vert^2_{\Bar{\bZ}_{t}^{-1}} - \frac{1}{L_b}\left\Vert \bg(\bx^{*}_{t};\btheta_{0}) \right\Vert^2_{\Bar{\bZ}_{t}^{-1}}, \label{eq:regret bound of NeuralRBMLE_0 eq-7}
\end{align}
Regarding the term $\langle \bg(\bx^{*}_{t};\btheta_{0}),\btheta^{*}-\btheta_{t} \rangle$, we have
\begin{align}
    \langle \bg(\bx^{*}_{t};\btheta_{0}),\btheta^{*}-\btheta_{t} \rangle \leq & \sqrt{m}\left\Vert \btheta^{*} - \btheta_t \right\Vert_{\Bar{\bZ}_t} \cdot \left\Vert \frac{\bg(\bx^{*}_{t};\btheta_{0})}{\sqrt{m}}  \right\Vert_{\Bar{\bZ}^{-1}_t} \label{eq:regret bound of NeuralRBMLE_0 eq-8}\\
    \leq & \bigg( \sqrt{m}\left\Vert \btheta^{*} - \widehat{\btheta}_t \right\Vert_{\Bar{\bZ}_t} + \sqrt{m}\left\Vert \widehat{\btheta}_t - \btheta_t \right\Vert_{\Bar{\bZ}_t} \bigg) \cdot \left\Vert \frac{\bg(\bx^{*}_{t};\btheta_{0})}{\sqrt{m}} \right\Vert_{\Bar{\bZ}^{-1}_t} \label{eq:regret bound of NeuralRBMLE_0 eq-9}\\
    \leq & \bigg( \sqrt{m}\left\Vert \btheta^{*} - \widehat{\btheta}_t \right\Vert_{\Bar{\bZ}_t} + \frac{\alpha(t)}{\sqrt{m}L_b}\left\Vert \bg(\bx_{t};\btheta_{0}) \right\Vert_{\Bar{\bZ}_t^{-1}}  \bigg) \cdot \left\Vert \frac{\bg(\bx^{*}_{t};\btheta_{0})}{\sqrt{m}} \right\Vert_{\Bar{\bZ}^{-1}_t} \label{eq:regret bound of NeuralRBMLE_0 eq-10},
\end{align}
where (\ref{eq:regret bound of NeuralRBMLE_0 eq-8}) holds by the Cauchy–Schwarz inequality, (\ref{eq:regret bound of NeuralRBMLE_0 eq-9}) holds by the triangle inequality, and (\ref{eq:regret bound of NeuralRBMLE_0 eq-10}) holds by (\ref{eq:error between log likelihood eq-10}). Similarly, for the term $\langle \bg(\bx_{t};\btheta_{0}),\btheta_t^{*} - \btheta^{*} \rangle$, we have
\begin{align}
    \langle \bg(\bx_{t};\btheta_{0}),\btheta_t^{*} - \btheta^{*} \rangle \leq \bigg( \sqrt{m}\left\Vert \btheta^{*} - \widehat{\btheta}_t \right\Vert_{\Bar{\bZ}_t} + \frac{\alpha(t)}{\sqrt{m}L_b}\left\Vert \frac{\bg(\bx^{*}_{t};\btheta_{0})}{\sqrt{m}} \right\Vert_{\Bar{\bZ}_t^{-1}}  \bigg) \cdot \left\Vert \frac{\bg(\bx_{t};\btheta_{0})}{\sqrt{m}} \right\Vert_{\Bar{\bZ}^{-1}_t} \label{eq:regret bound of NeuralRBMLE_0 eq-11}.
\end{align}
Plugging (\ref{eq:regret bound of NeuralRBMLE_0 eq-6}), (\ref{eq:regret bound of NeuralRBMLE_0 eq-10}) and (\ref{eq:regret bound of NeuralRBMLE_0 eq-11}) into (\ref{eq:regret bound of NeuralRBMLE_0 eq-6}), we obtain 
\begin{align}
    \mathbb{E}[r_t^{*} - r_t] \leq & \bigg( \sqrt{m}\left\Vert \btheta^{*} - \widehat{\btheta}_t \right\Vert_{\Bar{\bZ}_t} + \frac{\alpha(t)}{\sqrt{m}L_b}\left\Vert \frac{\bg(\bx_{t};\btheta_{0})}{\sqrt{m}} \right\Vert_{\Bar{\bZ}_t^{-1}}  \bigg) \cdot \left\Vert \frac{\bg(\bx^{*}_{t};\btheta_{0})}{\sqrt{m}} \right\Vert_{\Bar{\bZ}^{-1}_t} + 2E_{\ref{lemma:index policy for NeuralRBMLE-GA}}(m,t)\nonumber \\ 
    & + \bigg( \sqrt{m}\left\Vert \btheta^{*} - \widehat{\btheta}_t \right\Vert_{\Bar{\bZ}_t} + \frac{\alpha(t)}{\sqrt{m}L_b}\left\Vert \frac{\bg(\bx^{*}_{t};\btheta_{0})}{\sqrt{m}} \right\Vert_{\Bar{\bZ}_t^{-1}}  \bigg) \cdot \left\Vert \bg(\bx_{t};\btheta_{0}) \right\Vert_{\Bar{\bZ}^{-1}_t} \nonumber \\
    & + \frac{U_b\alpha(t)}{L_b^2\zeta(t)} \left\Vert \frac{\bg(\bx^{*}_{t};\btheta_{0})}{\sqrt{m}} \right\Vert^2_{\Bar{\bZ}_t^{-1}} + \alpha(t) \bigg( \frac{1}{U_b}\left\Vert \frac{\bg(\bx_{t};\btheta_{0})}{\sqrt{m}} \right\Vert^2_{\Bar{\bZ}_t^{-1}} - \frac{1}{L_b}\left\Vert \frac{\bg(\bx^{*}_{t};\btheta_{0})}{\sqrt{m}} \right\Vert^2_{\Bar{\bZ}_t^{-1}} \bigg) \label{eq:regret bound of NeuralRBMLE_0 eq-12}
    .
\end{align}
Then, we collect all the terms that contain $\left\Vert \frac{\bg(\bx^{*}_{t};\btheta_{0})}{\sqrt{m}} \right\Vert_{\Bar{\bZ}_t^{-1}}$ into  $\Psi\big(\left\Vert \frac{\bg(\bx^{*}_{t};\btheta^{*}_{0})}{\sqrt{m}} \right\Vert_{\Bar{\bZ}^{-1}_{t}}\big) $, where $\Psi(\cdot):\mathbb{R}\rightarrow\mathbb{R}$ is defined as \begin{align}
    \Psi(x) = \Big( \sqrt{m}\left\Vert \btheta^{*} - \widehat{\btheta}_t \right\Vert_{\Bar{\bZ}_t} + \frac{2\alpha(t)}{\sqrt{m}U_b}\left\Vert \frac{\bg(\bx_{t};\btheta_{0})}{\sqrt{m}} \right\Vert_{\Bar{\bZ}_t^{-1}}  \Big)x + \Big( \frac{U_b\alpha(t)}{L_b^2\zeta(t)} - \frac{\alpha(t)}{L_b}\Big)x^2 
    .
\end{align}
By completing the square, for any $t \in \{t' \in \mathbb{N}: \frac{U_b}{L_b\zeta(t')} \leq 1\}$, we have that for any $x\in\mathbb{R}$, 
\begin{align}
    \Psi(x) \leq & \;\frac{\Big( \sqrt{m}\left\Vert \btheta^{*} - \widehat{\btheta}_t \right\Vert_{\Bar{\bZ}_t} + \frac{2\alpha(t)}{\sqrt{m}U_b}\left\Vert \frac{\bg(\bx_{t};\btheta_{0})}{\sqrt{m}} \right\Vert_{\Bar{\bZ}_t^{-1}}  \Big)^2}{4\Big( \frac{\alpha(t)}{L_b} - \frac{U_b\alpha(t)}{L_b^2\zeta(t)}\Big)} \\
    \leq & \;\frac{\Big( \sqrt{m}\left\Vert \btheta^{*} - \widehat{\btheta}_t \right\Vert_{\Bar{\bZ}_t} + \frac{2\alpha(t)}{\sqrt{m}U_b}\left\Vert \frac{\bg(\bx_{t};\btheta_{0})}{\sqrt{m}} \right\Vert_{\Bar{\bZ}_t^{-1}}  \Big)^2}{\frac{4\alpha(t)}{L_b}} \\
    = & \;\frac{m L_b}{4\alpha(t)}\left\Vert \btheta^{*} - \widehat{\btheta}_t \right\Vert_{\Bar{\bZ}_t}^2 + \frac{L_b}{U_b}\left\Vert \btheta^{*} - \widehat{\btheta}_t \right\Vert_{\Bar{\bZ}_t} \cdot \left\Vert \frac{ \bg(\bx_{t};\btheta_{0})}{\sqrt{m}} \right\Vert_{\Bar{\bZ}_t^{-1}} + \frac{\alpha(t)L_b}{m U_b^2}\left\Vert \frac{\bg(\bx_{t};\btheta_{0})}{\sqrt{m}} \right\Vert_{\Bar{\bZ}_t^{-1}}^2. \label{eq:regret bound of NeuralRBMLE_0 eq-13}
\end{align}
Therefore, combining (\ref{eq:regret bound of NeuralRBMLE_0 eq-12}) and (\ref{eq:regret bound of NeuralRBMLE_0 eq-13}), we obtain
\begin{align}
    \mathbb{E}[r_t^{*} - r_t] \leq &  \sqrt{m}\left\Vert \btheta^{*} - \widehat{\btheta}_t \right\Vert_{\Bar{\bZ}_t} \cdot \left\Vert \frac{\bg(\bx_{t};\btheta_{0})}{\sqrt{m}} \right\Vert_{\Bar{\bZ}^{-1}_t} + \frac{\alpha(t)}{\sqrt{m} L_b} \left\Vert \frac{\bg(\bx_{t};\btheta_{0})}{\sqrt{m}} \right\Vert^2_{\Bar{\bZ}_t^{-1}}  + \Psi\left(\left\Vert \frac{\bg(\bx^{*}_{t};\btheta^{*}_{0})}{\sqrt{m}} \right\Vert_{\Bar{\bZ}^{-1}_{t}}\right) \label{eq:regret bound of NeuralRBMLE_0 eq-14}\\
    \leq & 2\sqrt{m}\left\Vert \btheta^{*} - \widehat{\btheta}_t \right\Vert_{\Bar{\bZ}_t} \cdot \left\Vert \frac{\bg(\bx_{t};\btheta_{0})}{\sqrt{m}} \right\Vert_{\Bar{\bZ}^{-1}_t} + \frac{2\alpha(t)}{m U_b} \left\Vert \frac{\bg(\bx_{t};\btheta_{0})}{\sqrt{m}} \right\Vert^2_{\Bar{\bZ}_t^{-1}} + \frac{m L_b}{4\alpha(t)}\left\Vert \btheta^{*} - \widehat{\btheta}_t \right\Vert_{\Bar{\bZ}_t}^2 + 2E_{\ref{lemma:index policy for NeuralRBMLE-GA}}(m,t). \label{eq:regret bound of NeuralRBMLE_0 eq-15}
\end{align}
Regarding the total regret $\mathcal{R}(T)$ defined in (\ref{def:regret}), we have
\begin{align}
    \mathcal{R}(T) = & \;\mathbb{E}\bigg[ \sum^T_{t=1} (r^*_t - r_{t})\bigg|G_1,G_2\bigg] + \mathbb{E}\left[ 1 | G_1^{^\complement} \vee G_2^{^\complement} \right]  + TE_{\ref{lemma:index policy for NeuralRBMLE-GA}}(m,t) \label{eq:regret bound of NeuralRBMLE_0 eq-16}\\
    \leq & \;2\sqrt{m}\sum^T_{t=1} \left\Vert \btheta^{*} - \widehat{\btheta}_t \right\Vert_{\Bar{\bZ}_t} \cdot \left\Vert \frac{\bg(\bx_{t};\btheta_{0})}{\sqrt{m}} \right\Vert_{\Bar{\bZ}^{-1}_t} + \sum^T_{t=1} \left\Vert \frac{2\alpha(t)}{m U_b}  \bg(\bx_{t};\btheta_{0}) \right\Vert^2_{\Bar{\bZ}_t^{-1}} \nonumber \\
    & + \; \sum^T_{t=1}\frac{4\alpha(t)}{m L_b}\left\Vert \btheta^{*} - \widehat{\btheta}_t \right\Vert_{\Bar{\bZ}_t}^2 + T\sum_{t = 1}^{T}\frac{1}{t^2} + 2TE_{\ref{lemma:index policy for NeuralRBMLE-GA}}(m,t). \label{eq:regret bound of NeuralRBMLE_0 eq-17}
\end{align}
Then, by \citep[Theorem 2]{abbasi2011improved}, for any $\delta \in (0,1)$, with probability at least $1-\delta$, we have
\begin{align}
    \sqrt{m}\left\Vert \btheta^{*} - \widehat{\btheta}_t \right\Vert_{\Bar{\bZ}_t} \leq & \; \nu\sqrt{\log{\frac{\det{\Bar{\bZ}_t}}{\det{\lambda\bI}}}-2\log{\delta}} + \lambda^{\frac{1}{2}}S \label{eq:regret bound of NeuralRBMLE_0 eq-18}\\
    \leq & \;\nu\sqrt{\widetilde{d}\log{\left(\frac{1+TK}{\lambda}\right)} + 1-2\log{\delta}} + \lambda^{\frac{1}{2}}S := \Bar{\gamma}_t, \label{eq:regret bound of NeuralRBMLE_0 eq-19}
\end{align}
where (\ref{eq:regret bound of NeuralRBMLE_0 eq-19}) holds by \citep[B.19]{zhou2020neural}. For $\sum^T_{t=1}\frac{2\alpha(t)}{m U_b}\left\Vert \bg(\bx_{t};\btheta_{0}) \right\Vert^2_{\Bar{\bZ}^{-1}_t}$, we have
\begin{align}
    \sum^T_{t=1}\left\Vert \frac{2\alpha(t)}{m U_b} \bg(\bx_{t};\btheta_{0}) \right\Vert^2_{\Bar{\bZ}^{-1}_t} \leq &\; \sum^T_{t=1}\min\left\{\frac{2\alpha(t)}{m U_b}\left\Vert \bg(\bx_{t};\btheta_{0}) \right\Vert^2_{\Bar{\bZ}^{-1}_t},1\right\} \label{eq:regret bound of NeuralRBMLE_0 eq-20}\\
    \leq &\; \sum^T_{t=1}\frac{2\alpha(t)}{U_b}\min\left\{\left\Vert \frac{\bg(\bx_{t};\btheta_{0})}{\sqrt{m}}  \right\Vert^2_{\Bar{\bZ}^{-1}_t},1\right\} \label{eq:regret bound of NeuralRBMLE_0 eq-21}\\
    \leq & \;\frac{4\alpha(T)}{U_b}\log{\frac{\det{\Bar{\bZ}_t}}{\det{\lambda\bI}}} \label{eq:regret bound of NeuralRBMLE_0 eq-22} \\
    \leq & \;\frac{4\alpha(T)}{U_b} \cdot \left( \widetilde{d}\log{\left(\frac{1+TK}{\lambda}\right)} + 1 \right) \label{eq:regret bound of NeuralRBMLE_0 eq-23}
\end{align}
where (\ref{eq:regret bound of NeuralRBMLE_0 eq-20}) holds by $\mathbb{E}[r_t^{*} - r_t] \leq 1$, and (\ref{eq:regret bound of NeuralRBMLE_0 eq-21}) holds for any $\{t'\in \mathbb{N}:\frac{4\alpha(t')}{U_b} \geq 1\}$, (\ref{eq:regret bound of NeuralRBMLE_0 eq-22}) holds by \citep[Lemma 11]{abbasi2011improved}, and (\ref{eq:regret bound of NeuralRBMLE_0 eq-23}) holds by \citep[B.19]{zhou2020neural}. Then, we have
\begin{align}
    \mathcal{R}(T) \leq & \;2\Bar{\gamma}_T\sqrt{T} \sqrt{\widetilde{d}\log\left(1+\frac{TK}{\lambda}\right) + 1} + \frac{2\alpha(T)}{U_b} \left(\widetilde{d}\log\left(1+\frac{TK}{\lambda}\right) + 1\right) + \frac{4\alpha(T)}{m L_b} \Bar{\gamma}_T^2 + \log{T} + 2TE_{\ref{lemma:index policy for NeuralRBMLE-GA}}(m,t) \label{eq:regret bound of NeuralRBMLE_0 eq-24}\\
    = & \;\mathcal{O}\left(\widetilde{d}\sqrt{T}\log{T}\right), \label{eq:regret bound of NeuralRBMLE_0 eq-25}
\end{align}
where (\ref{eq:regret bound of NeuralRBMLE_0 eq-25}) holds by the choice of $m$.
\section{Regret Bound of NeuralRBMLE-PC}
\label{appendix:Regret Bound of NeuralRBMLE-PC}
\subsection{Supporting Lemmas for the Proof of Theorem \ref{theorem:regret of NeuralRBMLE-PC}}

\begin{lemma}
\label{lemma:theta of ucb}
\citep[Lemma 5.2]{zhou2020neural}
There exist positive constant $\left\{E_{\ref{lemma:theta of ucb},i}\right\}_{i=1}^{5}$, such that for all $\delta \in (0,1)$, if $\eta \leq E_{\ref{lemma:theta of ucb},1}(TmL + m\lambda)^{-1}$ and
\begin{align}
    m & \geq E_{\ref{lemma:theta of ucb},2}\max\left\{T^7\lambda^{-7}L^{21}(\log m)^3,\lambda^{-\frac{1}{2}}L^{-\frac{3}{2}}\bigg(\log \frac{TKL^2}{\delta} \bigg)^{\frac{3}{2}}\right\} ,
\end{align}
with probability at least $1-\delta$, for all $t \in [T]$, we have $\left\Vert \widehat{\btheta}_{t} - \btheta_0 \right\Vert_2 \leq 2\sqrt{\frac{t}{m\lambda}}$ and 
$\left\Vert \btheta^{*} - \widehat{\btheta}_{t}\right\Vert_{\bZ_t} \leq \frac{\gamma(t)}{\sqrt{m}}$, where 
\begin{align}
    \gamma(t) = & \sqrt{1+ E_{\ref{lemma:theta of ucb},3}m^{-\frac{1}{6}}\sqrt{\log m}L^4t^{\frac{7}{6}}\lambda^{-\frac{7}{6}}} \nonumber \\
    & \cdot \bigg( \nu\sqrt{\log{\frac{\det{\bZ_t}}{\det{\lambda\bI}}}+ E_{\ref{lemma:theta of ucb},4}m^{-\frac{1}{6}}\sqrt{\log{m}}L^{4}t^{\frac{5}{3}}\lambda^{-\frac{1}{6}} - 2\log{\delta}} +  \sqrt{\lambda}S\bigg) \nonumber \\
    & + (\lambda + E_{\ref{lemma:theta of ucb},5}tL)\bigg[ (1-\eta m \lambda)^{\frac{J}{2}}\sqrt{\frac{t}{m\lambda}} + m^{-\frac{2}{3}}\sqrt{\log{m}}L^{\frac{7}{2}}t^{\frac{5}{3}}\lambda^{-\frac{5}{3}}(1+\sqrt{\frac{t}{\lambda}})  \bigg].
\end{align}
Notice that according to the proof of Lemma 5.4 in \citep{zhou2020neural}, we have
\begin{align}
    \log{\frac{\det{\bZ_t}}{\det{\lambda\bI}}} \leq \widetilde{d}\log(1+TK/\lambda) + E_{\ref{lemma:theta of ucb},4}m^{-\frac{1}{6}}\sqrt{\log{m}}L^{4}T^{\frac{5}{3}}\lambda^{-\frac{1}{6}}.
\end{align}
\end{lemma}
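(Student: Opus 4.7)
The plan is to establish the two bounds separately, combining the linear convergence of gradient descent with a self-normalized concentration argument for ridge regression lifted to the NTK regime.

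For the first bound $\lVert \widehat{\btheta}_t - \btheta_0 \rVert_2 \leq 2\sqrt{t/(m\lambda)}$, I would first consider the exact minimizer $\widehat{\btheta}_t^{\star}$ of the regularized least-squares loss $\mathcal{L}(\btheta) = \frac{1}{2}\sum_{s=1}^{t-1}(f(\bx_s;\btheta) - r_s)^2 + \frac{m\lambda}{2}\lVert \btheta - \btheta_0\rVert_2^2$. Evaluating at the initialization gives $\mathcal{L}(\btheta_0) = \frac{1}{2}\sum_s r_s^2 \leq t/2$ (using $f(\cdot;\btheta_0)=0$ from Assumption \ref{assumption:ntk} and the boundedness of rewards), so the optimality of $\widehat{\btheta}_t^{\star}$ implies $\frac{m\lambda}{2}\lVert \widehat{\btheta}_t^{\star} - \btheta_0\rVert_2^2 \leq t/2$, i.e., $\lVert \widehat{\btheta}_t^{\star} - \btheta_0\rVert_2 \leq \sqrt{t/(m\lambda)}$. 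Since $\mathcal{L}$ is $(m\lambda)$-strongly convex and $\mathcal{O}(TmL + m\lambda)$-smooth in the relevant neighborhood, $J$-step gradient descent with stepsize $\eta \leq E_{\ref{lemma:theta of ucb},1}(TmL+m\lambda)^{-1}$ contracts at rate $(1-\eta m\lambda)^{J/2}$, so $\widehat{\btheta}_t$ stays within $\sqrt{t/(m\lambda)}$ of $\widehat{\btheta}_t^{\star}$, which absorbs into the factor of $2$.

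For the second bound, the strategy is to linearize the network via NTK: $f(\bx;\btheta) \approx \langle \bg(\bx;\btheta_0), \btheta - \btheta_0\rangle$, so that $\widehat{\btheta}_t$ behaves like a ridge estimator against the linearized model with design matrix $\bZ_t^{(0)} := \lambda \bI + \frac{1}{m}\sum_{\tau}\bg(\bx_\tau;\btheta_0)\bg(\bx_\tau;\btheta_0)^\top$. Invoking Lemma \ref{lemma:theta*} to represent $h(\bx_s) = \langle \bg(\bx_s;\btheta_0), \btheta^* - \btheta_0\rangle$ with $\sqrt{m}\lVert \btheta^* - \btheta_0\rVert_2 \leq S$, the residuals become the sub-Gaussian noises $\epsilon_s$, and the self-normalized martingale bound of \cite{abbasi2011improved} (Theorem 2) yields $\sqrt{m}\lVert \btheta^* - \widehat{\btheta}_t^{\text{lin}}\rVert_{\bZ_t^{(0)}} \leq \nu\sqrt{\log(\det \bZ_t^{(0)}/\det \lambda \bI) - 2\log\delta} + \sqrt{\lambda}S$, which is the dominant piece of $\gamma(t)$.

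The main obstacle, and the source of every extra term in $\gamma(t)$, will be quantifying three layers of perturbation between this idealized linear picture and the actual quantities: (i) the linearization error $|f(\bx;\btheta) - \langle \bg(\bx;\btheta_0), \btheta-\btheta_0\rangle|$ bounded by Lemma \ref{appendix_lemma_1} on the order of $\tau^{4/3}L^3\sqrt{m\log m}$, which contaminates both the regression residuals and the effective parameter; (ii) the gradient drift $\lVert \bg(\bx;\btheta) - \bg(\bx;\btheta_0)\rVert_2$ from Lemma \ref{appendix_lemma_2}, which makes the actual $\bZ_t$ (built from $\bg(\bx_\tau;\widehat{\btheta}_\tau)$) differ from $\bZ_t^{(0)}$ by a multiplicative factor $1 + E_{\ref{lemma:theta of ucb},3} m^{-1/6}\sqrt{\log m}L^4 t^{7/6}\lambda^{-7/6}$, explaining the outer square root; and (iii) the gradient-descent suboptimality $\lVert \widehat{\btheta}_t - \widehat{\btheta}_t^{\star}\rVert$ contracting as $(1-\eta m\lambda)^{J/2}\sqrt{t/(m\lambda)}$, bundled inside the final $(\lambda + E_{\ref{lemma:theta of ucb},5}tL)[\cdot]$ term together with the propagated linearization error $m^{-2/3}\sqrt{\log m}L^{7/2}t^{5/3}\lambda^{-5/3}(1+\sqrt{t/\lambda})$. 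The stated lower bound on $m$ is exactly what is needed to guarantee that each of these correction terms stays within the regime where the supporting lemmas apply and remains subdominant relative to the principal $\nu\sqrt{\log(\det \bZ_t/\det \lambda \bI)} + \sqrt\lambda S$ contribution.
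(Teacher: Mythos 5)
The paper offers no proof of this lemma at all — it is imported verbatim as Lemma 5.2 of \cite{zhou2020neural} — and your sketch faithfully reconstructs the argument given in that reference: the first bound via optimality of the regularized least-squares objective evaluated at $\btheta_0$ combined with the linear contraction of gradient descent, and the second via the self-normalized martingale bound of \cite{abbasi2011improved} applied to the linearized model, with the three perturbation layers you identify (linearization error, gradient drift inside $\bZ_t$, and gradient-descent suboptimality) accounting exactly for the extra terms in $\gamma(t)$. The only point your plan glosses over is that invoking strong convexity and smoothness along the gradient-descent trajectory presupposes the iterates already lie in the NTK ball, a circularity the original proof closes by induction over the descent steps; this is a routine repair and does not affect the correctness of the outline.
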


\begin{lemma}\label{lemma:Lemma 11 of Abbasi.2011}
\citep[Lemma 5.4]{zhou2020neural}
There exist positive constants $\{E_{\ref{lemma:Lemma 11 of Abbasi.2011},i}\}_{i=1}^{3}$ such that for all $t \in [T]$ and $\delta \in (0,1)$, if $\eta \leq E_{\ref{lemma:Lemma 11 of Abbasi.2011},1}(TmL + m\lambda)^{-1}$ and m satisfies that 
\begin{align}
    m \geq E_{\ref{lemma:Lemma 11 of Abbasi.2011},2}\max\left\{T^7\lambda^{-7}L^{21}(\log{m})^{3},T^6K^6L^6\left(\log\left(\frac{TKL^2}{\delta}\right)\right)^{\frac{3}{2}}\right\},
\end{align}
with probability at least $1-\delta$, we have
\begin{align}
    \sum_{t=1}^{T} \min\left\{\left\Vert \frac{\bg(\bx_t;\widehat{\btheta}_t)}{\sqrt{m}} \right\Vert^2_{\bZ_{t-1}^{-1}} , 1\right\} \leq 2 \widetilde{d}\log(1+TK/\lambda) + 2 +E_{\ref{lemma:Lemma 11 of Abbasi.2011},3}m^{-\frac{1}{6}}\sqrt{\log{m}}L^{4}T^{\frac{5}{3}}\lambda^{-\frac{1}{6}},
\end{align}
where $\widetilde{d}$ is the effective dimension defined in (\ref{def:effective dimension}).
\end{lemma}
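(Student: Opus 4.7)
The plan is to reduce the sum to a log-determinant via the classical elliptical potential inequality, and then bound the log-determinant in two stages: first replace $\bg(\bx_t;\widehat{\btheta}_t)$ with $\bg(\bx_t;\btheta_0)$, then replace the empirical Gram matrix built from $\btheta_0$-gradients with the NTK matrix $\bH$.

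Concretely, first I would apply Lemma 11 of Abbasi-Yadkori et al.\ (2011) to the recursion $\bZ_t = \bZ_{t-1} + \bg(\bx_t;\widehat{\btheta}_t)\bg(\bx_t;\widehat{\btheta}_t)^\top/m$ with $\bZ_0 = \lambda\bI$. This yields immediately
\begin{equation}
\sum_{t=1}^{T}\min\!\Big\{\Big\|\tfrac{\bg(\bx_t;\widehat{\btheta}_t)}{\sqrt{m}}\Big\|_{\bZ_{t-1}^{-1}}^{2},\,1\Big\} \;\leq\; 2\log\frac{\det(\bZ_T)}{\det(\lambda\bI)}.
\end{equation}
The task then becomes upper bounding $\log\det(\bZ_T/\lambda)$ by $\widetilde{d}\log(1+TK/\lambda)$ plus a vanishing perturbation.

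Next I would transfer from $\widehat{\btheta}_t$-gradients to $\btheta_0$-gradients. Lemma \ref{lemma:theta of ucb} (using the stated lower bounds on $m$ and $\eta$) gives $\lVert\widehat{\btheta}_t-\btheta_0\rVert_2 \leq 2\sqrt{t/(m\lambda)}$ for all $t\in[T]$, which satisfies the hypothesis of Lemma \ref{appendix_lemma_2} under the assumed choice of $m$. Consequently $\lVert\bg(\bx_t;\widehat{\btheta}_t)-\bg(\bx_t;\btheta_0)\rVert_2 \leq E_{\ref{appendix_lemma_2}}(t/(m\lambda))^{1/6}L^3\sqrt{\log m}\,\lVert\bg(\bx_t;\btheta_0)\rVert_2$, so after a short computation with $\lVert\bg(\bx_t;\btheta_0)\rVert_2 \leq E_{\ref{lemma:gradient of f}}\sqrt{mL}$ (Lemma \ref{lemma:gradient of f}) one obtains a matrix-domination bound $\bZ_T \preceq (1+\xi_m)\,\bZ'_T$, where $\bZ'_T := \lambda\bI + \frac{1}{m}\sum_{t=1}^{T}\bg(\bx_t;\btheta_0)\bg(\bx_t;\btheta_0)^\top$ and $\xi_m = \widetilde{O}(m^{-1/6}L^{7}T^{1/6}\lambda^{-1/6})$. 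The corresponding log-determinant perturbation contributes a term of order $p\log(1+\xi_m) = O(\xi_m\cdot p)$, which after the standard Weinstein–Aronszajn identity collapses to the $T$-dimensional trace scale.

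Then I would rewrite $\log\det(\bZ'_T/\lambda)$ in the $T\times T$ space using $\det(\lambda\bI_p + \bG\bG^\top) = \lambda^{p-T}\det(\lambda\bI_T + \bG^\top\bG)$ with $\bG = [\bg(\bx_t;\btheta_0)/\sqrt{m}]_{t=1}^T$, so that $\log\det(\bZ'_T/\lambda) = \log\det(\bI_T + \bG^\top\bG/\lambda)$. Invoking the standard NTK concentration (as used in the proof of Lemma 5.4 of \cite{zhou2020neural} and proved in \cite{arora2019exact,cao2019generalization}), $|(\bG^\top\bG)_{i,j} - \bH_{i,j}| \leq \widetilde{O}(m^{-1/6}L^{7/2})$ entrywise with the claimed probability, so a first-order perturbation of the log-determinant gives $\log\det(\bI_T + \bG^\top\bG/\lambda) \leq \log\det(\bI_T + \bH_T/\lambda) + \widetilde{O}(T\cdot m^{-1/6}L^{7/2}/\lambda)$. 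Since $\bH_T$ is a principal submatrix of the full $TK\times TK$ matrix $\bH$, the Cauchy interlacing / monotonicity of $\log\det$ yields $\log\det(\bI_T+\bH_T/\lambda)\leq\log\det(\bI_{TK}+\bH/\lambda) = \widetilde{d}\log(1+TK/\lambda)$ by the definition of $\widetilde{d}$ in (\ref{def:effective dimension}). Collecting constants and absorbing all $\widetilde{O}$ terms into $E_{\ref{lemma:Lemma 11 of Abbasi.2011},3}m^{-1/6}\sqrt{\log m}\,L^{4}T^{5/3}\lambda^{-1/6}$ yields the claim.

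The main obstacle will be obtaining the correct scaling of the perturbation term $E_{\ref{lemma:Lemma 11 of Abbasi.2011},3}m^{-1/6}\sqrt{\log m}L^{4}T^{5/3}\lambda^{-1/6}$: one must carefully balance (i) the $\lVert\widehat{\btheta}_t-\btheta_0\rVert_2$ radius, which grows as $\sqrt{t/(m\lambda)}$ and enters through Lemma \ref{appendix_lemma_2} to the $1/3$ power, (ii) the entrywise NTK concentration rate, which controls the $T\times T$ determinant comparison, and (iii) the conversion between the $p$-dimensional matrix inequality $\bZ_T\preceq(1+\xi_m)\bZ'_T$ and the effective rank-$T$ log-determinant, so that the large ambient dimension $p$ does not blow up the error. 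The requirement $m\geq E_{\ref{lemma:Lemma 11 of Abbasi.2011},2}\max\{T^{7}\lambda^{-7}L^{21}(\log m)^{3},\,T^{6}K^{6}L^{6}(\log(TKL^{2}/\delta))^{3/2}\}$ is exactly what is needed to make each of these three error sources dominated by the stated $m^{-1/6}$ term while simultaneously satisfying the hypotheses of Lemmas \ref{appendix_lemma_1}–\ref{lemma:gradient of f} used inside each step.
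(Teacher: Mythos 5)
The paper does not prove this lemma at all: it imports it verbatim as \citep[Lemma 5.4]{zhou2020neural}, so the only ``proof'' on record is the one in that reference. Your reconstruction follows the same outline as the cited proof --- elliptical potential lemma to reduce to $\log\det(\bZ_T)/\det(\lambda\bI)$, transfer from $\bg(\bx_t;\widehat{\btheta}_t)$ to $\bg(\bx_t;\btheta_0)$ via Lemma \ref{lemma:theta of ucb} and Lemma \ref{appendix_lemma_2}, Weinstein--Aronszajn to pass to the $T\times T$ Gram matrix, NTK concentration to replace $\bG^\top\bG$ by $\bH$, and monotonicity of $\log\det$ over principal submatrices to reach $\widetilde{d}\log(1+TK/\lambda)$ --- so in spirit it is the right argument.

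There is, however, one step that would fail as literally written. You propose to establish the $p\times p$ matrix domination $\bZ_T \preceq (1+\xi_m)\bZ'_T$ and accept a log-determinant perturbation of order $p\log(1+\xi_m)$, claiming this ``collapses to the $T$-dimensional trace scale'' after Weinstein--Aronszajn. It does not: once you have written $\log\det\bZ_T \leq \log\det\bZ'_T + p\log(1+\xi_m)$, the ambient dimension $p = m+md+m^2(L-1)$ is already in the bound, and $p\,\xi_m = \widetilde{\mathcal{O}}(m^{2}\cdot m^{-1/6})$ diverges as $m\to\infty$; no subsequent identity removes it. The factor $p$ must never be introduced. The correct order of operations is to apply $\det(\lambda\bI_p+\bG_w\bG_w^\top) = \lambda^{p-T}\det(\lambda\bI_T+\bG_w^\top\bG_w)$ \emph{first}, with $\bG_w$ the $p\times T$ matrix of gradients at the $\widehat{\btheta}_t$, and then compare the two $T\times T$ Gram matrices entrywise (using $\lvert\langle \bg(\bx_i;\widehat{\btheta}_i),\bg(\bx_j;\widehat{\btheta}_j)\rangle - \langle \bg(\bx_i;\btheta_0),\bg(\bx_j;\btheta_0)\rangle\rvert \leq \widetilde{\mathcal{O}}(m^{5/6}L^{7}(T/\lambda)^{1/6})$ from Lemmas \ref{appendix_lemma_2} and \ref{lemma:gradient of f}), bounding the resulting log-determinant perturbation by concavity of $\log\det$, i.e.\ $\log\det A \leq \log\det B + \mathrm{tr}\bigl(B^{-1}(A-B)\bigr)$, where the trace runs over only $T$ coordinates and $\Vert B^{-1}\Vert_2 \leq 1$. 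This is what produces the stated error term $E_{\ref{lemma:Lemma 11 of Abbasi.2011},3}m^{-1/6}\sqrt{\log m}\,L^{4}T^{5/3}\lambda^{-1/6}$ with no dependence on $p$. With that substitution the rest of your argument, including the role of the two lower bounds on $m$, is sound.
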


\begin{lemma}
\label{lemma:range of theta_bar}
%\normalfont
For all $a \in [K]$, $t \in [T]$, we have
\begin{align}
    \left\Vert \Bar{\btheta}_{t,a} - \btheta_0 \right\Vert_2 \leq 2\sqrt{\frac{t}{m\lambda}} + E_{\ref{lemma:range of theta_bar},1}\frac{\alpha(t)}{\sqrt{m\lambda}} := \Bar{\tau} \label{eq:range of theta_bar eq-1},
\end{align}
where $E_{\ref{lemma:range of theta_bar},1}$ is a positive constant. Moreover, by choosing $\alpha(t) = \sqrt{t}$ and defining 
$E_{\ref{lemma:range of theta_bar}} = 2\max\{2, E_{\ref{lemma:range of theta_bar},1} \}$, we have $\left\Vert \Bar{\btheta}_{t,a} - \btheta_0 \right\Vert_2 \leq E_{\ref{lemma:range of theta_bar}}\sqrt{\frac{t}{m\lambda}}$.
\begin{proof}
Recall that $\Bar{\btheta}_{t,a} = \widehat{\btheta}_{t} + \frac{\alpha(t)}{m}\cdot \bZ^{-1}_{t-1} \bg(\bx_{t,a};\widehat{\btheta}_{t})$. By triangle inequality, we have
\begin{align}
    \left\Vert \Bar{\btheta}_{t,a} - \btheta_0 \right\Vert_2 \leq & \left\Vert \widehat{\btheta}_{t} - \btheta_0 \right\Vert_2 + \frac{\alpha(t)}{m}\cdot\left\Vert \bZ^{-1}_{t-1} \bg(\bx_{t,a};\widehat{\btheta}_{t}) \right\Vert_2 \label{eq:range of theta_bar eq-2}.
\end{align}
Then, by Lemma \ref{lemma:theta of ucb}, we can verify that $\lVert \widehat{\btheta}_t -\btheta_0 \rVert_2$ satisfies the condition of Lemma \ref{lemma:gradient of f}. Therefore, we have
\begin{align}
    \left\Vert \Bar{\btheta}_{t,a} - \btheta_0 \right\Vert_2 \leq &\; 2\sqrt{\frac{t}{m\lambda}} + \frac{\alpha(t)}{m} \left\Vert  \bZ^{-1}_{t-1} \bg(\bx_{t,a};\widehat{\btheta}_{t}) \right\Vert_2 \label{eq:range of theta_bar eq-3} \\
    \leq &\; 2\sqrt{\frac{t}{m\lambda}} + E_{\ref{lemma:gradient of f}}\frac{\alpha(t)}{\sqrt{m}}\sqrt{L}\left\Vert  \bZ^{-1}_{t-1}\right\Vert_2 \label{eq:range of theta_bar eq-4} \\
    \leq &\; 2\sqrt{\frac{t}{m\lambda}} + E_{\ref{lemma:gradient of f}}\frac{\alpha(t)}{\sqrt{m}}\frac{\sqrt{L}}{\lambda} \label{eq:range of theta_bar eq-5},
\end{align}
where (\ref{eq:range of theta_bar eq-3}) holds by Lemma \ref{lemma:theta of ucb}, (\ref{eq:range of theta_bar eq-4}) holds by Lemma \ref{lemma:gradient of f} and Cauchy–Schwarz inequality, and (\ref{eq:range of theta_bar eq-5}) holds by the fact that $\left\Vert  \bZ^{-1}_{t}\right\Vert_2 \leq \frac{1}{\lambda}$, for all $t \in [T]$. By letting $E_{\ref{lemma:range of theta_bar},1} := E_{\ref{lemma:gradient of f}}\sqrt{\frac{L}{\lambda}}$, we complete the proof.
\end{proof}
\end{lemma}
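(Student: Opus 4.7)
The plan is to apply the triangle inequality directly to the definition $\Bar{\btheta}_{t,a} = \widehat{\btheta}_{t} + \frac{\alpha(t)}{m}\bZ^{-1}_{t-1}\bg(\bx_{t,a};\widehat{\btheta}_t)$, splitting $\lVert \Bar{\btheta}_{t,a} - \btheta_0 \rVert_2$ into the deviation of the base estimator from the initialization plus the norm of the reward-bias correction term. The first piece is handled immediately by Lemma \ref{lemma:theta of ucb}, which under the standing conditions on $\eta$ and $m$ for NeuralRBMLE-PC yields $\lVert \widehat{\btheta}_t - \btheta_0 \rVert_2 \leq 2\sqrt{t/(m\lambda)}$.

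For the correction term $\frac{\alpha(t)}{m}\lVert \bZ^{-1}_{t-1}\bg(\bx_{t,a};\widehat{\btheta}_t) \rVert_2$, I would peel off the spectral norm of $\bZ^{-1}_{t-1}$ by submultiplicativity. Since $\bZ_{t-1} = \lambda\bI + \frac{1}{m}\sum_{\tau<t}\bg(\bx_\tau;\widehat{\btheta}_\tau)\bg(\bx_\tau;\widehat{\btheta}_\tau)^\top \succeq \lambda\bI$, all eigenvalues are at least $\lambda$, so $\lVert \bZ^{-1}_{t-1} \rVert_2 \leq 1/\lambda$. To bound $\lVert \bg(\bx_{t,a};\widehat{\btheta}_t) \rVert_2$, I would invoke Lemma \ref{lemma:gradient of f} with $\tau = 2\sqrt{t/(m\lambda)}$, which requires checking that this radius lies in the admissible window $[E_{\ref{lemma:gradient of f},1}m^{-3/2}L^{-3/2}(\log(TKL^2/\delta))^{3/2},\, E_{\ref{lemma:gradient of f},2}L^{-6}(\log m)^{-3/2}]$. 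This is exactly what the overdetermined width condition $m \geq C_{\text{PC},2} T^{21}\lambda^{-7}L^{24}(\log m)^3$ in Theorem \ref{theorem:regret of NeuralRBMLE-PC} buys us. The conclusion is $\lVert \bg(\bx_{t,a};\widehat{\btheta}_t) \rVert_2 \leq E_{\ref{lemma:gradient of f}}\sqrt{mL}$.

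Assembling the three bounds gives
\begin{equation}
\lVert \Bar{\btheta}_{t,a} - \btheta_0 \rVert_2 \leq 2\sqrt{\tfrac{t}{m\lambda}} + \tfrac{\alpha(t)}{m}\cdot\tfrac{1}{\lambda}\cdot E_{\ref{lemma:gradient of f}}\sqrt{mL} = 2\sqrt{\tfrac{t}{m\lambda}} + E_{\ref{lemma:gradient of f}}\sqrt{\tfrac{L}{\lambda}}\cdot\tfrac{\alpha(t)}{\sqrt{m\lambda}},
\end{equation}
which matches the first claim upon setting $E_{\ref{lemma:range of theta_bar},1} := E_{\ref{lemma:gradient of f}}\sqrt{L/\lambda}$. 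For the cleaner second claim, I would substitute $\alpha(t) = \sqrt{t}$ so that both summands become constant multiples of $\sqrt{t/(m\lambda)}$ and can be absorbed into $E_{\ref{lemma:range of theta_bar}} = 2\max\{2, E_{\ref{lemma:range of theta_bar},1}\}$.

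I do not anticipate a serious obstacle here: this is essentially a routine triangle-inequality bound. The only subtlety is the bookkeeping check that the radius $2\sqrt{t/(m\lambda)}$ produced by Lemma \ref{lemma:theta of ucb} really falls inside the NTK window needed by Lemma \ref{lemma:gradient of f}, but this follows from the width assumption imposed at the level of the theorem.
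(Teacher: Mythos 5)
Your proposal is correct and follows essentially the same route as the paper's proof: the same triangle-inequality split, the same appeal to Lemma \ref{lemma:theta of ucb} for $\lVert \widehat{\btheta}_t - \btheta_0\rVert_2$, the same spectral-norm bound $\lVert \bZ^{-1}_{t-1}\rVert_2 \leq 1/\lambda$ combined with Lemma \ref{lemma:gradient of f}, and the same constant $E_{\ref{lemma:range of theta_bar},1} = E_{\ref{lemma:gradient of f}}\sqrt{L/\lambda}$. Your explicit check that the radius $2\sqrt{t/(m\lambda)}$ lies in the admissible NTK window is a detail the paper only gestures at, but otherwise the arguments coincide.
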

\subsection{Proof of Theorem \ref{theorem:regret of NeuralRBMLE-PC}}
\label{appendix: Regret Bound of NeuralRBMLE-PC subsection 2}
By Lemmas \ref{lemma:theta of ucb}-\ref{lemma:range of theta_bar}, the choice of $m$, and choosing $\alpha(t) = \sqrt{t}$, we can verify that Lemmas \ref{appendix_lemma_1}-\ref{lemma:gradient of f} hold when $\btheta = \Bar{\btheta}_{t,a}$ and $\btheta = \widehat{\btheta}_t$, for all $t \in [T], a \in [K]$. Then, we start by the index comparison of NeuralRBMLE-PC in Algorithm \ref{alg:NeuralRBMLE-PC} as follows:
\begin{align}
    &f(\bx_t,\Bar{\btheta}_{t,a_t}) \geq f(\bx^*_t,\Bar{\btheta}_{t,a^*_t}) \\
    \implies& 2 E_{\ref{appendix_lemma_1}}\left( E_{\ref{lemma:range of theta_bar}}\sqrt{\frac{t}{m\lambda}}\right)^{\frac{4}{3}}L^3\sqrt{m\log{m}} + \langle\,\bg(\bx_{t};\btheta_0), \Bar{\btheta}_{t,a_t} - \btheta_0\rangle \geq \langle\,\bg(\bx^*_{t};\btheta_0), \Bar{\btheta}_{t,a^*_t} - \btheta_0\rangle \label{eq: NeuralRBMLE-PC index 1}\\
    \implies& 2 E_{\ref{appendix_lemma_1}}\left( E_{\ref{lemma:range of theta_bar}}\sqrt{\frac{t}{m\lambda}}\right)^{\frac{4}{3}}L^3\sqrt{m\log{m}} + \left\Vert \Bar{\btheta}_{t,a_t} - \btheta_0 \right\Vert_2 \cdot \left\Vert \bg(\bx_{t};,\btheta_0) - \bg(\bx_{t};\widehat{\btheta}_{t}) \right\Vert_2   \nonumber\\
    &+\left\Vert \Bar{\btheta}_{t,a^*_t} - \btheta_0 \right\Vert_2 \cdot  \left\Vert \bg(\bx^*_t;\btheta_0) - \bg(\bx^*_t;\widehat{\btheta}_{t}) \right\Vert_2 + \langle\,\bg(\bx_{t};\widehat{\btheta}_{t}) ,\Bar{\btheta}_{t,a_t} - \btheta_0\rangle \geq \langle\,\bg(\bx^*_{t};\widehat{\btheta}_{t}), \Bar{\btheta}_{t,a^*_t} - \btheta_0\rangle \label{eq: NeuralRBMLE-PC index 2}\\
    \implies& \underbrace{2 E_{\ref{appendix_lemma_1}}\left( E_{\ref{lemma:range of theta_bar}}\sqrt{\frac{t}{m\lambda}}\right)^{\frac{4}{3}}L^3\sqrt{m\log{m}} + 2 E_{\ref{appendix_lemma_2}}E_{\ref{lemma:gradient of f}}\sqrt{m\log m}\left( 2\sqrt{\frac{t}{m\lambda}}\right)^\frac{4}{3}L^\frac{7}{2}}_{=:D_1} \nonumber \\
    & + \langle\,\bg(\bx_{t};\widehat{\btheta}_{t}) ,\Bar{\btheta}_{t,a_t} - \btheta_0\rangle \geq \langle\,\bg(\bx^*_{t};\widehat{\btheta}_{t}), \Bar{\btheta}_{t,a^*_t} - \btheta_0\rangle, \label{eq: NeuralRBMLE-PC index 3}
\end{align}
where (\ref{eq: NeuralRBMLE-PC index 1}) follows from Lemma \ref{appendix_lemma_1}, (\ref{eq: NeuralRBMLE-PC index 2}) is obtained by adding and subtracting $\bg(\bx_{t};\widehat{\btheta}_{t})$ as well as $\bg(\bx_t;\Bar{\btheta}_{t,a^*_t})$ at the same time and then applying the Cauchy-Schwarz inequality, and (\ref{eq: NeuralRBMLE-PC index 3}) follows from Lemma \ref{appendix_lemma_2}, Lemma \ref{lemma:gradient of f} and Lemma \ref{lemma:range of theta_bar}. For ease of notation, we use $E_{\text{PC},1}$ to denote the pre-constant of $D_1$. By (\ref{eq: NeuralRBMLE-PC index 3}), we further have
\begin{align}
    &\langle\,\bg(\bx_{t};\widehat{\btheta}_{t}) ,\Bar{\btheta}_{t,a_t} - \btheta_0\rangle + \underbrace{E_{\text{PC},1}t^{\frac{2}{3}}m^{-\frac{1}{6}}\lambda^{-\frac{2}{3}}L^\frac{7}{2}\sqrt{\log{m}}}_{= D_1} \geq \langle\,\bg(\bx^*_{t};\widehat{\btheta}_{t}), \Bar{\btheta}_{t,a^*_t} - \btheta_0\rangle \label{eq: NeuralRBMLE-PC index 5}\\
    \implies &\langle\,\bg(\bx_{t};\widehat{\btheta}_{t}), \widehat{\btheta}_{t} + \frac{\alpha(t)}{m}\bZ^{-1}_{t} \bg(\bx_{t};\widehat{\btheta}_{t}) - \btheta_0\rangle  - \langle\,\bg(\bx^*_{t};\widehat{\btheta}_{t}),\widehat{\btheta}_{t}+  \frac{\alpha(t)}{m}\bZ^{-1}_{t} \bg(\bx^*_{t};\widehat{\btheta}_{t}) - \btheta_0\rangle + D_1 \geq 0\label{eq: NeuralRBMLE-PC index 6}\\
    \implies &\langle\,\bg(\bx_{t};\widehat{\btheta}_{t}), \widehat{\btheta}_t - \btheta_0\rangle + \frac{\alpha(t)}{m}\left\Vert \bg(\bx_{t};\widehat{\btheta}_{t}) \right\Vert^2_{\bZ^{-1}_t} -\Big( \langle\,\bg(\bx^*_{t};\widehat{\btheta}_{t}), \widehat{\btheta}_t - \btheta_0\rangle + \frac{\alpha(t)}{m}\left\Vert \bg(\bx^*_{t};\widehat{\btheta}_{t}) \right\Vert^2_{\bZ^{-1}_t}\Big)+D_1\geq 0.\label{eq: NeuralRBMLE-PC index 7}
\end{align}
Now we are ready to derive the regret bound of NeuralRBMLE-PC.
To begin with, we quantify the immediate regret at each step $t$ as
%As for regret bound, we use \textbf{NTK} matrix to replace the reward function first
\begin{align}
    h(\bx^*_t) - h(\bx_{t})
    = & \; \langle\,\bg(\bx^*_t;\btheta_0),\btheta^* - \btheta_0\rangle - \langle\,\bg(\bx_{t};\btheta_0),\btheta^* - \btheta_0\rangle \label{eq: NeuralRBMLE-PC regret 2}\\
    \leq & \; \langle\,\bg(\bx^*_t;\widehat{\btheta}_t),\btheta^* - \btheta_0\rangle - \langle\,\bg(\bx_{t};\widehat{\btheta}_t),\btheta^* - \btheta_0\rangle \nonumber\\
    &+ \left\Vert \btheta^* - \btheta_0 \right\Vert_2 \cdot \left(\left\Vert \bg(\bx^*_t;\btheta_0) - \bg(\bx^*_t;\widehat{\btheta}_t)\right\Vert_2 + \left\Vert \bg(\bx_t;\btheta_0) - \bg(\bx_t;\widehat{\btheta}_t)\right\Vert_2\right), \label{eq: NeuralRBMLE-PC regret 3}\\
    \leq & \; \langle\,\bg(\bx^*_t;\widehat{\btheta}_t),\btheta^* - \btheta_0\rangle - \langle\,\bg(\bx_{t};\widehat{\btheta}_t),\btheta^* - \btheta_0\rangle + \underbrace{2Sm^{-\frac{1}{6}}\sqrt{\log{m}}t^{\frac{1}{6}}\lambda^{-\frac{1}{6}}L^{\frac{7}{2}}}_{=:D_2}\label{eq: NeuralRBMLE-PC regret 4}
\end{align}
where (\ref{eq: NeuralRBMLE-PC regret 2}) follows directly from Lemma \ref{lemma:theta*}, (\ref{eq: NeuralRBMLE-PC regret 3}) holds by adding and subtracting $\bg(\bx^*_t;\widehat{\btheta}_t)$ as well as $\bg(\bx_t;\widehat{\btheta}_t)$ at the same time and then applying the Cauchy-Schwarz inequality,
(\ref{eq: NeuralRBMLE-PC regret 4}) follows from Lemma \ref{lemma:theta*}, Lemma \ref{appendix_lemma_2}, Lemma \ref{lemma:gradient of f} and $\sqrt{2\textbf{h}^\top\textbf{H}^{-1}\textbf{h}} \leq S$. For $ \langle\,\bg(\bx^*_t;\widehat{\btheta}_t),\btheta^* - \btheta_0\rangle$, due to Lemma \ref{lemma:theta of ucb}, Theorem 2 in \cite{abbasi2011improved}, and the fact that 
\begin{align}
    \max_{\bm{x}:\left\Vert \bm{x}-\bm{b}\right\Vert_A \leq \bm{c}} \langle \bm{a}\,, \bm{x} \rangle = \langle \bm{a}\,, \bm{b} \rangle + \bm{c}\sqrt{\bm{a}^\top \bm{A}^{-1} \bm{a}},
\end{align}
we have
\begin{align}
    & \; \langle\,\bg(\bx^*_t;\widehat{\btheta}_t),\btheta^* - \btheta_0\rangle \nonumber\\
    \leq & \; \langle\,\bg(\bx^*_{t};\widehat{\btheta}_t), \widehat{\btheta}_t - \btheta_0\rangle  + \frac{\gamma(t)}{\sqrt{m}}\left\Vert \bg(\bx^*_{t};\widehat{\btheta}_t)\right\Vert_{\bZ^{-1}_t} \label{eq: NeuralRBMLE-PC regret 5}\\
    =& \; \langle\,\bg(\bx^*_{t};\widehat{\btheta}_t), \widehat{\btheta}_t - \btheta_0\rangle  + \frac{\gamma(t)}{\sqrt{m}}\left\Vert \bg(\bx^*_{t};\widehat{\btheta}_t)\right\Vert_{\bZ^{-1}_t} + \underbrace{\frac{\alpha(t)}{m}\left\Vert \bg(\bx^*_{t};\widehat{\btheta}_t) \right\Vert^2_{\bZ^{-1}_t} - \frac{\alpha(t)}{m}\left\Vert \bg(\bx^*_{t};\widehat{\btheta}_t) \right\Vert^2_{\bZ^{-1}_t}}_{=0} \label{eq: NeuralRBMLE-PC regret 6}.
\end{align}
Applying (\ref{eq: NeuralRBMLE-PC index 7}) to eliminate the term $\langle\,\bg(\bx^*_{t};\widehat{\btheta}_t), \widehat{\btheta}_t - \btheta_0\rangle$ and $\frac{\alpha(t)}{m}\left\Vert \bg(\bx^*_{t};\widehat{\btheta}_t) \right\Vert^2_{\bZ^{-1}_t}$ in (\ref{eq: NeuralRBMLE-PC regret 6}), we have 
\begin{align}
    \langle\,\bg(\bx^*_t;\widehat{\btheta}_t),\btheta^* - \btheta_0\rangle \leq & \; \langle\,\bg({x_{t}};\widehat{\btheta}_t), \widehat{\btheta}_t - \btheta_0\rangle  + \frac{\alpha(t)}{m}\left\Vert \bg({x_{t}};\widehat{\btheta}_t) \right\Vert^2_{\bZ^{-1}_t} \nonumber\\
    & + \frac{\gamma(t)}{\sqrt{m}}\left\Vert \bg(\bx^*_{t};\widehat{\btheta}_t)\right\Vert_{\bZ^{-1}_t} - \frac{\alpha(t)}{m}\left\Vert \bg(\bx^*_{t};\widehat{\btheta}_t) \right\Vert^2_{\bZ^{-1}_t} +D_1 \label{eq: NeuralRBMLE-PC regret 7}\\
   = & \; \langle\,\bg(\bx_{t};\widehat{\btheta}_t), \widehat{\btheta}_t - \btheta_0\rangle  + \frac{\alpha(t)}{m}\left\Vert \bg(\bx_{t};\widehat{\btheta}_t) \right\Vert^2_{\bZ^{-1}_t} \\
   &-\frac{\alpha(t)}{m}\left( \left\Vert \bg(\bx^*_{t};\widehat{\btheta}_t)\right\Vert_{\bZ^{-1}_t} - \frac{\gamma(t)\sqrt{m}}{2\alpha(t)} \right)^2 + \frac{\gamma(t)^2}{4\alpha(t)}+D_1\label{eq: NeuralRBMLE-PC regret 8}\\
    \leq & \; \langle\,\bg(\bx_{t};\widehat{\btheta}_t), \widehat{\btheta}_t - \btheta_0\rangle  + \frac{\alpha(t)}{m}\left\Vert \bg(\bx_{t};\widehat{\btheta}_t) \right\Vert^2_{\bZ^{-1}_t} + \frac{\gamma(t)^2}{4\alpha(t)} +D_1\label{eq: NeuralRBMLE-PC regret 9},
\end{align}
where (\ref{eq: NeuralRBMLE-PC regret 8}) holds by completing the square with respect to $\left\Vert \bg(\bx^*_{t};\widehat{\btheta}_t)\right\Vert_{\bZ^{-1}_t}$. Then, combining (\ref{eq: NeuralRBMLE-PC regret 4}) and (\ref{eq: NeuralRBMLE-PC regret 9}), we obtain 
\begin{align}
    h(\bx^*_t) - h(\bx_{t}) \leq \langle\,\bg(\bx_{t};\widehat{\btheta}_t), \widehat{\btheta}_t - \btheta_0\rangle  + \frac{\alpha(t)}{m}\left\Vert \bg(\bx_{t};\widehat{\btheta}_t) \right\Vert^2_{\bZ^{-1}_t} + \frac{\gamma(t)^2}{4\alpha(t)} - \langle\,\bg(\bx_{t};\widehat{\btheta}_t),\btheta^* - \btheta_0\rangle + D_1 + D_2. \label{eq: NeuralRBMLE-PC regret 9.5}
\end{align}
Moreover, we have
\begin{align}
    &\langle\,\bg(\bx_{t};\widehat{\btheta}_t), \widehat{\btheta}_t - \btheta_0\rangle  - \langle\,\bg(\bx_{t};\widehat{\btheta}_t),\btheta^* - \btheta_0\rangle \label{eq: NeuralRBMLE-PC regret 10}\\
    \leq &\max_{\btheta:\lVert \btheta - \widehat{\btheta}_t \rVert_{\bZ_t} \leq \frac{\gamma(t)}{\sqrt{m}} }\left\{\langle\,\bg(\bx_{t};\widehat{\btheta}_t), \btheta - \btheta_0\rangle  - \langle\,\bg(\bx_{t};\widehat{\btheta}_t),\btheta^* - \btheta_0\rangle\right\} \label{eq: NeuralRBMLE-PC regret 11}\\
    =&\max_{\btheta:\lVert \btheta - \widehat{\btheta}_t \rVert_{\bZ_t} \leq \frac{\gamma(t)}{\sqrt{m}} }\left\{\langle\,\bg(\bx_{t};\widehat{\btheta}_t), \btheta - \widehat{\btheta}_t\rangle  - \langle\,\bg(\bx_{t};\widehat{\btheta}_t),\btheta^* - \widehat{\btheta}_t\rangle \right\} \label{eq: NeuralRBMLE-PC regret 12}\\
    \leq &\max_{\btheta:\lVert \btheta - \widehat{\btheta}_t \rVert_{\bZ_t} \leq \frac{\gamma(t)}{\sqrt{m}} }\left\{\left\Vert\bg(\bx_{t};\widehat{\btheta}_t)\right\Vert_{\bZ^{-1}_t} \cdot\left\Vert\btheta - \widehat{\btheta}_t \right\Vert_{\bZ_t}  + \left\Vert \bg(\bx_{t};\widehat{\btheta}_t)\right\Vert_{\bZ^{-1}_t}  \cdot\left\Vert\btheta^* - \widehat{\btheta}_t\right\Vert_{\bZ_t}\right\}  \label{eq: NeuralRBMLE-PC regret 13}\\
    \leq &\; \frac{2\gamma(t)}{\sqrt{m}} \left\Vert \bg(\bx_{t};\widehat{\btheta}_t) \right\Vert_{\bZ^{-1}_t},\label{eq: NeuralRBMLE-PC regret 14}
\end{align}
where (\ref{eq: NeuralRBMLE-PC regret 13}) holds by the Cauchy-Schwarz inequality, and (\ref{eq: NeuralRBMLE-PC regret 14}) follows directly from the fact that $\lVert \btheta - \widehat{\btheta}_t \rVert_{\bZ_t} \leq \frac{\gamma(t)}{\sqrt{m}}$ and Lemma \ref{lemma:theta of ucb}. Therefore, plugging (\ref{eq: NeuralRBMLE-PC regret 14}) into (\ref{eq: NeuralRBMLE-PC regret 9.5}), with probability at least $1-\delta$, we have
\begin{align}
        h(\bx^*_t) - h(\bx_{t})&\leq \frac{2\gamma(t)}{\sqrt{m}} \left\Vert \bg(\bx_{t};\widehat{\btheta}_t) \right\Vert_{\bZ^{-1}_t} + \frac{\alpha(t)}{m}\left\Vert \bg(\bx_{t};\widehat{\btheta}_t) \right\Vert^2_{\bZ^{-1}_t}+ \frac{\gamma(t)^2}{4\alpha(t)}+D_1+D_2.  \label{eq: NeuralRBMLE-PC regret 15}
\end{align}
Notice that we cannot directly minimize (\ref{eq: NeuralRBMLE-PC regret 15}) by the inequality of arithmetic and geometric means to obtain the reward-bias term because $\{\Bar{\btheta}_{t,a}\}_{t\in[T],a\in[K]}$ need to satisfy the constrains of Lemma \ref{appendix_lemma_1}-\ref{lemma:gradient of f}. According to Lemma \ref{lemma:range of theta_bar} and the choice of $\alpha(t) = \sqrt{t}$, we can further bound the summation of the immediate regret over time step $t$ as follows:
\begin{align}
    &\sum_{t=1}^{T} h(\bx^*_t) - h(\bx_{t}) \nonumber\\
    \leq & \; 2\gamma(T) \sum_{t=1}^{T}\left\Vert \frac{\bg(\bx_{t};\widehat{\btheta}_t)}{\sqrt{m}} \right\Vert_{\bZ^{-1}_t} + \sqrt{T}\sum_{t=1}^{T}\left\Vert \frac{\bg(\bx_{t};\widehat{\btheta}_t)}{\sqrt{m}} \right\Vert^2_{\bZ^{-1}_t}+ \sum_{t=1}^{T}\frac{\gamma(t)^2}{4\sqrt{t}}+T(D_1+D_2). \label{eq: NeuralRBMLE-PC regret 16} \\
     \leq & \; 2\gamma(T) \sum_{t=1}^{T}\min\left\{\left\Vert \frac{\bg(\bx_{t};\widehat{\btheta}_t)}{\sqrt{m}} \right\Vert_{\bZ^{-1}_t},1\right\} + \sqrt{T}\sum_{t=1}^{T}\min\left\{\left\Vert \frac{\bg(\bx_{t};\widehat{\btheta}_t)}{\sqrt{m}} \right\Vert^2_{\bZ^{-1}_t},1\right\} + \sum_{t=1}^{T}\frac{\gamma(t)^2}{4\sqrt{t}}+T(D_1+D_2), \label{eq: NeuralRBMLE-PC regret 17}
\end{align}
where (\ref{eq: NeuralRBMLE-PC regret 17}) holds by the fact that $h(\bx^{*}_{t}) - h(\bx_t)$ is upper bounded by $1$.
For $2\gamma(T) \sum_{t=1}^{T}\min\left\{\left\Vert \frac{\bg(\bx_{t};\widehat{\btheta}_t)}{\sqrt{m}} \right\Vert_{\bZ^{-1}_t},1\right\}$, we have
\begin{align}
    2\gamma(T) \sum_{t=1}^{T}\min\left\{\left\Vert \frac{\bg(\bx_{t};\widehat{\btheta}_t)}{\sqrt{m}} \right\Vert_{\bZ^{-1}_t},1\right\} \leq &\; 2\gamma(T) \sqrt{T\cdot \sum_{t=1}^{T}\min\left\{\left\Vert \frac{\bg(\bx_{t};\widehat{\btheta}_t)}{\sqrt{m}} \right\Vert^2_{\bZ^{-1}_t},1\right\}} \label{eq: NeuralRBMLE-PC regret 18}\\
    \leq &\; 2\sqrt{T}\gamma(T)\sqrt{2 \widetilde{d}\log(1+TK/\lambda) + 2 +E_{\ref{lemma:Lemma 11 of Abbasi.2011},3}m^{-\frac{1}{6}}\sqrt{\log{m}}L^{4}T^{\frac{5}{3}}\lambda^{-\frac{1}{6}}}.\label{eq: NeuralRBMLE-PC regret 19}
\end{align}
where (\ref{eq: NeuralRBMLE-PC regret 18}) holds by Cauchy–Schwarz inequality, and (\ref{eq: NeuralRBMLE-PC regret 19}) holds by Lemma \ref{lemma:Lemma 11 of Abbasi.2011}. For $\sqrt{T}\sum_{t=1}^{T}\min\left\{\left\Vert \frac{\bg(\bx_{t};\widehat{\btheta}_t)}{\sqrt{m}} \right\Vert^2_{\bZ^{-1}_t},1\right\}$, we have 
\begin{align}
    \sqrt{T}\sum_{t=1}^{T}\min\left\{\left\Vert \frac{\bg(\bx_{t};\widehat{\btheta}_t)}{\sqrt{m}} \right\Vert^2_{\bZ^{-1}_t},1\right\} \leq  \; \sqrt{T}\left(2 \widetilde{d}\log(1+TK/\lambda) + 2 +E_{\ref{lemma:Lemma 11 of Abbasi.2011},3}m^{-\frac{1}{6}}\sqrt{\log{m}}L^{4}T^{\frac{5}{3}}\lambda^{-\frac{1}{6}}\right), \label{eq: NeuralRBMLE-PC regret 21} 
\end{align}
where (\ref{eq: NeuralRBMLE-PC regret 21}) holds by Lemma \ref{lemma:Lemma 11 of Abbasi.2011}. For $\sum_{t=1}^{T}\frac{\gamma(t)^2}{4\sqrt{t}}$, we have
\begin{align}
    \sum_{t=1}^{T}\frac{\gamma(t)^2}{4\sqrt{t}} \leq \sqrt{T}\gamma(T)^2. \label{eq: NeuralRBMLE-PC regret 22} 
\end{align}
Notice that when $m \geq T^{10}\lambda^{-7}L^{24}(\log{m})^3$, we have $\gamma(T) = \mathcal{O}\left(\sqrt{\widetilde{d}\log(1+TK/\lambda)}\right)$. For $T(D_1 + D_2)$, we have 
\begin{align}
    T(D_1+D_2) = & \; T \left( E_{\text{PC},1}t^{\frac{2}{3}}m^{-\frac{1}{6}}\lambda^{-\frac{2}{3}}L^\frac{7}{2}\sqrt{\log{m}} + 2Sm^{-\frac{1}{6}}\sqrt{\log{m}}t^{\frac{1}{6}}\lambda^{-\frac{1}{6}}L^{\frac{7}{2}}\right) \\
    \leq & \; E_{\text{PC},1}T^{\frac{5}{3}}m^{-\frac{1}{6}}\lambda^{-\frac{2}{3}}L^\frac{7}{2}\sqrt{\log{m}} + 2Sm^{-\frac{1}{6}}\sqrt{\log{m}}T^{\frac{7}{6}}\lambda^{-\frac{1}{6}}L^{\frac{7}{2}}
\end{align}
Similarly, $T(D_1+D_2)$ will not affect the regret bound under the choice of $m$. Then, substituting (\ref{eq: NeuralRBMLE-PC regret 19}), (\ref{eq: NeuralRBMLE-PC regret 21}) and (\ref{eq: NeuralRBMLE-PC regret 22}) into (\ref{eq: NeuralRBMLE-PC regret 17}), we have $\sum_{t=1}^{T} h(\bx^*_t) - h(\bx_{t}) = \mathcal{O}(\widetilde{d}\sqrt{T}\log{T})$. 
%----- Justification of NeuralRBMLE with a correction term
\section{Index Derivations of the NeuralRBMLE Algorithms}
\label{appendix:E}
\subsection{Equivalence Between NeuralRBMLE in (\ref{eq:original theta_t of NeuralRBMLE}) and the Index Strategy in (\ref{def:RBMLE arm-specific})-(\ref{eq:index})}
\label{appendix:justification}
Recall (\ref{eq:original theta_t of NeuralRBMLE}) that $\btheta^{\dagger}_t$ denotes a maximizer of the following problem: 
\begin{equation}
     \max_{\btheta}\Big\{ \ell^{\dagger}(\mathcal{F}_t;\btheta)+\alpha(t)\cdot \max_{1 \leq a \leq K} f(\bx_{t,a};\btheta)-\frac{m\lambda}{2}{\left\Vert \btheta - \btheta_0 \right\Vert}^2_2 \Big\}. \label{eq:tilde Theta t}
\end{equation}
Define
\begin{align}
    %\bar{\mathcal{A}}'_t&:=\argmax_{1\leq a\leq K} \Big\{ \ell(\mathcal{F}_t;\bar{\btheta}_{t,a})+\alpha(t)\cdot \bar{\btheta}_{t,a}^{\intercal}x_{t,a} -\frac{\lambda}{2}{\left\Vert \bar{\btheta}_{t,a} \right\Vert}^2_2\Big\}.\label{eq:tilde A_t'}\\
    \bar{\mathcal{A}}_t&:=\argmax_{a}\hspace{2pt}f(\bx_{t,a};\btheta^{\dagger}_t),\label{eq:tilde A_t}\\
    \btheta^{\dagger}_{t,a}&:=\argmax_{\btheta} \Big\{ \ell^{\dagger}(\mathcal{F}_t;\btheta)+\alpha(t)\cdot f(\bx_{t,a};\btheta)-\frac{\lambda}{2}{\left\Vert \btheta -\btheta_0 \right\Vert}^2_2 \Big\}.\label{eq:tilde Theta t,a}
\end{align}
For each arm $a$, consider an estimator $\btheta^{\dagger}_{t,a}\in \Theta^{\dagger}_{t,a}$. Subsequently, define an index set
\begin{equation}
    \bar{\mathcal{A}}'_t:=\argmax_{1\leq a\leq K} \Big\{ \ell^{\dagger}(\mathcal{F}_t;\btheta^{\dagger}_{t,a})+\alpha(t)\cdot f(\bx_{t,a};\btheta^{\dagger}_{t,a}) -\frac{\lambda}{2}{\left\Vert\btheta^{\dagger}_{t,a} - \btheta_0 \right\Vert}^2_2\Big\}.\label{eq:tilde A_t'}
\end{equation}
\begin{lemma}\label{lemma:Index Derivations of NeuralRBMLE} For all $t \in [T]$, we have 
    \begin{align}
        \bar{\mathcal{A}}_t = \bar{\mathcal{A}}'_t. \label{eq:lemma:Index Derivations of NeuralRBMLE eq-1}
    \end{align}
\begin{proof}
Substituting $f(\bx_{t,a};\btheta^{\dagger}_t)$ for $\bar{\theta}_{t}^{\intercal}x_{t,a}$ and $f(\bx_{t,a};\btheta^{\dagger}_{t,a})$ for $\bar{\theta}_{t,a}^{\intercal}x_{t,a}$, we can obtain (\ref{eq:lemma:Index Derivations of NeuralRBMLE eq-1}) by reusing the same analysis as that of Theorem 3 in \cite{hung2020reward}, 
\end{proof}
\end{lemma}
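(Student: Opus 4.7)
The claim is a classical max-swap identity adapted to the RBMLE setting, so my plan is to rewrite both sides as $\argmax$ operators acting on a common joint objective over the product space of parameters and arms, and then interchange the order of the two maximizations. To this end I would introduce the joint RBMLE objective
\begin{equation}
G(\btheta, a) := \ell^{\dagger}(\mathcal{F}_t;\btheta) - \tfrac{m\lambda}{2}\left\Vert \btheta - \btheta_0 \right\Vert_2^2 + \alpha(t) f(\bx_{t,a};\btheta).
\end{equation}
With this notation, (\ref{eq:tilde Theta t}) computes $M := \max_{\btheta} \max_{a} G(\btheta,a)$ with $\btheta^{\dagger}_t$ being an outer argmax and $\bar{\mathcal{A}}_t$ the inner argmax of $G(\btheta^{\dagger}_t,\cdot)$, while (\ref{eq:tilde Theta t,a})--(\ref{eq:tilde A_t'}) compute the same quantity in the opposite order, $\max_{a} \max_{\btheta} G(\btheta,a) = \max_{a} G(\btheta^{\dagger}_{t,a}, a)$, with $\bar{\mathcal{A}}'_t$ the outer argmax. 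The equality of the two scalar maxima is immediate from $\max_{\btheta}\max_{a} = \max_{a}\max_{\btheta}$.

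The next step is to lift this scalar identity to an equality of the two argmax sets by a short chain of inequalities. For the inclusion $\bar{\mathcal{A}}_t \subseteq \bar{\mathcal{A}}'_t$, take any $a \in \bar{\mathcal{A}}_t$. By definition, $G(\btheta^{\dagger}_t, a) = \max_{a'} G(\btheta^{\dagger}_t, a') = M$, while by optimality of $\btheta^{\dagger}_{t,a}$ in $G(\cdot, a)$ we have $G(\btheta^{\dagger}_{t,a}, a) \geq G(\btheta^{\dagger}_t, a) = M$; combined with the universal upper bound $G(\btheta^{\dagger}_{t,a}, a) \leq M$, we conclude $G(\btheta^{\dagger}_{t,a}, a) = M = \max_{a'} G(\btheta^{\dagger}_{t,a'}, a')$, so $a \in \bar{\mathcal{A}}'_t$. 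The reverse inclusion $\bar{\mathcal{A}}'_t \subseteq \bar{\mathcal{A}}_t$ runs symmetrically: for $a \in \bar{\mathcal{A}}'_t$ one has $G(\btheta^{\dagger}_{t,a}, a) = M$, so $\btheta^{\dagger}_{t,a}$ is itself a legitimate realization of the outer maximizer appearing in (\ref{eq:tilde Theta t}), and under this realization $a$ lies in the inner argmax.

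The main obstacle is not algebraic but a matter of interpretation of the argmax sets: because the neural network makes $G(\btheta,a)$ non-concave in $\btheta$, neither $\btheta^{\dagger}_t$ nor the arm-specific $\btheta^{\dagger}_{t,a}$ need be unique, so the second inclusion is cleanest when $\btheta^{\dagger}_t$ is allowed to range over the full outer argmax (equivalently, the lemma is read as equivalence of the induced policy classes rather than pointwise-chosen singletons). Once this convention is adopted, the proof is essentially the two-line max-swap sketched above, which is exactly the argument of Theorem 3 of \cite{hung2020reward}; the only modification is that the linear form $\btheta^{\top}\bx_{t,a}$ used there is replaced by $f(\bx_{t,a};\btheta)$, and since the swap never uses concavity or any structural property of the objective in $\btheta$, the substitution goes through unchanged.
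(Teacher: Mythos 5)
Your proposal is correct and is essentially the same argument the paper relies on: the paper's proof is a one-line deferral to Theorem 3 of \cite{hung2020reward}, whose content is precisely the interchange $\max_{\btheta}\max_{a}=\max_{a}\max_{\btheta}$ applied to the joint reward-biased objective, with $f(\bx_{t,a};\btheta)$ substituted for the linear form. Your write-up is in fact more careful than the paper's, since you make explicit both set inclusions and the convention needed to handle non-unique maximizers of the non-concave objective, neither of which the swap argument actually requires beyond existence of the maxima.
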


%-----
\subsection{Derivation of the Surrogate Index of NeuralRBMLE-PC}
\label{appendix:index of NeuralRBMLE-PC}
Recall that $\widehat{\btheta}_t$ is the least-squares estimate.
\begin{lemma}
\label{lemma:Surrogate Index of NeuralRBMLE-PC}
For all $a\in [K]$, all $t\in[T]$, any $\btheta$ that satisfies $\lVert \btheta - \btheta_0 \rVert_2 \leq \tau$, and $m$ that satisfies \begin{align}
    m \geq 2(\max\{E_{\ref{appendix_lemma_1}},E_{\ref{appendix_lemma_2}}\})^6T^{16}L^{21}\lambda^4(\log{m})^3, \label{eq:Surrogate Index of NeuralRBMLE-PC eq-1}
\end{align}
we have
\begin{align}
    f(\bx_{t,a};\btheta) - \langle \bg(\bx_{t,a};\widehat{\btheta}_t),\btheta - \btheta_0\rangle \leq \frac{1}{T^2}.\label{eq:Surrogate Index of NeuralRBMLE-PC eq-2}
\end{align}
\begin{proof}
By Lemma \ref{appendix_lemma_1}, we have
\begin{align}
    f(\bx_{t,a};\btheta) - \langle \bg(\bx_{t,a};\widehat{\btheta}_t),\btheta - \btheta_0\rangle & \leq \langle \bg(\bx_{t,a};\btheta_0),\btheta - \btheta_0\rangle - \langle \bg(\bx_{t,a};\widehat{\btheta}_t),\btheta - \btheta_0\rangle + E_{\ref{appendix_lemma_1}}\tau^{\frac{4}{3}}L^3\sqrt{m\log(m)} \label{eq:Surrogate Index of NeuralRBMLE-PC eq-3}\\
    & \leq E_{\ref{appendix_lemma_2}}\tau^\frac{4}{3}L^{\frac{7}{2}}\sqrt{m\log m} + E_{\ref{appendix_lemma_1}}\tau^{\frac{4}{3}}L^3\sqrt{m\log{m}}.\label{eq:Surrogate Index of NeuralRBMLE-PC eq-4} \\
    & \leq \frac{1}{T^2}\label{eq:Surrogate Index of NeuralRBMLE-PC eq-5},
\end{align}
where (\ref{eq:Surrogate Index of NeuralRBMLE-PC eq-4}) holds by Lemma \ref{appendix_lemma_2} and Lemma \ref{lemma:gradient of f}, and (\ref{eq:Surrogate Index of NeuralRBMLE-PC eq-5}) holds by (\ref{eq:Surrogate Index of NeuralRBMLE-PC eq-1}). Then, we complete the proof.
\end{proof}
\end{lemma}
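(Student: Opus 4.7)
The plan is to bound the discrepancy by inserting the initialization gradient $\bg(\bx_{t,a};\btheta_0)$ as an intermediate anchor and splitting into an NTK linearization error and a gradient-mismatch error. Concretely, I would write
\begin{align*}
f(\bx_{t,a};\btheta) - \langle \bg(\bx_{t,a};\widehat{\btheta}_t),\btheta - \btheta_0\rangle
&= \underbrace{\bigl[f(\bx_{t,a};\btheta) - \langle \bg(\bx_{t,a};\btheta_0),\btheta - \btheta_0\rangle\bigr]}_{(\mathrm{I})}\\
&\quad + \underbrace{\langle \bg(\bx_{t,a};\btheta_0) - \bg(\bx_{t,a};\widehat{\btheta}_t),\btheta - \btheta_0\rangle}_{(\mathrm{II})}.
\end{align*}
Part $(\mathrm{I})$ is precisely the linearization error that Lemma \ref{appendix_lemma_1} controls: under the radius hypothesis $\|\btheta - \btheta_0\|_2 \leq \tau$ and the stated lower bound on $m$, it gives $|(\mathrm{I})| \leq E_{\ref{appendix_lemma_1}}\tau^{4/3}L^{3}\sqrt{m\log m}$.

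For $(\mathrm{II})$, I would apply Cauchy-Schwarz and then bound the gradient perturbation via Lemma \ref{appendix_lemma_2}, which yields $\|\bg(\bx_{t,a};\widehat{\btheta}_t) - \bg(\bx_{t,a};\btheta_0)\|_2 \leq E_{\ref{appendix_lemma_2}}\sqrt{\log m}\,\tau^{1/3}L^{3}\|\bg(\bx_{t,a};\btheta_0)\|_2$, provided $\widehat{\btheta}_t$ also lies within the radius $\tau$ of $\btheta_0$. That additional fact is supplied by Lemma \ref{lemma:theta of ucb}, which gives $\|\widehat{\btheta}_t - \btheta_0\|_2 \leq 2\sqrt{t/(m\lambda)}$; as long as $\tau$ is taken at least this large, both points are admissible for Lemma \ref{appendix_lemma_2}. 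Using Lemma \ref{lemma:gradient of f} to bound $\|\bg(\bx_{t,a};\btheta_0)\|_2 \leq E_{\ref{lemma:gradient of f}}\sqrt{mL}$ and combining with $\|\btheta - \btheta_0\|_2 \leq \tau$, I get $|(\mathrm{II})| \lesssim \tau^{4/3}L^{7/2}\sqrt{m\log m}$, with the absorbed constant being $E_{\ref{appendix_lemma_2}} E_{\ref{lemma:gradient of f}}$.

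Summing the two bounds gives a total error of order $\max\{E_{\ref{appendix_lemma_1}},E_{\ref{appendix_lemma_2}}\}\,\tau^{4/3}L^{7/2}\sqrt{m\log m}$. Substituting the working-regime scale $\tau = \Theta(\sqrt{T/(m\lambda)})$ and requiring this to be at most $1/T^{2}$ is exactly the inequality $m \gtrsim T^{16}L^{21}\lambda^{-4}(\log m)^{3}$ that appears in the hypothesis (modulo the sign on $\lambda$ in the exponent). I expect the main obstacle to be bookkeeping rather than conceptual: one must simultaneously certify that $\tau$ meets the two-sided radius preconditions of Lemmas \ref{appendix_lemma_1}, \ref{appendix_lemma_2}, and \ref{lemma:gradient of f} for both $\btheta$ and $\widehat{\btheta}_t$, and then check that the product of the hidden constants lines up so that the polynomial condition on $m$ is strong enough to drive both $(\mathrm{I})$ and $(\mathrm{II})$ below $\tfrac{1}{2T^{2}}$ each, giving the claimed $1/T^{2}$ bound.
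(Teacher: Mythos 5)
Your proposal is correct and follows essentially the same route as the paper's proof: the identical decomposition into the NTK linearization error (handled by Lemma \ref{appendix_lemma_1}) plus the gradient-mismatch term (handled by Cauchy--Schwarz with Lemmas \ref{appendix_lemma_2} and \ref{lemma:gradient of f}), then absorbing both into the condition on $m$. Your extra care in checking that $\widehat{\btheta}_t$ satisfies the radius precondition via Lemma \ref{lemma:theta of ucb}, and your observation about the sign of the $\lambda$ exponent in the stated condition on $m$, are both valid refinements of what the paper leaves implicit.
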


Lemma \ref{lemma:Surrogate Index of NeuralRBMLE-PC} shows that we can approximate the forward value of the neural network: $f(\bx;\btheta)$ by the inner product of $\btheta-\btheta_0$ and $\bg(\bx;\widehat{\btheta}_t)$, which is the gradient in the $\hat{\theta_t}$-induced NTK regime. Then, we take the following steps to approximate $\mathcal{I}^{\dagger}_{t,a}$, the true NeuralRBMLE index, by the surrogate index of NeuralRBMLE-PC. Define
\begin{align}
    \widehat{\btheta}'_{t}  := \btheta_0 + \bigg( \frac{1}{m} \sum_{s=1}^{t-1}\bg(\bx_s;\widehat{\btheta}_t)\bg(\bx_s;\widehat{\btheta}_t)^{\intercal} +\lambda\bI\bigg)^{-1}\cdot\bigg(\frac{1}{m}\sum_{s=1}^{t-1}r_s\bg(\bx_s;\widehat{\btheta}_t)\bigg) \label{eq:Surrogate Index of NeuralRBMLE-PC-2 eq-0}
\end{align}
as the approximation of $\widehat{\btheta}_t$ in the $\widehat{\btheta}_{t}$-induced NTK regime. Recall that the true index of NeuralRBMLE is
\begin{align}
    \mathcal{I}^{\dagger}_{t,a} = \ell_{\lambda}^{\dagger}(\mathcal{F}_t;\btheta_{t,a}^{\dagger}) + \alpha(t)f(\bx_{t,a};\btheta_{t,a}^{\dagger}). \label{eq:Surrogate Index of NeuralRBMLE-PC-2 eq-1}
\end{align}
By Lemma \ref{lemma:Surrogate Index of NeuralRBMLE-PC}, we can use $\langle \bg(\bx;\widehat{\btheta}_t), \btheta - \btheta_0 \rangle$ to replace $f(\bx;\btheta)$ in $\mathcal{I}^{\dagger}_{t,a}$. Then, we can obtain a surrogate of $\Bar{\btheta}_{t,a}$ defined in (\ref{def:theta_bar}) as
\begin{align}
    \Bar{\btheta}'_{t,a} = \argmax_{\btheta}\underbrace{\left\{-\frac{1}{2}\sum_{s=1}^{t-1}\Big( \langle \bg(\bx_s;\widehat{\btheta}_t), \btheta - \btheta_0 \rangle -r_s \Big)^2 - \frac{m\lambda}{2}\left\Vert \btheta - \btheta_0 \right\Vert_2^2 + \alpha(t)\langle \bg(\bx_{t,a};\widehat{\btheta}_t), \btheta - \btheta_0 \rangle \right\}}_{:= \widehat{\mathcal{I}}_{t,a}(\btheta)}. \label{eq:Surrogate Index of NeuralRBMLE-PC-2 eq-2}
\end{align}
By the first-order
necessary optimality condition of $\Bar{\btheta}'_{t,a}$, we have 
\begin{align}
    \Bar{\btheta}'_{t,a}  & = \btheta_0 + \bigg( \frac{1}{m} \sum_{s=1}^{t-1}\bg(\bx_s;\widehat{\btheta}_t)\bg(\bx_s;\widehat{\btheta}_t)^{\intercal} +\lambda\bI\bigg)^{-1}\cdot \bigg(\frac{1}{m}\sum_{s=1}^{t-1}r_s\bg(\bx_s;\widehat{\btheta}_t) + \frac{\alpha(t)}{m}\bg(\bx_{t,a};\widehat{\btheta}_t)\bigg)  \label{eq:Surrogate Index of NeuralRBMLE-PC-2 eq-3} \\
    & = \widehat{\btheta}'_t + \frac{\alpha(t)}{m}\bZ^{-1}_{t-1}\bg(\bx_{t,a};\widehat{\btheta}_t) \label{eq:Surrogate Index of NeuralRBMLE-PC-2 eq-4} \\
    & \approx \widehat{\btheta}_t + \frac{\alpha(t)}{m}\bZ^{-1}_{t-1}\bg(\bx_{t,a};\widehat{\btheta}_t) \label{eq:Surrogate Index of NeuralRBMLE-PC-2 eq-5} \\
    & = \bar{\btheta}_{t,a}, \label{eq:Surrogate Index of NeuralRBMLE-PC-2 eq-6}
\end{align}
where (\ref{eq:Surrogate Index of NeuralRBMLE-PC-2 eq-4}) holds by the definition of $\widehat{\btheta}'_t$ in (\ref{eq:Surrogate Index of NeuralRBMLE-PC-2 eq-0}), (\ref{eq:Surrogate Index of NeuralRBMLE-PC-2 eq-5}) holds by $\widehat{\btheta}'_t \approx \widehat{\btheta}_t$ in $\widehat{\btheta}_t$-induced NTK regime, and (\ref{eq:Surrogate Index of NeuralRBMLE-PC-2 eq-6}) holds due to (\ref{def:theta_bar}). Then, we have obtained the surrogate arm-specific 
RBMLE estimators $\Bar{\btheta}_{t,a}$ for Algorithm \ref{alg:NeuralRBMLE-PC}.

\section{Detailed Configuration of Experiments}
\label{appendix:hyp}

To construct bandit problems from these datasets, we follow the same procedure as in \cite{zhou2020neural,zhang2020neural} by converting classification problems to $K$-armed bandit problems with general reward functions.
Specifically, we first convert each input feature $\bx \in \mathbb{R}^{d'}$ into $K$ different context vectors, where $\bx_{t,i} = (\mathbf{0}^{d'i} ,\bx,\mathbf{0}^{d'(K-i-1)}) \in \mathbb{R}^{d'K} \equiv \mathbb{R}^d$ for $i \in [K]$. 
The learner receives a unit reward if the context is classified correctly, and receives zero reward otherwise. 
%Note that the bandit problem and the dataset we use are the same as that in \cite{zhou2019neural} and \cite{zhang2020neural}. 
The benchmark methods considered in our experiments include the two state-of-the-art neural contextual bandit algorithms, namely NeuralTS \cite{zhang2020neural} and NeuralUCB \cite{zhou2020neural}.
%which has been shown to have the best regret performance in these two years. 
Furthermore, we also provide the results of multiple benchmark methods that enforce exploration directly in the parameter space, including BootstrappedNN \citep{osband2016deep}, the neural variant of random exploration through perturbation (DeepFPL) in \citep{kveton2020randomized}, and RBMLE for linear bandits (LinRBMLE) in \citep{hung2020reward}.
%the neural variant of simple random search (NeuralRS) in \cite{mania2018simple}, BootstrappedNN \cite{osband2016deep}, random exploration for linear bandit (LinearFPL) in \cite{kveton2020randomized}, and kernelized Thompson sampling (KernelTS) in \cite{chowdhury2017kernelized}. 
For all the neural-network-based algorithms, we use the neural network with $1$ hidden layer and width $m = 100$, set $J = 100$ as the maximum number of steps for gradient decent, choose learning rate $\eta = 0.001$, and update the parameters at each time step.
Each trial continues for $T=15000$ steps except that we set $T = 8000$ in the experiments of Mushroom as the number of data samples in Mushroom is $8124$.
For $\bZ_{t}$ in NeuralRBMLE-PC, NeuralTS, NeuralUCB, we use the inverse of the diagonal elements of $\bZ_t$ to be the surrogate of $\bZ_t^{-1}$ to speed up the experiments. 
Notice that this procedure is also adopted by \citep{zhou2020neural,zhang2020neural}.
%\textbf{Configuration of hyperparamters.} 
To ensure a fair comparison among the algorithms, the hyper-parameters of each algorithm are tuned as follows: 
For $\nu$ relative to exploration ratio, NeuralTS, NeuralUCB, NeuralRBMLE, DeepFPL and LinRBMLE, we do grid search on $\{1, 10^{-1},10^{-3},10^{-5}\}$. 
For BootstrappedNN, we refer to \cite{zhou2020neural} and then set the transition probability as $0.8$, and we use $10$ neural networks to estimate the unknown reward function.
%For NeuralRS, we set the number of exploration directions as $10$, which is one of choices in \cite{vemula2019contrasting}. 
%For LinFPL, we set $a = 0.5$ as the variance of noise added to the reward values based on \cite{kveton2020randomized}. 
For $\lambda$ used in ridge regression, we set $\lambda = 1$ for LinRBMLE as suggested by \cite{hung2020reward}, and set $\lambda = 0.001$ for NeuralTS, NeuralUCB, and NeuralRBMLE.
%, and we set $\lambda = 1$ for KernelTS because of singular issue.
We choose $\alpha(t) = \nu\sqrt{t}$ in both NeuralRBMLE-GA, NeuralRBMLE-PC and LinRBMLE, where $\nu$ is tuned in the same way as other benchmarks.
For the benchmark methods, we leverage the open-source implementation provided by NeuralTS \citep{zhang2020neural} available at \url{https://openreview.net/forum?id=tkAtoZkcUnm}.

{\bf Computing Resources.} All the simulation results are run on a Linux server with (i) an
Intel Xeon Gold 6136 CPU operating at a maxinum clock rate of 3.7 GHz, (ii) a total of 256 GB
memory, and (iii) an RTX 3090 GPU.
\section{Related Work on Linear Contextual Bandits}
\label{appendix:related}
%\noindent\textbf{} 
Linear stochastic contextual bandit problems have been extensively studied in the literature. 
For example, the celebrated Upper Confidence Bound (UCB) method and its variants enforce exploration through constructing confidence sets and have been applied to both linear bandits
\citep{auer2002using,dani2008stochastic,rusmevichientong2010linearly,abbasi2011improved,chu2011contextual} and generalized linear bandits \cite{filippi2010parametric,li2017provably,jun2017scalable} with provably optimal regret bounds.
Another popular line of exploration techniques is using randomized exploration.
For example, from a Bayesian perspective, Thompson sampling (TS) achieves exploration in linear bandits \cite{agrawal2013thompson,russo2016information,abeille2017linear} and generalized linear bandits \cite{dumitrascu2018pg,kveton2020randomized} by sampling the parameter from the posterior distribution.
\cite{kveton2020randomized} proposed a follow-the-perturbed-leader algorithm for generalized linear bandits to achieve efficient exploration through perturbed rewards.
Another popular randomized approach is information-directed sampling \cite{russo2018learning,kirschner2018information}, which determine the action sampling distribution by maximizing the ratio between the squared expected regret and the information gain.
The above list of works is by no means exhaustive and is only meant to provide an overview of the research progress in this domain. 
While the above prior studies offer useful insights into efficient exploration, they share the common limitation that the reward functions are required to satisfy the linear realizability assumption.
\section{Additional Experimental Results}
\label{section:additional experiments}
\begin{table*}[ht]
\caption{Mean final cumulative regret over 10 trials for Figure \ref{fig:real_regret} and Figure \ref{fig:real_regret_2}. The best of each column is highlighted. Algorithms within $1\%$ performance difference are regarded as equally competitive.}
\vspace{-3mm}
\begin{center}
\begin{tabular}{|c|c|c|c|c|c|c|}
\hline
\textbf{Mean Final Regret} & \textbf{Adult} & \textbf{Covertype} & \textbf{MagicTelescope} & \textbf{MNIST} & \textbf{Mushroom} & \textbf{Shuttle} \\ \hline
\textbf{NeuralRBMLE-GA} & \underline{\textbf{3125.2}} & \underline{\textbf{4328.1}} & \underline{\textbf{3342.9}} & \underline{\textbf{1557.8}} & \underline{\textbf{313.1}} & \underline{\textbf{335.2}} \\ \hline
\textbf{NeuralRBMLE-PC} & 3200.6 & 4824.9 & 3630.2 & 2197.7 & 580.4 & 817.7 \\ \hline
\textbf{NeuralUCB} & 3154.9 & 4697.7 & 3555.2 & 2079.5 & 583.8 & 555.0 \\ \hline
\textbf{NeuralTS} & 3179.0 & 4813.3 & 3567.1 & 2585.7 & 594.5 & 577.7 \\ \hline
\textbf{DeepFPL} & 3174.9 & 5207.3 & 3559.2 & 4473.0 & 600.1 & 1622.9 \\ \hline
\textbf{BoostrappedNN} & 3351.4 & 5265.3 & 3929.7 & 6437.1 & 805.1 & 1323.9 \\ \hline
\textbf{LinRBMLE} & \underline{\textbf{3121.9}} & 4929.8 & 3885.0 & 11851.9 & 656.0 & 3193.7 \\ \hline
\end{tabular}	
\label{table:mean regret}
\end{center}
\end{table*}

\label{appendix:exp}
\begin{table*}[ht]
\begin{center}
\caption{Standard deviation of the final cumulative regret under the real datasets in Figure \ref{fig:real_regret} and Figure \ref{fig:real_regret_2}. The best is highlighted.}
\begin{tabular}{|c|c|c|c|c|c|c|}
\hline
\textbf{Std.of Final Regret} & \textbf{Adult} & \textbf{Covertype} & \textbf{MagicTelescope} & \textbf{MNIST} & \textbf{Mushroom} & \textbf{Shuttle} \\ \hline
\textbf{NeuralRBMLE-GA} & 28.05 & \underline{\textbf{50.97}} & 32.91 & \underline{\textbf{29.80}} & \underline{\textbf{19.60}} & \underline{\textbf{11.50}} \\ \hline
\textbf{NeuralRBMLE-PC} & 115.96 & 88.91 & 137.96 & 82.28 & 57.91 & 48.37 \\ \hline
\textbf{NeuralUCB} & 42.21 & 62.67 & 97.44 & 86.48 & 40.74 & 79.00 \\ \hline
\textbf{NeuralTS} & 36.67 & 87.34 & 125.69 & 179.94 & 44.46 & 75.92 \\ \hline
\textbf{DeepFPL} & 39.58 & 536.48 & 93.77 & 1255.68 & 43.25 & 824.98 \\ \hline
\textbf{BoostrappedNN} & 133.52 & 126.23 & 55.25 & 630.02 & 64.31 & 311.51 \\ \hline
\textbf{LinRBMLE} & \underline{\textbf{24.97}} & 99.99 & \underline{\textbf{27.20}} & 279.36 & 20.23 & 71.27 \\ \hline
\end{tabular}
\label{table:std regret}
\end{center}
\end{table*}

\begin{figure*}[ht]
$\begin{array}{c c c}
    \multicolumn{1}{l}{\mbox{\bf }} & \multicolumn{1}{l}{\mbox{\bf }} & \multicolumn{1}{l}{\mbox{\bf }} \\ 
    \scalebox{0.33}{\includegraphics[width=\textwidth]{./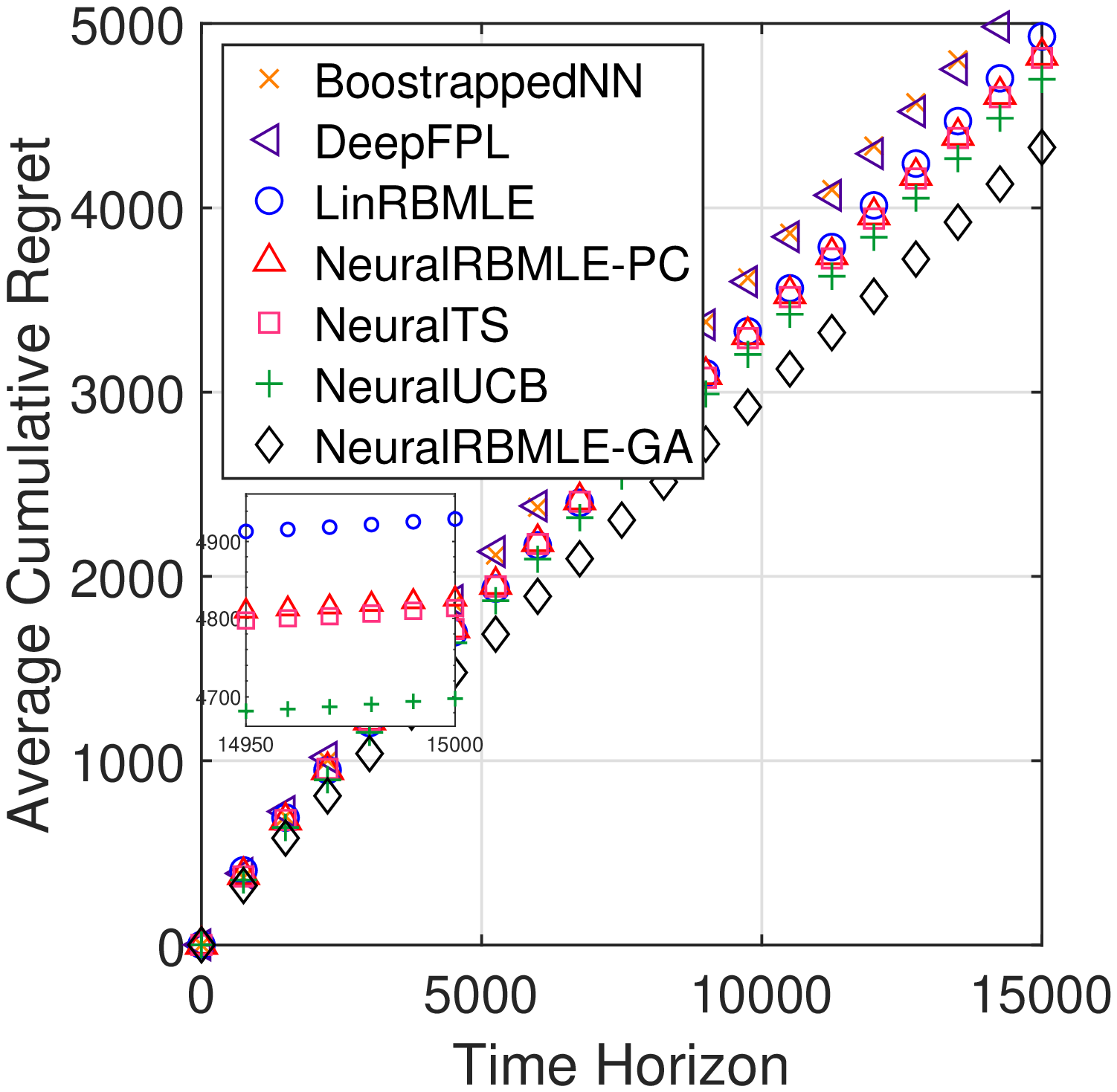}} & \hspace{-3mm} \scalebox{0.33}{\includegraphics[width=\textwidth]{./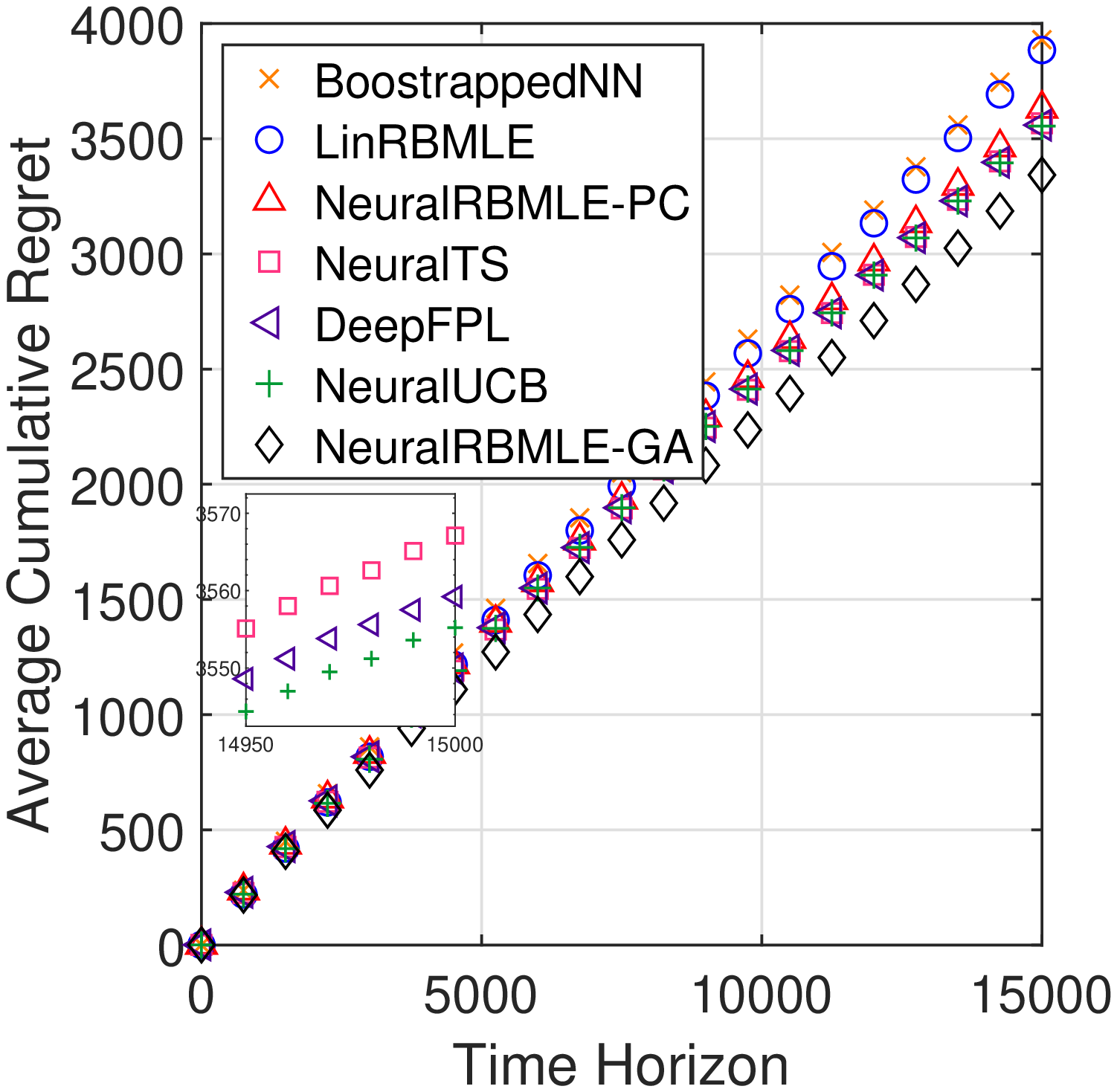}}  &  \scalebox{0.33}{\includegraphics[width=\textwidth]{./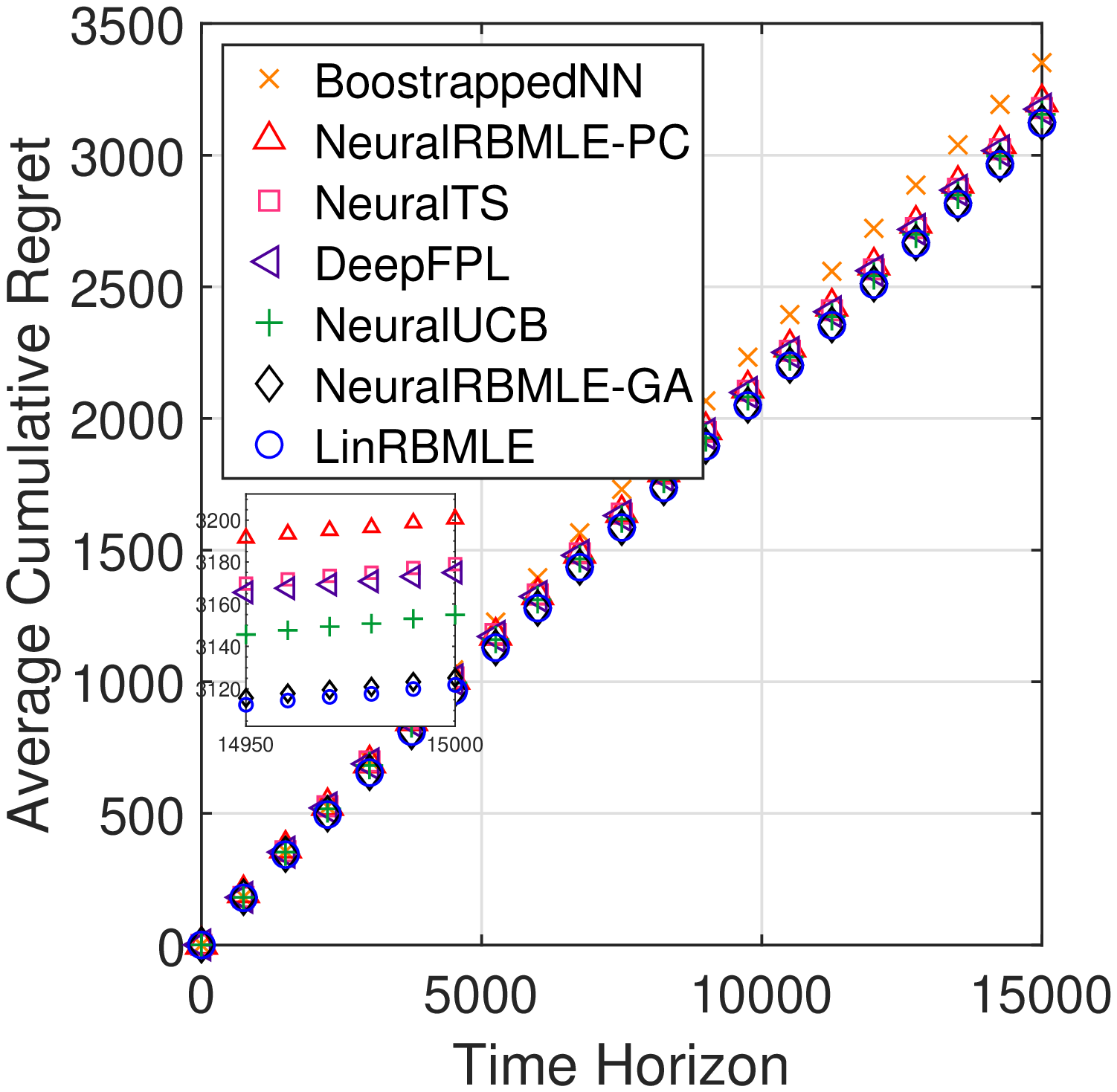}}  \\
    \mbox{(a) Covertype} &  \mbox{(b) MagicTelescope} &  \mbox{(c) Adult} 
\end{array}$
\caption{Cumulative regret averaged over $10$ trials with $T = 1.5\times 10^4$.}
\label{fig:real_regret_2}
\end{figure*}

\begin{table}[!ht]
\centering
\hungyh{
\caption{Computation time per step of NeuralRBMLE-GA, NeuralUCB and NeuralTS. We use $t_{\text{step}}$ to denote the total computation time of selecting action and traing, and $t_{\bZ}$ to denote the computation time of computing the inverse of $\bZ$.}
\begin{tabular}{cccc}
\hline
& NeuralRBMLE-GA  & NeuralUCB & NeuralTS \\ \hline
\begin{tabular}[c]{@{}l@{}} $t_{\text{step}}$ \end{tabular} &  \textbf{1.6688}s & 9.9027s   & 9.7390s   \\ \hline
$t_{\bZ}$ & N/A & 9.6974s    & 9.5406s   \\ \hline
\end{tabular}
\label{table:computation time}
}
\end{table}
\end{document}